\documentclass[conference]{IEEEtran}
\IEEEoverridecommandlockouts
\usepackage{cite}
\usepackage{amsmath,amssymb,amsfonts}
\usepackage{hyperref}
\usepackage[ruled,vlined,linesnumbered]{algorithm2e}

\usepackage{multirow} 
\usepackage{booktabs}
\usepackage{diagbox}
\usepackage{tabularx}
\newcolumntype{Y}{>{\centering\arraybackslash}X}
\usepackage{graphicx} 
\usepackage{makecell}

\usepackage{lipsum}
\usepackage{setspace}
\makeatletter

\usepackage[left=0.75in,
            right=0.75in,
            top=0.75in,
            bottom=0.75in]{geometry}

\usepackage{xcolor}
\usepackage{amsthm}
\usepackage{graphicx}
\usepackage{textcomp}
\usepackage{xcolor}
\def\BibTeX{{\rm B\kern-.05em{\sc i\kern-.025em b}\kern-.08em
    T\kern-.1667em\lower.7ex\hbox{E}\kern-.125emX}}

\newtheorem{observation}{Observation}
\newtheorem{theorem}{Theorem}

\newtheorem{lemma}{Lemma}

\usepackage{subfig}  
\usepackage{stfloats}

\usepackage[T1]{fontenc}
\usepackage{tikz}
\usepackage{ctable}


\begin{document}
\title{\vspace{0.21in} Asymptotically Optimal Lazy Lifelong Sampling-based Algorithm for Efficient Motion Planning in Dynamic Environments\\}

\author{
\IEEEauthorblockN{Lu Huang, \textit{Graduate Student Member, IEEE}, Jingwen Yu, \textit{Graduate Student Member, IEEE},}
\IEEEauthorblockN{
Jiankun Wang, \textit{Senior Member, IEEE},
Xingjian Jing, \textit{Senior Member, IEEE}}

\thanks{Lu Huang and Xingjian Jing are with Department of Mechanical Engineering, City University of Hongkong, Tat Chee Avenue, Kowloon, Hong Kong SAR.
{\tt\small (e-mail: \{lhuang98-c@my., xingjing@\}cityu.edu.hk)}}%

\thanks{Jingwen Yu and Jiankun Wang are with the Department of Electronic and Electrical Engineering, Southern University of Science and Technology, China.
{\tt\small (e-mail: jyubt@connect.ust.hk, wangjk@sustech.edu.cn)}}

\thanks{Jingwen Yu is also with the Department of Electronic and Computer Engineering, Hong Kong University of Science and Technology, Kowloon, Hong Kong SAR.}
}

\maketitle

\begin{abstract}
The paper introduces an asymptotically optimal lifelong sampling-based path planning algorithm that combines the merits of lifelong planning algorithms and lazy search algorithms for rapid replanning in dynamic environments where edge evaluation is expensive. 
By evaluating only sub-path candidates for the optimal solution, the algorithm saves considerable evaluation time and thereby reduces the overall planning cost. 
It employs a novel informed rewiring cascade to efficiently repair the search tree when the underlying search graph changes. 
Theoretical analysis indicates that the proposed algorithm converges to the optimal solution as long as sufficient planning time is given.
Planning results on robotic systems with $\mathbb{SE}(3)$ and $\mathbb{R}^7$ state spaces in challenging environments highlight the superior performance of the proposed algorithm over various state-of-the-art sampling-based planners in both static and dynamic motion planning tasks.
The experiment of planning for a Turtlebot 4 operating in a dynamic environment with several moving pedestrians further verifies the feasibility and advantages of the proposed algorithm.
\end{abstract}

\def\abstractname{Note to Practitioners}
\begin{abstract}

Fast and adaptive replanning is essential for robotic systems operating in dynamic environments with limited information and computational resources. 
While sampling-based planners are well-suited to such settings due to their ability to incrementally construct paths, their performance is often hindered by costly edge collision checking.
In this work, we propose a lifelong sampling-based planning algorithm that maintains and reuses a global roadmap over time, integrating a lazy edge evaluation strategy that defers collision checks until necessary. This approach enables efficient, continual replanning without rebuilding the search structure from scratch. Simulations and experiments in dynamic scenarios demonstrate improved planning speed and responsiveness, making the method suitable for real-time robotic applications.
\end{abstract}

\begin{IEEEkeywords}
Motion planning, lifelong planning, dynamic environments, asymptotically optimal, lazy search
\end{IEEEkeywords}

\section{Introduction}
Autonomous robots frequently operate in environments that are partially or entirely unknown. For instance, a service robot navigating a crowded shopping mall or pedestrian area, a mobile robot using sensors to move outdoors without a complete roadmap, or multiple warehouse robots collaborating without predefined trajectories. In such scenarios, robots must quickly update their solutions based on new sensory data to ensure robust and responsive autonomy.

Early research proposed that reusing the previous searches can significantly enhance planning efficiency \cite{ramalingam1996incremental, Frigioni2000fully}. 
The strategy, commonly known as lifelong planning in literature, was combined with graph-based path planning techniques, such as A$^*$ \cite{Astar}.
In particular, Lifelong Planning A$^*$ (LPA$^*$) \cite{LPA} and its variants D$^*$/D$^*$ Lite \cite{DLite} repair the same search graph that approximates the problem domain throughout the entire navigation process.
Any inconsistency resulting from edge changes in the graph is identified, and the inconsistencies are efficiently propagated to the relevant parts of the search graph to repair an A$^*$-like solution. 
It ensures efficient adaptation to the dynamic nature of the problem domain. 
However, LPA$^*$ is only resolution-optimal and -complete with respect to the particular underlying graph used.
In other words, they almost surely find sub-optimal paths with respect to the robot and environment. 
Improving completeness/optimality requires recomputing the search graph, which is infeasible in applications.

\begin{figure*}[ht]
    \centering
      \subfloat[]{\includegraphics[width = 0.225\textwidth]{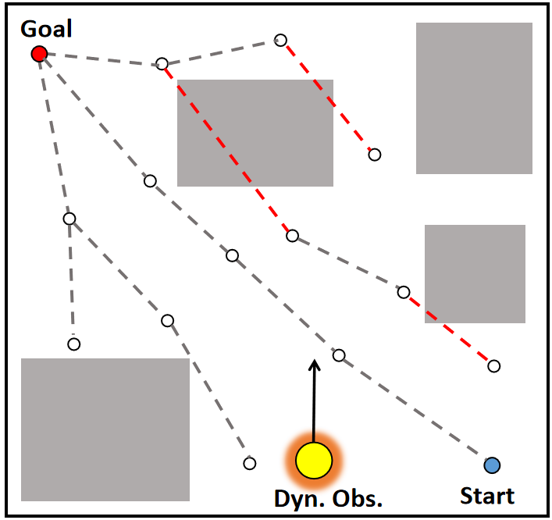}} \hfill
      \subfloat[]{\includegraphics[width = 0.225\textwidth]{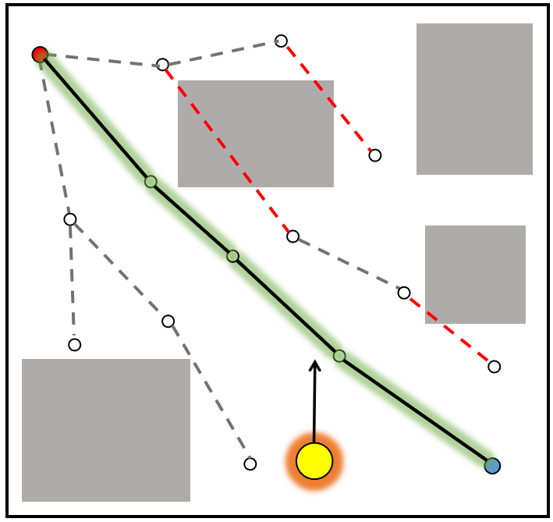}} \hfill
      \subfloat[]{\includegraphics[width = 0.225\textwidth]{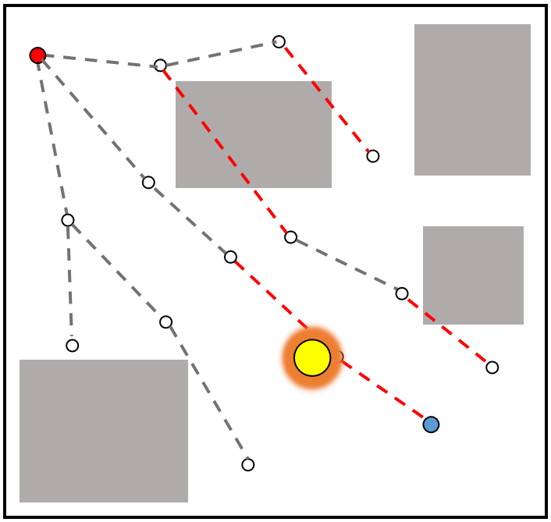}} \hfill
      \subfloat[]{\includegraphics[width = 0.225\textwidth]{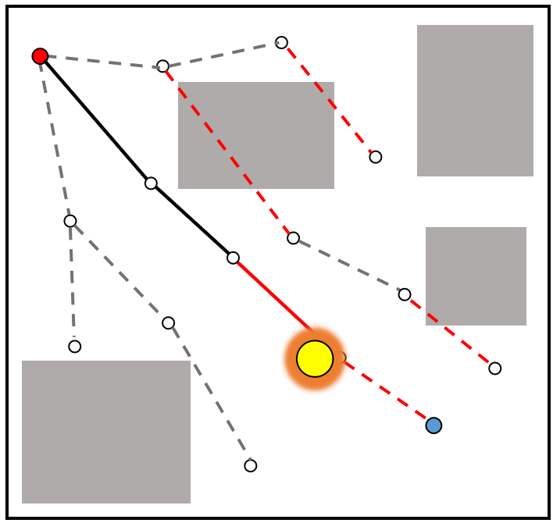}}

      \subfloat[]{\includegraphics[width = 0.225\textwidth]{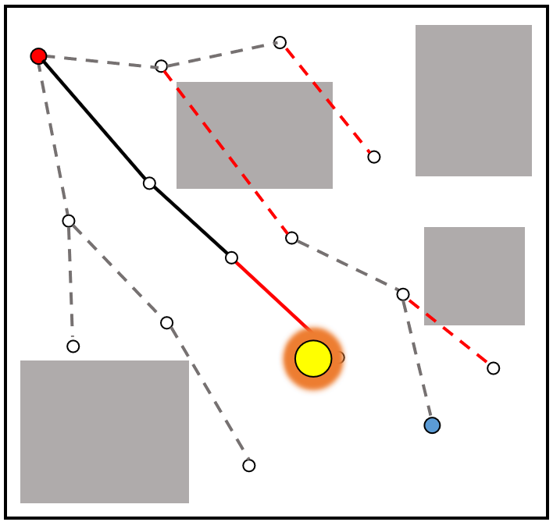}} \hfill
      \subfloat[]{\includegraphics[width = 0.225\textwidth]{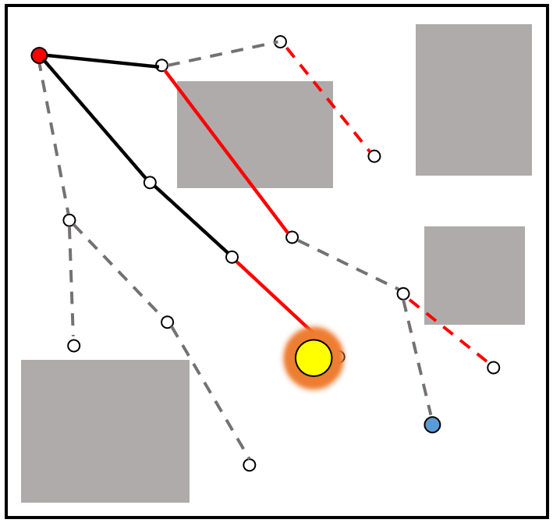}} \hfill
      \subfloat[]{\includegraphics[width = 0.225\textwidth]{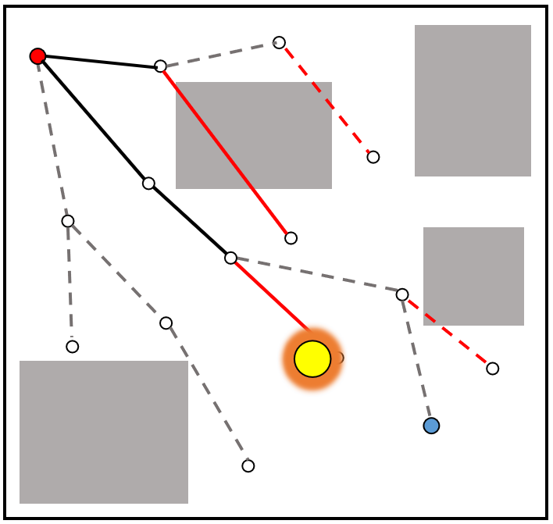}} \hfill
      \subfloat[]{\includegraphics[width = 0.225\textwidth]{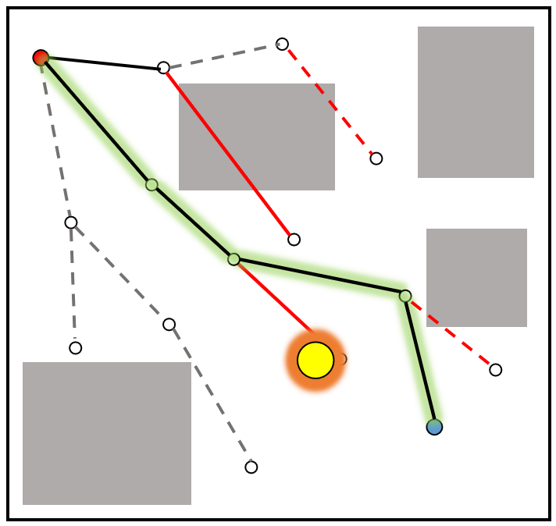}}
\caption{Graphical illustrations of LLPT$^*$ finding the shortest path from start to goal in the presence of a dynamic obstacle.
The gray dashed lines represent unevaluated but potentially valid edges, while the red dashed lines denote unevaluated and potentially invalid edges.
The black lines indicate evaluated and valid edges, and the solid red lines denote evaluated and invalid (i.e., colliding) edges.
(a) The configuration space is initially spanned by a search tree composed of unevaluated edges.
(b) A solution path from start to goal is found and verified to be collision-free.
(c) A dynamic obstacle moves, potentially altering the collision status of nearby edges.
(d) The edges along the previously found solution path are re-evaluated, and a collision is detected on one of them.
(e) The search tree is incrementally rewired, and a new solution path is identified.
(f) The new path is evaluated, and another colliding edge is discovered.
(g) The tree is rewired again, and the planner falls back to a suboptimal solution with higher cost.
(h) All edges along the new solution are finally evaluated and found to be collision-free. The solution is updated accordingly.
}
\label{fig:graphic_abs}
\end{figure*}

Asymptotically-optimal sampling-based planning methods, such as Rapidly-exploring Random Tree$^*$ (RRT$^*$) \cite{RRTstar}, RRT$^{\sharp}$ \cite{RRTsharp}, Informed RRT$^*$ \cite{InformedRRT}, in contrast, avoid an explicit offline construction of the problem domain and refine an asymptotically dense space-filling tree directly in the robot’s state-space. 
These methods aim to find asymptotically optimal paths in terms of the robot and its environment. 
However, a significant drawback of these methods is their inability to efficiently handle environment changes, as they require replanning from scratch whenever a change is detected.
To address this limitation, previous works have explored reusing previous search trees to improve the replanning efficiency of sampling-based planning methods \cite{DRRT, GRIP, LRF, MPRRT, RRTFND}. 
Unfortunately, these methods do not guarantee the quality of their solutions.
In contrast, asymptotically optimal lifelong sampling-based methods \cite{RTRRT, RRTX, FAT} stand out from previous approaches by offering asymptotically optimal solutions. 
They use a rewiring cascade to promptly remodel the search tree around dynamic obstacles.
These methods offer significant benefits, particularly in applications that require high solution quality.

Unfortunately, the existing asymptotically optimal lifelong sampling-based methods primarily focused on enhancing solution quality but need to be more indifferent to the number of edge evaluations.
Specifically, they must iteratively evaluate all edges of the underlying search graph to check their feasibility before rewiring, irrespective of their relevance to the solution.
In motion planning problems, edge evaluation typically consists of computationally-intensive procedures, such as solving two-point boundary value problems and conducting dynamic collision checks. 
As a result, the cost of edge evaluation dominates the overall replanning process, especially for motion planning problems in high-dimensional configuration spaces, where the underlying search graph must be dense enough to contain a solution.

The lazy search technique has been introduced in \cite{LazyPRM} to mitigate the problem of excessive edge evaluations. 
It assumes that a heuristic function can efficiently compute a lower bound on edge weight and uses the lower bound to estimate whether an edge has the potential to improve the solution.
This approach defers the evaluation of the actual cost of an edge until it is deemed pertinent to the solution. 
Various sampling-based algorithms \cite{LazyRRT, LBTRRT, BIT} have exploited the lazy search strategy to enhance their planning efficiency. 
Regrettably, none can achieve both the replanning efficiency of lifelong planning techniques and the edge evaluation efficiency of lazy search techniques.

In this paper, we propose an asymptotically optimal lifelong sampling-based algorithm, Lazy Lifelong Planning Tree$^*$ (LLPT$^*$), for online motion planning in environments where edge evaluation is expensive. 
The algorithm integrates the lazy search and the lifelong planning frameworks. 
Specifically, LLPT$^*$ approximates the problem domain by a Rapidly-exploring Random Graph (RRG) with non-overestimating heuristic edges. The RRG is reused and refined throughout the entire planning procedure until the robot reaches the goal. 
LLPT$^*$ searches for solutions by spanning the RRG with a Shortest-Path Tree (SPT). 
The actual edge evaluations of the SPT are restricted to those with the potential to be part of the solution. 
Whenever changes occur (e.g., the actual cost of some edges is evaluated or the RRG is refined), a novel informed rewiring cascade is performed to quickly repair the SPT to reflect the new information and update the solution.

This paper extends our previous work published at IROS 2024 \cite{LLPT_IROS}. We provide a comprehensive theoretical analysis, including an expanded proof of the probabilistic completeness and asymptotic optimality of LLPT$^*$ (see Section~\ref{Analysis}).
We also compare LLPT$^*$ with a broader range of sampling-based motion planning algorithms on various high-dimensional planning problems, including both static and dynamic scenarios, to further demonstrate its advantages (see Section~\ref{Simulation}).
The same section presents real-world experiments conducted on a Turtlebot 4 platform, validating the feasibility and efficiency of the proposed algorithm.
Furthermore, we evaluate the performance of LLPT$^*$ in highly dynamic environments and discuss potential directions for future improvement in the discussion part.


\section{Related Works}
\subsection{Lifelong Sampling-based Algorithms}
The concept of reusing previous search efforts is widely adopted by various planners to enhance their planning efficiency. 
Anytime motion planners \cite{RRT, InformedRRT, RRTconnect, RRTstar, RRTsharp} refine their solutions by adding new samples to existing search trees, leveraging previously found solution paths. 
However, these methods only reuse search efforts within a single query and must restart when the robot's operating environment or its start and goal configurations change.
Multi-query sampling-based planners, such as Probability Roadmap (PRM) \cite{PRM} and RRG$^*$ \cite{RRTstar}, address this limitation by efficiently solving multiple start-goal queries in static environments. They achieve this by reusing previously acquired knowledge of valid edges, which allows for more efficient solutions to subsequent queries.

Some sampling-based motion planners focus on reusing planning results to manage environmental variations effectively. For instance, certain approaches build trajectory libraries \cite{TrajLib} that store previous successful plans, guiding future searches based on handcrafted or learned heuristics, such as roadmap similarity \cite{IO-MP}, collision probability models \cite{PLUT}, or the proximity of start-goal configurations \cite{Lightning}. These trajectory libraries have proven effective for controlling underactuated and high-dimensional systems \cite{LittleDog, SoftRobot}. However, this approach can suffer from limited dataset sizes or excessive memory usage.
The Thunder framework \cite{Thunder} enhances trajectory library-based planners by storing experiences in a sparse graph rather than in individual paths. It reduces redundancy and increases opportunities for path reuse.

Additionally, significant efforts have been made to learn latent representations of configuration spaces from past experiences. This informs future queries by biasing sample sequences toward regions likely to contain a solution path \cite{WeightingFeature, HybridSampling, Bayesian}. 
Some approaches leverage neural networks to improve the accuracy of these latent representations \cite{Learning, CVAE, LEGO}. 
Others view biased sampling as a decision-making process, employing reinforcement learning methods to learn optimal sampling policies online \cite{MDP, RRF, MAB_kinoRRT}. 
However, these learning-based approaches often require extensive offline or online training to ensure the effectiveness of the sampler parameters.

Different from previous approaches, some accelerate replanning by rewiring the out-of-date search trees to fully exploit previous search efforts. 
DRRT and GRIP \cite{DRRT, GRIP} selectively eliminate invalid search tree branches obstructed by dynamic obstacles rather than discarding all previous samples as brute-force replanning. 
They then grow the search tree until a new solution is derived. 
More efficient replanning schemes only remove invalid edges of the spanning tree and preserve the separate branches for further planning. 
For instance, LRF retains a forest of spanning trees that are dynamically split, regrown, and merged to adapt to obstacles and robot movements \cite{LRF}. 
Similarly, MP-RRT retains obstructed branches and actively seeks to reconnect them to the root tree, bridging the gap between the start and goal configurations \cite{MPRRT}. 
RRT$^*$FND reduces replanning effort by only reconnecting or regrowing functional disconnected branches of the previous solution path \cite{RRTFND}.

However, the aforementioned approaches are lack of formal guarantees on solution optimality. 
Distinguished from previous works, RT-RRT$^*$\cite{RTRRT}, RRT$^X$\cite{RRTX}, and FAT$^*$\cite{FAT} were designed to address dynamic shortest-path planning problems (DSPP), offering an asymptotically optimal solution given sufficient replanning time.
These methods leverage rewiring cascades to adapt the search tree and navigate around dynamic obstacles. 
In particular, RT-RRT$^*$ rewires the entire search tree structure starting from the root, which can be computationally expensive when there are many nodes in the tree. 
On the other hand, RRT$^X$ uses a local rewiring cascade, which focuses only on rewiring the parts of the search tree affected by dynamic changes, rendering it more efficient in scenarios where the environment experiences minor alterations.
FAT$^*$ further optimizes the rewiring process of RRT$^X$ by constraining expansions to promising samples that are likely to improve solution costs.

It is worth noting that the local rewiring cascade can be enhanced in efficiency by ordering rewiring based on the potential
solution quality and constraining rewiring only to the informed node set relevant to the solution, akin to methodologies employed in informed graph-based search techniques (i.e., LPA$^*$, D$^*$).
In the later section, we propose an informed local rewiring cascade and integrate it into the proposed algorithm.

\subsection{Path Planning with Lazy Collision Detection}
The issue of expensive edge evaluations is particularly significant in robotics. 
While numerous studies aim to reduce collision checks by biasing sampling toward free regions \cite{Bial, SafeCertificate}, it is common for the motion between pairs of free samples to be infeasible, especially in environments with narrow passages or dense obstacles.
One effective strategy is to select paths that minimize the likelihood of collisions. 
Pan and Manocha develop incremental collision probability models by which only the paths that are estimated to be feasible are evaluated \cite{LSH}. 
This concept has also been explored within a Bayesian reinforcement learning framework \cite{ExperiencedLazyPathSearch}.
Choudhury et al. \cite{BISECT} frame edge selection as a decision region determination problem, choosing edges that reduce uncertainty about the robot's operating environment. 
STROLL \cite{STROLL} formulates the edge selection problem as a Markov Decision Process (MDP), learning an edge evaluation policy by imitating the strategy of LazySP \cite{LazySP}. 
Additionally, Sung and Stone \cite{GraphCut} prioritize the evaluation of edges located on the cut of the underlying graph, as these edges are more closely related to potential solutions.

On the other hand, some works assume that a heuristic function can estimate a lower bound on edge costs effectively and guide collision checks by estimated solution length.
Lazy PRM$^*$ and Lazy RRG$^*$ \cite{LazyRRT} build a roadmap of feasible configurations like PRM$^*$ and RRG$^*$ \cite{RRTstar}, but with the difference that edges are not immediately checked for collision.
Only edges that are estimated by the heuristic function to belong to a better candidate solution path are checked for collision. If an edge is found to collide, it is removed from the roadmap, and the candidate path is updated.
Similarly, Lower Bound Tree-RRT (LBT-RRT) \cite{LBTRRT} proposed the lower bound graph $\mathcal{G}_{lb}$, an RRG with lazily-evaluated edges. $\mathcal{G}_{lb}$ poses a lower bound invariant to the nodes in the search tree, i.e., the solution cost of the nodes in the search tree is always smaller than their solution cost via $\mathcal{G}_{lb}$. An edge is only evaluated if its lazy insertion into $\mathcal{G}_{lb}$ causes the violation against the lower bound invariant. 
Batch Informed Tree$^*$ (BIT$^*$) \cite{BIT} inserts a batch of samples concurrently into the search graph with lazily evaluated edges and prioritizes the evaluation of the new edges with minor lazily-evaluated solution cost. 

Our work brings together lifelong sampling-based algorithms and lazy search techniques to simultaneously harvest replanning and edge evaluation efficiency when solving DSPP. Our algorithm outperforms existing lifelong and lazy sampling-based algorithms in terms of planning efficiency for both static and dynamic planning problems.

\section{Lazy Lifelong Planning Tree$^{*}$}
\label{Methodology}

\begin{figure}
    \centering
    \includegraphics[width=0.49\textwidth]{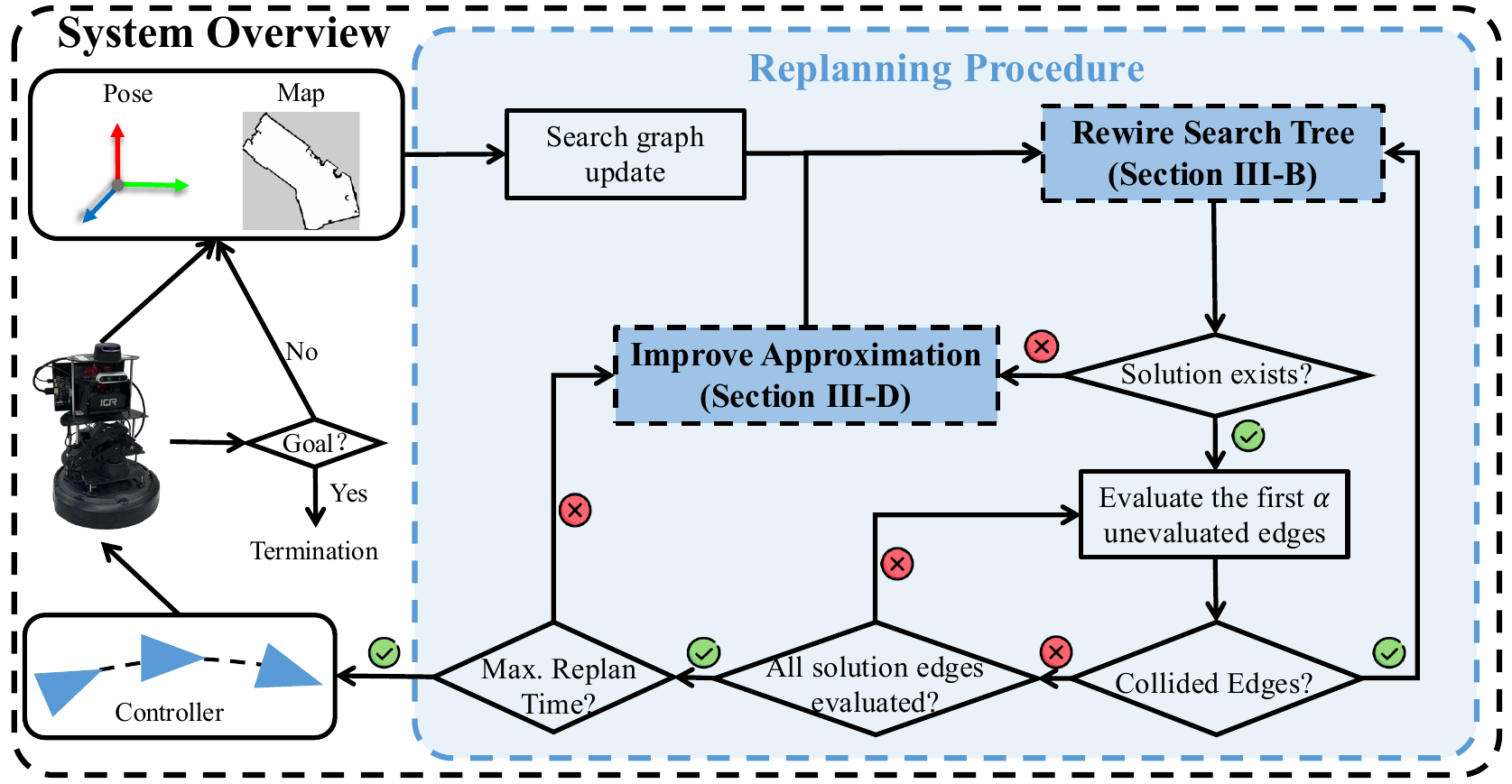}
    \caption{System overview of the proposed replanning module integrated with a complete robot navigation system.}
    \label{fig:system-overview}
\end{figure}

In this section, we describe the proposed lifelong planning framework. The system overview is presented in \ref{fig:system-overview}.
In the following subsection, we first introduce the notations that will be used throughout the paper. 
In Section \ref{InformedRewiringCascade}, we highlight the informed rewiring cascade, a crucial component of LLPT$^*$ that efficiently rewires the underlying graph upon detecting graph modifications. 
We then present the overall algorithm framework, with a particular focus on the lazy search strategy that is used to delay collision checks. 
Finally, we describe the graph densification strategy used to enhance the solution.

\subsection{Notations and Problem Definitions}
Assume that a robot begins its journey from $v_{start}$ to $v_{goal}$. 
An optimal path planning problem is to find a path $\pi^*(v_{start},v_{goal})$, an ordered set of distinct vertices that connects $v_{start}$ and $v_{goal}$ and minimizes the total cost to traverse this path.
LLPT$^*$ solves this problem by constructing a search graph $\mathcal{G} = (V, E)$ with lazily-evaluated edges within the robot's configuration space $\mathcal{X}$, where $V$ represents the node set and $E \subseteq V \times V$ denotes the edge set. To clarify, each node $v \in V$ implies the corresponding configuration $x \in \mathcal{X}$ and can be directly input into distance functions. 
For each node, LLPT$^*$ stores its neighbor nodes in memory for future reference.
The search tree of $\mathcal{G}$ is denoted by $\mathcal{T} = (V_{\mathcal{T}}, E_{\mathcal{T}})$, rooted at $v_{goal}$, where $V_{\mathcal{T}} \subseteq V$ and $E_{\mathcal{T}} \subseteq E$.  
As mentioned in \cite{RRTX}, rooting $\mathcal{T}$ in $v_{goal}$ allows the same tree and root
to be used for the entire navigation, avoiding rewiring the tree from the root while the robot moves.

For each edge $e\in E$ of $\mathcal{G}$, a weight function $w: E \rightarrow (0, \infty]$ assigns a positive real number, including infinity, to this edge, e.g., the distance/time/energy cost for traversing this edge. If traversing the edge is infeasible, the edge weight will be infinity. 
Denote the cost of traversing the local trajectory of an edge in the free space as $\hat{w}(e)$, and $E_{eval}$ be the set of evaluated edges at the current replanning iteration, the edge weight of $\mathcal{G}$ is defined by a lazy weight function
\begin{equation}
\label{EdgeWeightUpdate}
    \bar{w}(e)
    =\begin{cases}
         w(e), & \text{if } e\in E_{eval},\\
         \hat{w}(e),       & \text{else.}
    \end{cases}
\end{equation}
The goal of our planner is to find the optimal solution path $\pi^*$ that minimizes the total true edge weight, i.e., $\pi^*(v_{start},v_{goal})=\arg\min_{\pi\in\mathcal{G}}\sum_{e\in\Pi}w(e)$, where $\Pi$ is the set of all finite cost paths connecting $v_{start}$ to $v_{goal}$ in $\mathcal{G}$.


\begin{algorithm}[t]
\setstretch{1.0}
  \caption{Lazy Lifelong Planning Tree$^{*}$}
  \label{Main}
  \SetKwFunction{TimePermits}{TimePermits}
  \SetKwFunction{PropagateAscendants}{PropagateAscendants}
  \SetKwFunction{ReduceInconsistency}{ReduceInconsistency}
  \SetKwFunction{EvaluateEdge}{EvaluateEdge}
  \SetKwFunction{Selector}{Selector}
  \SetKwFunction{ComputeShortestPath}{ComputeShortestPath}
  \SetKwFunction{ShrinkRadius}{ShrinkRadius}
  \SetKwFunction{Parent}{Parent}
  \SetKwFunction{ExtendSearchGraph}{ExtendSearchGraph}
  \SetKwFunction{CheckObstacleRemoval}{CheckObstacleRemoval}
  \SetKwFunction{Child}{Child}
  \SetKwFunction{InformCollision}{InformCollision}
  \SetKwFunction{RemoveFromTree}{RemoveFromTree}
  \SetKwFunction{PropagateCostToLeave}{PropagateCostToLeave}
  \SetKwFunction{update}{update}
  \SetKwFunction{UpdateNode}{UpdateNode}
    $E,E_{\mathcal{T}}\leftarrow\phi$; 
    $V\leftarrow\{v_{goal}, v_{start}\}$,$V_{\mathcal{T}}\leftarrow\{v_{goal}\}$ \; 
    \label{Main:InitGoal}    
    
    $\mathcal{Q}\leftarrow\phi$, $k_{m}=0$, $\pi\leftarrow\phi$\;
    
    \While{$v_{start}\neq v_{goal}$}{
      \label{Main:ReplanBegin}
      Update the environment information\;
      
      $E_{eval}\leftarrow\phi$, $E_{collided}\leftarrow\phi$\;
      
      Update the edge weight of $\mathcal{G}$ based on Eq.\ref{EdgeWeightUpdate}\; \label{Main:Refresh}

      $E_{modified}\leftarrow$ the set of edges with modified weight\; \label{Main:RefreshUpdateBegin}

      \For{$e=(v, u) \in E_{modified}$}
      {
        \UpdateNode($v,u$)\;
        \label{Main:RefreshUpdateEnd}
      }
      
      \While{\TimePermits{}}
      {
        \label{Main:OuterLoopBegin}
        \Repeat{$\pi\subset E_{eval}$ \textbf{ and } $\pi\cap E_{collided}=\phi$}
        {
            \label{Main:InnerLoopBegin}
              $\pi\leftarrow$\ComputeShortestPath()\; \label{Main:ComputeShortestPath}
              \eIf{$v_{start},v_{goal}\in\pi$}
              {
                $V_{collided}=$\EvaluateEdge{$\alpha$, $\pi$}\; \label{Main:Evaluation}
                \For{$v\in V_{collided}$}
                { 
                    \PropagateCostToLeave($v$)\;  
                    \label{Main:PropagateCost}
                }
                \label{Main:PropagateCostEnd}
              }
              {
                break \tcp*{No solution exist.} \label{Main:NoSolution} 
              }   
            }\label{Main:InnerLoopEnd}
            \ExtendSearchGraph()\; \label{Main:AddNewSample}
            \label{Main:OuterLoopEnd}
        }
      
        \If{robot is moving}{
            $v_{last}\leftarrow v_{start}$\;
            Update $v_{start}$\;
            $k_m \leftarrow k_m + w(v_{last}, v_{start})$\;
            \label{Main:Transient}
        }
        \label{Main:ReplanEnd}
      }

\end{algorithm}

\begin{algorithm}[t]
\setstretch{1.0}
  \caption{EvaluateEdge($\pi$, $\alpha$)}
  \label{EvaluateEdge}
  \SetKwFunction{RemoveFromTree}{RemoveFromTree}
    $E_{uneval}\leftarrow$ unevaluated edges in $\pi$\;
    $V_{collided}\leftarrow\phi$\;
    \If{$|E_{uneval}| > \alpha$}
    {
    \label{Evaluation:EdgeSelectBegin}
        $E_{uneval}\leftarrow$ first $\alpha$ edges in $E_{uneval}$ closest to $v_{goal}$\;
    \label{Evaluation:EdgeSelectEnd}
    }
    \For{$e(v,u)\in E_{uneval}$}{
    \label{Evaluation:EvalBegin}
        \If{$e$ is collided with some obstacles}
        {
            $E_{collided}\xleftarrow{+}\{e\}$, $V_{collided}\xleftarrow{+}\{v\}$\;
        }
        $E_{eval}\xleftarrow{+}\{e\}$\;
    \label{Evaluation:EvalEnd}
    }
    \textbf{return} $V_{collided}$\;
\end{algorithm}

\begin{algorithm}[t]
\setstretch{1.0}
  \caption{PropagateCostToLeave($v)$}
  \label{PropagateCostToLeave}
  \SetKwFunction{PropagateCostToLeave}{PropagateCostToLeave}
  \SetKwFunction{Child}{Child}
  \SetKwFunction{RemoveFromTree}{RemoveFromTree}
  \SetKwFunction{update}{update}
   \RemoveFromTree{$v$}\;\label{PropagateCostToLeave:CollidedNodeProcessBegin}
   $lmc(v)\leftarrow\infty$, $\mathcal{Q}$.\update{$v$}\;\label{PropagateCostToLeave:CollidedNodeProcessEnd}
    \For{$u\in$ \Child{$v$}}{
        \label{PropagateCostToLeave:ToChildBegin}
        \PropagateCostToLeave{$u$}\;
        \label{PropagateCostToLeave:ToChildEnd}
    }
\end{algorithm}

\begin{algorithm}[t]
\setstretch{1.0}
  \caption{ComputeShortestPath()}
  \label{ComputeShortestPath}
  \SetKwFunction{TimePermits}{TimePermits}
  \SetKwFunction{FindParentOf}{FindParentOf}
  \SetKwFunction{MakeParentOf}{MakeParentOf}
  \SetKwFunction{topnode}{topnode}
  \SetKwFunction{Neighbor}{Neighbor}
  \SetKwFunction{Parent}{Parent}
  \SetKwFunction{pop}{pop}
  \SetKwFunction{key}{key}
  \SetKwFunction{topkey}{topkey}
  \SetKwFunction{CalculateKey}{CalculateKey}
  \SetKwFunction{update}{update}
  \SetKwFunction{UpdateNode}{UpdateNode}
  \SetKwFunction{RewireNeighbor}{RewireNeighbor}
    \While{$|\mathcal{Q}|>0$ 
    \textbf{ and } $(lmc(v_{start})>g(v_{start})
    $\textbf{ or } $lmc(v_{start})=g(v_{start})=\infty$
    \textbf{ or } $v_{start}\in\mathcal{Q}$
    \textbf{ or } $\mathcal{Q}$.\topkey{} $<$ \CalculateKey{$v_{start}$}}{
        \label{ReduceInconsistency:Terminate}
        $k_{old}\leftarrow\mathcal{Q}$.\topkey{}\; \label{ReduceInconsistency:UpdatePriorityBegin}
        $v\leftarrow\mathcal{Q}$.\pop{}\;
        \eIf{$k_{old}<$\CalculateKey{$v$}}
        {
            $\mathcal{Q}$.\update{$v$}\; \label{ReduceInconsistency:UpdatePriorityEnd}
        }
        {
            \If{$lmc(v)>g(v)$\textbf{ or }$lmc(v)=g(v)=\infty$}{ \label{ReduceInconssistency:UnderconsistentStart}
                \For{$u\in V_{\mathcal{T}}\cap N(v)$}
                {
                    \If{$lmc(v)>\bar{w}(v,u)+lmc(u)$}
                    {
                        \MakeParentOf{$v,u$}\;
                        $lmc(v)\leftarrow \bar{w}(v,u)+lmc(u)$\;
                    }
                } 
            }\label{ReduceInconssistency:UnderconsistentEnd}
            
            \If{$lmc(v)\neq\infty$}{   
              \label{ReduceInconssistency:RewireCondition}
              \For{$u\in N(v)\backslash$ \Parent{$v$}}
              { 
                \label{ReduceInconssistency:OverconsistentBegin}
                \label{ReduceInconssistency:RewireNeighborBegin}
                \If{$lmc(u)>\bar{w}(u,v)+lmc(v)$}
                { 
                  \MakeParentOf{$u,v$}\;
                  $lmc(u)\leftarrow \bar{w}(u,v)+lmc(v)$\;                    $\mathcal{Q}$.\update{$u$}\;\label{ReduceInconssistency:RewireNeighborEnd}
                }
              } 
            } \label{ReduceInconssistency:OverconsistentEnd}
        $g(v)\leftarrow lmc(v)$;\label{ReduceInconsistency:Consistent}  
        }
    }
    \textbf{return} path from $v_{start}$ to $v_{goal}$\;
\end{algorithm}

\begin{algorithm}[h]
\setstretch{1.0}
  \caption{UpdateNode($v, u$)}
  \label{UpdateNode}
  \SetKwFunction{MakeParentOf}{MakeParentOf}
    \If{$lmc(v)>\bar{w}(v,u)+lmc(u)$}
    {
        \MakeParentOf{$v$,$u$}\;   
        $lmc(v)\leftarrow \bar{w}(v,u)+lmc(u)$\;
        $\mathcal{Q}$.\update{$v$}\;  
    }
\end{algorithm}

\begin{algorithm}[h]
\setstretch{1.0}
  \caption{ExtendSearchGraph()}
  \label{ExtendSearchGraph}
  \SetKwFunction{RandomSample}{RandomSample}
  \SetKwFunction{Nearest}{Nearest}
  \SetKwFunction{Steer}{Steer}
  \SetKwFunction{Neighbor}{Neighbor}
  \SetKwFunction{FindParentOf}{FindParentOf}
  \SetKwFunction{MakeParentOf}{MakeParentOf}
  \SetKwFunction{update}{update}
  \SetKwFunction{UpdateNode}{UpdateNode}
        $v_{rand} \leftarrow$ \RandomSample{}\; 
        \label{AddNewSample:RandomSample}
        $v_{nearest}\leftarrow$ \Nearest{$v_{rand}$}\; \label{AddNewSample:FindNearest}
        $v_{new}\leftarrow $\Steer{$v_{nearest},v_{rand}$, $\delta$}\; 
        \label{AddNewSample:Steer}
        \If{$v_{new}$ is in the free configuration space}
        {
            $V\xleftarrow{+}\{v_{new}\}$\;
        }
        \For{$u\in N(v_{new})$}
        {
            \label{AddNewSample:AddSampleStart}
            \If{free-space local trajectory exists between $v$ and $u$}
            {
                $E\xleftarrow{+}\{e(v_{new},u), e(u,v_{new})\}$\;
                \UpdateNode{$v, u$}\; \label{AddNewSample:UpdateNode}
            }
            \label{AddNewSample:AddSampleEnd}  
        }
        
\end{algorithm}

\subsection{Informed Rewiring Cascade}
\label{InformedRewiringCascade}
LLPT$^{*}$ dynamically adjusts the solution path by rewiring $\mathcal{T}$ when the edge weights of $\mathcal{G}$ change. Drawing inspiration from RRT$^X$ \cite{RRTX} and RRT$^\sharp$ \cite{RRTsharp}, LLPT$^{*}$ employs two cost-to-goal estimates, g($v$) and lmc($v$), for each node to detect local inconsistencies arising from graph modifications.
A node $v$ is called \textit{consistent} if g($v$) = lmc($v$)$<\infty$, and \textit{inconsistent} otherwise. 
An inconsistent node is further categorized as locally \textit{overconsistent} if lmc($v$) $<$ g($v$), or locally \textit{underconsistent} if lmc($v$) $>$ g($v$) or lmc($v$) = g($v$) =$\infty$.
The g-value is the accumulated cost-to-goal by traversing the previous
search tree, while the lmc-value is the minimal cost-to-goal based on the up-to-date graph topology and is potentially better informed than the g-value. The lmc-value satisfies the following relationship when all nodes are locally consistent:
\begin{equation}
    lmc(v)\leftarrow     
    \begin{cases}
     0, & \text{if }v=v_{goal},\\
     \min_{u\in N(v)} \bar{w}(v, u) + lmc(u),  & \text{otherwise}.
    \end{cases}
\end{equation}
Then the shortest path from $v_{start}$ to $v_{goal}$ can be found by traversing iteratively from $v_{start}$ to the neighbor node $u$ that minimizes $\bar{w}(v, u) + lmc(u)$.

Modifications on edge weights potentially trigger changes in the lmc-value of some nodes and render these nodes inconsistent. 
To minimally rewire the inconsistent nodes while guaranteeing solution completeness and optimality, we design the rewiring cascade based on the following observations.
\begin{observation}
\label{underconsistent}
If the cost of the solution path from a node to the goal increases, the cost-to-goal of all its descendants (if it has any) in $\mathcal{T}$ will implicitly increase by the exact change. 
The node and its descendants may find a shorter path to the goal among their neighboring nodes belonging to $\mathcal{T}$.
Therefore, the node and its descendants should be rewired to guarantee solution completeness after rewiring $\mathcal{T}$.
\end{observation}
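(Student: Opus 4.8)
\textbf{Proof proposal for Observation~\ref{underconsistent}.}

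The plan is to formalize ``implicit increase'' in terms of the tree structure and then argue necessity of rewiring by contradiction. First I would set up notation: let $v$ be the node in question, let $\mathcal{T}$ denote the search tree rooted at $v_{goal}$, and let $\mathrm{desc}(v)$ be the set of descendants of $v$ in $\mathcal{T}$. Recall that for a tree node $w$, the cost-to-goal recorded by the tree is obtained by summing $\bar{w}$ over the unique tree path from $w$ to $v_{goal}$, which necessarily passes through $v$ for every $w\in\mathrm{desc}(v)$. So the tree cost-to-goal of $w$ decomposes as $(\text{tree cost from }w\text{ to }v)+(\text{tree cost from }v\text{ to }v_{goal})$. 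The first summand is unaffected by a weight change on an edge lying on the $v$-to-$v_{goal}$ portion, or by a change in $lmc(v)$ coming from $v$'s own reconnection; hence any increase $\Delta$ in $v$'s cost-to-goal propagates additively and identically to every descendant. This establishes the first sentence of the observation.

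Next I would address the claim that after such an increase, $v$ or one of its descendants may have a strictly cheaper route through a tree-neighbor not currently in its subtree. The point here is merely existential/possibility: I would exhibit the situation by noting that before the weight change the tree path was (by the $lmc$-recurrence and local consistency, Eq.~(2)) a minimizer over neighbors, but the increase may break that minimality, so some $u\in N(w)\cap V_{\mathcal{T}}$ with $w\notin\mathrm{desc}$-chain of $u$ can now satisfy $\bar{w}(w,u)+lmc(u) < $ the (inflated) tree cost of $w$. Since $u$'s own cost-to-goal did not increase (it is not a descendant of $v$), this is consistent. That is exactly the \emph{locally underconsistent} condition $lmc(w) > g(w)$ that the rewiring cascade in \texttt{ComputeShortestPath} is designed to detect.

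Finally, for the necessity-of-rewiring conclusion I would argue by contradiction: suppose the cascade does not touch $v$ and its descendants. Then their $lmc$-values remain the stale pre-change values, which by the decomposition above no longer equal the true minimal cost-to-goal over the current graph; in particular $lmc(v_{start})$, if $v_{start}$ lies in $\mathrm{desc}(v)\cup\{v\}$, understates the cost of the extracted path, and the path returned by traversing the min-$lmc$ neighbors may either (i) still route through the now-more-expensive or invalid edge, contradicting optimality, or (ii) route through a descendant whose parent pointer is stale, contradicting completeness (the ``path'' may not be a valid path to $v_{goal}$ at all, e.g.\ if the increase was to $\infty$ from an edge collision). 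Hence every node in $\{v\}\cup\mathrm{desc}(v)$ must be re-examined, i.e.\ placed on $\mathcal{Q}$ and reprocessed; this is precisely what \texttt{PropagateCostToLeave} does by recursing over \texttt{Child}$(v)$ and resetting $lmc$ to $\infty$.

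The main obstacle I anticipate is making the additive-propagation claim fully rigorous in the presence of the lazy weight function $\bar{w}$ and concurrent changes: one must be careful that ``the exact change'' $\Delta$ is well-defined when multiple edges are modified in a single iteration (the \texttt{for} loop over $E_{modified}$ in Algorithm~1), and that the observation is read as a statement about each modification processed one at a time rather than about the net effect of a batch. I would handle this by phrasing the argument per single edge-weight modification and invoking it inductively over $E_{modified}$, which is also how the cascade processes updates.
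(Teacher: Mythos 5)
The paper states Observation~\ref{underconsistent} without any proof --- it is offered as an informal design rationale motivating \texttt{PropagateCostToLeave} and the underconsistent branch of \texttt{ComputeShortestPath} --- so there is no authorial argument to compare against. Your formalization is correct and matches the intended intuition: the additive-propagation claim follows exactly from the uniqueness of tree paths (the $w\!\to\!v$ segment is disjoint from the modified $v\!\to\!v_{goal}$ segment), the ``may find a shorter path'' claim is correctly read as existential and correctly restricted to tree-neighbors outside $\mathrm{desc}(v)$ (whose costs are unaffected, avoiding cycles), and the necessity argument captures why stale $lmc$-values would let the extracted path route through a now-infinite-weight edge, which is precisely the completeness failure the cascade is designed to prevent. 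Your closing caveat about batching over $E_{modified}$ is well taken and resolved the right way (per-modification, inductively), and your observation that the ``exact change'' degenerates to $\infty$ under collision is consistent with the algorithm setting $lmc(v)\leftarrow\infty$ rather than incrementing by a finite $\Delta$. No gaps.
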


\begin{observation}
\label{overconsistent}
If the cost of the solution path from a node to the goal decreases, the cost-to-goal of all its descendants (if it has any) in $\mathcal{T}$ will implicitly decrease by the exact change. 
The node and its descendants may provide a shortcut to their neighboring nodes. 
Therefore, all the neighboring nodes should be rewired to guarantee solution optimality.
\end{observation}

LLPT$^*$ collects and stores the inconsistent nodes in a priority queue $\mathcal{Q}$.
The priority of each node in the queue is based on the order of its associated keys.
If a node $v$ becomes underconsistent, which happens when its solution path to the goal is evaluated to be in collision, the node and all its leaf nodes should be rewired to guarantee solution completeness according to \textbf{Observation \ref{underconsistent}}.
LLPT$^{*}$ will use the \FuncSty{PropagateCostToLeave} procedure (Algorithm \ref{PropagateCostToLeave}) to propagate the cost ascendance to all its leaf nodes. 
Afterward, both $v$ and its leaf nodes will be added to the priority queue $\mathcal{Q}$. 
On the other hand, if a node $v$ becomes overconsistent, which can happen when some of its outgoing edges have decreasing weight or when $\mathcal{G}$ is extended, $v$ will be added to $\mathcal{Q}$ to trigger rewiring among its neighboring nodes to ensure solution optimality according to \textbf{Observation \ref{overconsistent}}. 

The $\FuncSty{ComputeShortestPath}$ procedure, which is provided in Algorithm \ref{ComputeShortestPath}, is called to make the spanning tree consistent again by operating on the inconsistent nodes stored in $\mathcal{Q}$ iteratively. 
For an underconsistent node, $\FuncSty{ComputeShortestPath}$ iteratively searches among its neighbor nodes, and the underconsistent node is rewired to a new parent node if the new parent node results in a better cost-to-goal (line \ref{ReduceInconssistency:UnderconsistentStart}-\ref{ReduceInconssistency:UnderconsistentEnd}, Algorithm \ref{ComputeShortestPath}).
For an overconsistent node, its local inconsistency is disseminated to its neighbors. 
Specifically, if $v$ emerges as a better parent node for a neighbor node, the neighbor's parent is updated to $v$, and the neighbor will be included in $\mathcal{Q}$ to propagate the cost-to-goal reduction (line \ref{ReduceInconssistency:OverconsistentBegin}-\ref{ReduceInconssistency:OverconsistentEnd}, Algorithm \ref{ComputeShortestPath}).

To save computational cost, we only operate on inconsistent nodes with the potential to be in the solution path. 
These nodes are called \textit{promising} in \cite{RRTsharp}. 
For a promising node $v$, an admissible cost estimate of the path from $v_{start}$ to $v_{goal}$ constrained to pass through it is smaller than the current best solution cost.
Define the function $h(v_{start}, v):v\rightarrow \mathbb{R}^{+}$ as an admissible heuristic estimate of the cost from $v_{start}$ to $v$, i.e., $h(v_{start}, v)\leq w(v_{start}, v)$, and the key of a node as $k(v)=[k_1(v), k_2(v)]$, where $k_1(v)=\min(lmc(v),g(v))+h(v)$ and $k_2(v)=\min(lmc(v),g(v))$, the definition of the promising node set $V_{prom}$ is given as follows:
\begin{equation}
    V_{prom}=\{v\in V| k(v)\dot{<}k(v_{start})\},
\end{equation}
where $\dot{<}$ is a lexicographical comparator with $k(v)\dot{<}k(u)$ if either $k_1(v) < k_2(u)$ or $k_1(v) = k_1(u) \land k_1(v) < k_2(u)$.
The node in $\mathcal{Q}$ with the smallest key is regarded as the most promising one, which is popped out from $\mathcal{Q}$ and processed first until there is no node with smaller key than $v_{start}$ and $v_{start}$ has found a solution path to $v_{goal}$ (line 1, Algorithm \ref{ComputeShortestPath}).
Since $v_{start}$ is updated as the robot moves, $h(v_{start},v)$ is changing and thus the key used by the priority queue. 
To avoid having to reorder the queue, we adopt the key updating strategy as D$^*$ Lite\cite{DLite}, where a constant $k_m$ (initialized as 0) that is added up with the cost of the robot movement at each replanning iteration is used to mitigate the priority inconsistency between the out-of-date priority in $\mathcal{Q}$ and the priority of a newly-inserted vertex (line \ref{Main:Transient}, Algorithm \ref{Main}). 
The key of a newly-inserted vertex $k(v)$ is calculated by
$k_1(v)=\min(lmc(v),g(v))+h(v_{start}, v)+  k_m$ and
$k_2(v)=\min(lmc(v),g(v))$. 
For each node popped from the top of $\mathcal{Q}$, its key will be recalculated, and the node will be reinserted into $\mathcal{Q}$ with updated priority if its old key is smaller than the new one (line \ref{ReduceInconsistency:UpdatePriorityBegin}-\ref{ReduceInconsistency:UpdatePriorityEnd}, Algorithm \ref{ComputeShortestPath}).  
This assures the efficiency of the queue order.

\subsection{Algorithmic framework}
The proposed algorithm is provided in Algorithm \ref{Main} with its major sub-routines outlined in Algorithms \ref{EvaluateEdge}-\ref{ExtendSearchGraph}. Notations such as $A\xleftarrow{+}B$ represents $A\leftarrow A\cup B$, and $A\xleftarrow{-}B$ represents $A\leftarrow A\backslash B$. 
Set cardinality is denoted by $|\cdot|$. 
$\FuncSty{Child}(v)$ and $\FuncSty{Parent}(v)$ retrieve child nodes and the parent of node $v$ in $\mathcal{T}$, respectively. 
$N(v)$ returns the neighboring nodes of $v$ in $\mathcal{G}$.
The priority queue $\mathcal{Q}$ stores nodes sorted in ascending order of their keys. 
$\mathcal{Q}.\FuncSty{update}(v)$ inserts or re-orders node $v$ based on its key value. 
$\mathcal{Q}.\FuncSty{pop}()$ pops and returns the node with the highest priority in $\mathcal{Q}$. 
$\mathcal{Q}.\FuncSty{topkey}$ returns the key of the vertex with the highest priority in $\mathcal{Q}$.
The function $\FuncSty{MakeParentOf}(v,u)$ establishes a child-parent relationship between $v,u$ and adds $v$ to $\mathcal{T}$.
$\FuncSty{RemoveFromTree}(v)$ eliminates the child-parent relationship between $v$ and its parent node and removes $v$ from $\mathcal{T}$. 
The comparison operator with $\infty$ is defined such that $x < \infty$ is true for any $x\neq\infty$ and false otherwise.

LLPT$^*$ initiates by setting the goal node $v_{goal}$ with lmc($v_{goal}$)=g($v_{goal}$)=0 and the start node $v_{start}$ with lmc($v_{start}$)=g($v_{start}$)=$\infty$. 
The replanning procedure (line \ref{Main:ReplanBegin}-\ref{Main:ReplanEnd}, Algorithm \ref{Main}) iterates indefinitely at a user-defined frequency until the robot reaches the goal configuration.
At the start of each replanning cycle, the search graph $\mathcal{G}$ is refreshed with lazily-computed edges (line \ref{Main:Refresh}, Algorithm \ref{Main}).
For edges with updated weights, their starting nodes are rewired using $\FuncSty{UpdateNode}$ (line \ref{Main:RefreshUpdateBegin}-\ref{Main:RefreshUpdateEnd}, Algorithm \ref{Main}).
Each replanning cycle comprises two primary loops: the outer loop (line \ref{Main:OuterLoopBegin}-\ref{Main:OuterLoopEnd}, Algorithm \ref{Main}) and the inner loop (line \ref{Main:InnerLoopBegin}-\ref{Main:InnerLoopEnd}, Algorithm \ref{Main}).
The inner loop serves as the main search process, rewiring the search tree $\mathcal{T}$ and identifying the shortest collision-free path.
This loop alternates between rewiring $\mathcal{T}$ to compute the shortest path and evaluating the edges along the shortest path until a collision-free shortest path is discovered.
The procedure \FuncSty{EvaluateEdge} evaluates the first $\alpha$ unevaluated edges of the shortest path, where $\alpha\in\mathbb{N}^*$ is a user-specified parameter to balance the competing computational costs of edge evaluation and rewiring, and returns the collided nodes.
For the collided nodes, they are removed from $\mathcal{T}$. 
Their lmc-value is set to infinity, and they are then inserted into $\mathcal{Q}$ (line \ref{PropagateCostToLeave:CollidedNodeProcessBegin}-\ref{PropagateCostToLeave:CollidedNodeProcessEnd}, Algorithm \ref{PropagateCostToLeave}). 
The cost-to-goal increase information is iteratively propagated to their descendants (line \ref{PropagateCostToLeave:ToChildBegin}-\ref{PropagateCostToLeave:ToChildEnd}, Algorithm \ref{PropagateCostToLeave}).
The inner loop continues to rewire the search tree and updates the solution (line \ref{Main:ComputeShortestPath}, Algorithm \ref{Main}).
If no solution for the current $\mathcal{G}$ is found, the inner loop reports failure and terminates (line \ref{Main:NoSolution}, Algorithm \ref{Main}). 
When the inner loop terminates with remaining planning time, the search graph $\mathcal{G}$ is densified in the outer loop to enhance potential solution quality (line \ref{Main:AddNewSample}, Algorithm \ref{Main}).

\subsection{Search Graph Densification}
The \FuncSty{ExtendSearchGraph} procedure (Algorithm \ref{ExtendSearchGraph}) extends the underlying search graph $\mathcal{G}$ to improve both completeness and optimality of the solution.
$\mathcal{G}$ is growing to incrementally approximate the problem domain by sampling a random point $v_{rand}$ from $\mathcal{X}$ (line \ref{AddNewSample:RandomSample}, Algorithm \ref{ExtendSearchGraph}) and
extending some parts of the graph towards $v_{rand}$. 
First, the closest node in the graph $v_{nearest}$ is found (line \ref{AddNewSample:FindNearest}, Algorithm \ref{ExtendSearchGraph}). 
Then, $v_{rand}$ is brought $\delta$-closer to $v_{nearest}$, where the values of $\delta$ is problem-specific.
The lazily-evaluated edges among $v_{new}$ and its neighbor nodes are added to $\mathcal{G}$ (line \ref{AddNewSample:AddSampleStart}-\ref{AddNewSample:AddSampleEnd}, Algorithm \ref{ExtendSearchGraph}). 
In this paper, the neighbor nodes are found by searching for nodes that are within radius $r$ of $v_{new}$, which is given in \cite{RRTstar} by 
\begin{equation}
r= \gamma_s 2\left(1+\frac{1}{d}\right)^{\frac{1}{d}}\left(\frac{\mu\left(\mathcal{X}_{\text {free }}\right)}{\zeta_d}\right)^{\frac{1}{d}}\left(\frac{\log (N)}{N}\right)^{\frac{1}{d}}
\end{equation}
where $\gamma_s$ is the tuning parameter, N=$|V|$, $d$ is the dimension of the problem, $\mu\left(\mathcal{X}_{free}\right)$ is the Lebesgue measure of the free configuration space $\mathcal{X}_{free}$, $\zeta_d$ is the volume of a unit ball in $\mathbb{R}^d$. 
The lmc- and g-value of each new node are initialized as $\infty$.
If $v_{new}$ is able to yield a solution path via one of its neighbor nodes, its lmc-value will be updated, and the node will be inserted into $\mathcal{Q}$ (line \ref{AddNewSample:UpdateNode}, Algorithm \ref{ExtendSearchGraph}).
Later, the \FuncSty{ComputeShortestPath} procedure will propagate the new information due to the extension across the graph and search for a better solution for promising nodes.

\section{Analysis}
\label{Analysis}
In this section, we provide theoretical properties concerning the probabilistic completeness and asymptotic optimality of LLPT$^*$.
First, the following theorem is given:
\begin{theorem}
\label{theorem_shortest}
Upon the termination of the $\FuncSty{ComputeShortestPath}$ procedure, 
the optimal path from $v_{start}$ to $v_{goal}$ in $\mathcal{G}$ regarding the lazy weight function $\bar{w}$ is returned, i.e., $\pi=\arg\min_{\Pi\in\mathcal{G}}\sum_{e\in\pi}\bar{w}(e)$, where $\Pi$ is the set of all finite cost paths connecting $v_{start}$ to $v_{goal}$ in $\mathcal{G}$.
Otherwise, if upon the termination there is no solution path that connects $v_{start}$ and $v_{goal}$, there is no solution regarding the current search graph.
\end{theorem}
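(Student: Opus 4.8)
The plan is to adapt the correctness arguments of LPA$^*$/D$^*$ Lite~\cite{LPA,DLite} and RRT$^{\sharp}$/RRT$^X$~\cite{RRTsharp,RRTX} to the lazy, informed setting of Algorithm~\ref{ComputeShortestPath}. During one invocation the robot is stationary, so $v_{start}$ and $k_m$ are frozen and the lazy weights $\bar w$ are fixed; let $g^*(v)$ denote the true optimal cost from $v$ to $v_{goal}$ in $\mathcal G$ under $\bar w$. I would reduce the statement to three claims: (i) on exit $lmc(v_{start})=g^*(v_{start})$; (ii) following parent pointers (equivalently, the greedy neighbor minimizing $\bar w(v,u)+lmc(u)$) from $v_{start}$ traces a $\bar w$-optimal $v_{start}$--$v_{goal}$ path; (iii) $lmc(v_{start})=\infty$ precisely when $\mathcal G$ admits no finite-cost $v_{start}$--$v_{goal}$ path, which is the ``otherwise'' clause.

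First I would fix a handful of invariants and verify they are preserved by every operation of Algorithm~\ref{Main} that touches $g$, $lmc$, $\mathcal T$, or $\mathcal Q$ — the initialization, the edge-weight refresh (line~\ref{Main:Refresh}), each $\FuncSty{UpdateNode}$ / $\FuncSty{PropagateCostToLeave}$ / $\FuncSty{ExtendSearchGraph}$ call, and, crucially, each pop inside $\FuncSty{ComputeShortestPath}$: \textbf{(I1)} the parent pointers form a subtree of $\mathcal G$ rooted at $v_{goal}$, and every node with $lmc(v)<\infty$ satisfies $lmc(v)=\bar w(v,\FuncSty{Parent}(v))+lmc(\FuncSty{Parent}(v))$ with $lmc(v_{goal})=0$, so each finite lmc is realized by an explicit path; \textbf{(I2)} every locally inconsistent node lies in $\mathcal Q$; \textbf{(I3)} $lmc(v)\ge g^*(v)$ for all $v$ at all times (a soundness/lower-bound property, immediate by induction using positivity of $\bar w$); and \textbf{(I4)} every locally consistent node whose key does not exceed $\mathcal Q.\FuncSty{topkey}()$ satisfies $lmc(v)=g(v)=g^*(v)$. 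Invariant (I1) makes the returned path well defined, (I3) prevents underestimation, and (I4) is the Dijkstra/A$^*$-style optimality guarantee on the already-settled region.

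Next I would prove key monotonicity: the re-keying step (lines~\ref{ReduceInconsistency:UpdatePriorityBegin}--\ref{ReduceInconsistency:UpdatePriorityEnd}) ensures that, after any node is popped once with a stale key and reinserted, the sequence of genuine pops is non-decreasing in $k_1$, exactly as in D$^*$ Lite. Combining this with admissibility of $h$ and positivity of $\bar w$, I would argue that when a node $v$ is popped and made consistent, no cheaper route from $v$ to $v_{goal}$ through a not-yet-settled node can exist; hence the re-routing loop over $V_{\mathcal T}\cap N(v)$ selects a parent realizing $g^*(v)$, so $lmc(v)=g^*(v)$, and the subsequent loop over $N(v)\setminus\FuncSty{Parent}(v)$ acts as ordinary edge relaxation, so (I1)--(I3) persist and the region satisfying (I4) grows. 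From here both conclusions follow: if $\mathcal G$ has no finite-cost path, the queue empties before $v_{start}$ can become finite-consistent, the loop exits with $\mathcal Q=\emptyset$, (I2) forces global consistency, and (I4) (with $\mathcal Q.\FuncSty{topkey}()=\infty$) yields $lmc(v_{start})=g^*(v_{start})=\infty$; otherwise the stopping test gives $v_{start}\notin\mathcal Q$, $v_{start}$ consistent, and $\mathcal Q.\FuncSty{topkey}()\ge\FuncSty{CalculateKey}(v_{start})$, so (I4) gives $lmc(v_{start})=g^*(v_{start})$ and (I1) lets me read off a path of that cost. To justify restricting expansion to $V_{prom}$, I would observe that every vertex on an optimal $v_{start}$--$v_{goal}$ path has key at most $\FuncSty{CalculateKey}(v_{start})$ — subpaths of optimal paths are optimal under positive weights and $h$ is admissible — so cutting expansion off loses nothing relevant to the returned path; and I would bound the number of queue insertions (each relaxation strictly lowers some $lmc$ toward one of finitely many realizable path costs, and collisions inflate $lmc$ to $\infty$ only finitely often per call) to conclude the procedure terminates.

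The hard part will be establishing (I4) in the presence of the two features that separate this algorithm from plain LPA$^*$: the informed cutoff at $V_{prom}$ and the underconsistent handling, where $\FuncSty{PropagateCostToLeave}$ forces an entire subtree's lmc to $\infty$ and $\FuncSty{ComputeShortestPath}$ must rebuild it. I expect to need a careful case analysis showing (a) every node whose correct $g^*$ value is required to realize the returned $v_{start}$--$v_{goal}$ path is genuinely popped before the stopping test can fire, and (b) in the underconsistent branch the re-routing step never reads a neighbor whose lmc has not yet been corrected during the current pass — the same subtlety that makes the D$^*$ Lite / RRT$^X$ proofs delicate, here compounded by $\bar w$ changing between successive calls, so the invariants must be re-established rather than merely carried forward.
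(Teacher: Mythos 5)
Your proposal follows essentially the same route as the paper's appendix proof: both are LPA$^*$/D$^*$ Lite-style arguments resting on (a) non-decreasing keys of expanded nodes (the paper's Theorem~\ref{theorem_nondecreasing}), (b) the settled-region property that a consistent node with key no greater than the current top key has lmc equal to its true cost-to-goal under $\bar{w}$ (your (I4), the paper's Theorems~\ref{theorem_consistent} and~\ref{theorem_consistent_lmc}), and (c) termination via a bound on re-expansions (the paper's Theorems~\ref{theorem_consistent_once} and~\ref{theorem_inconsistent_twice}, each node expanded at most twice), with the empty-queue case giving the no-solution clause. The delicate points you flag --- the underconsistent branch and the informed cutoff --- are precisely what the paper's Lemma~\ref{lemma_underconsistent} and Theorems~\ref{theorem_inconsistent} and~\ref{theorem_overconsist} are devoted to, so your plan matches the paper's argument.
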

\begin{proof}
    See Appendix \ref{proof_of_theorem_optimality}.
\end{proof}

\begin{theorem}
\label{theorem_optimality}
The LLPT$^*$ algorithm finds the optimal solution with respect to the current search graph with fully-evaluated edges if there exists one when the inner search loop (line \ref{Main:InnerLoopBegin}-\ref{Main:InnerLoopEnd}, Algorithm \ref{Main}) terminates.
\end{theorem}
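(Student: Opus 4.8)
The plan is to combine Theorem~\ref{theorem_shortest} with the observation that the lazy weight $\bar{w}$ never overestimates the true weight $w$. First I would record the lower-bound property precisely. By Eq.~\ref{EdgeWeightUpdate}, $\bar{w}(e)=w(e)$ whenever $e\in E_{eval}$, and $\bar{w}(e)=\hat{w}(e)$ otherwise; since the free-space local trajectory cost $\hat{w}(e)$ coincides with $w(e)$ when $e$ is collision-free and satisfies $\hat{w}(e)<\infty=w(e)$ when $e$ collides, we always have $\bar{w}(e)\le w(e)$, with equality whenever $e$ has been evaluated or is collision-free. Summing along any path $\pi'$ of $\mathcal{G}$ gives $\sum_{e\in\pi'}\bar{w}(e)\le\sum_{e\in\pi'}w(e)$, so any $w$-feasible path is $\bar{w}$-feasible and its $\bar{w}$-cost lower-bounds its $w$-cost.

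Next I would check that every call of \FuncSty{ComputeShortestPath} inside the inner loop is made with $\mathcal{T}$, the $lmc$/$g$ labels, and the queue $\mathcal{Q}$ in the state required by Theorem~\ref{theorem_shortest}, so that the theorem applies verbatim to the returned $\pi$. This bookkeeping is the heart of the matter: $v_{start}$ and the node/edge sets of $\mathcal{G}$ are fixed throughout one execution of the inner loop, and only $E_{eval}$ grows; when \FuncSty{EvaluateEdge} exposes a collision on $e=(v,u)$ the weight $\bar{w}(e)$ jumps from $\hat{w}(e)$ to $\infty$, and \FuncSty{PropagateCostToLeave} restores the invariant by detaching $v$ together with all its $\mathcal{T}$-descendants, resetting their $lmc$ to $\infty$ and pushing them onto $\mathcal{Q}$ --- precisely the nodes whose cost-to-goal is affected, by Observation~\ref{underconsistent}; when an evaluated edge is collision-free, $\bar{w}(e)$ is unchanged, so no repair is needed; and the edges whose weights changed at the start of the replanning cycle were already handed to \FuncSty{UpdateNode}. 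Hence the precondition of Theorem~\ref{theorem_shortest} holds at each iteration, and every returned $\pi$ is a $\bar{w}$-shortest $v_{start}$--$v_{goal}$ path, or a certificate that no finite-$\bar{w}$-cost such path exists in $\mathcal{G}$.

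Then I would do the case analysis on how the inner loop exits. If it exits by the \textbf{break} because the last \FuncSty{ComputeShortestPath} returned no path through both $v_{start}$ and $v_{goal}$, Theorem~\ref{theorem_shortest} says $\mathcal{G}$ contains no finite-$\bar{w}$-cost $v_{start}$--$v_{goal}$ path; since $\bar{w}\le w$, a $w$-feasible path would be $\bar{w}$-feasible, so the fully-evaluated graph also has no solution --- which matches the ``if there exists one'' clause. Otherwise the loop exits through its termination condition with $\pi\subset E_{eval}$ and $\pi\cap E_{collided}=\phi$, so every edge of $\pi$ is evaluated and collision-free and thus $\bar{w}(e)=w(e)$ on $\pi$, giving $\sum_{e\in\pi}w(e)=\sum_{e\in\pi}\bar{w}(e)$. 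For any competing path $\pi'$ in $\mathcal{G}$, the $\bar{w}$-optimality of $\pi$ from step two and the lower bound from step one yield $\sum_{e\in\pi}w(e)=\sum_{e\in\pi}\bar{w}(e)\le\sum_{e\in\pi'}\bar{w}(e)\le\sum_{e\in\pi'}w(e)$, so $\pi$ minimizes the true cost over $\mathcal{G}$, i.e., it is the optimal solution in the search graph with fully-evaluated edges.

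Finally I would remark that the inner loop does terminate, so the hypothesis is non-vacuous: $E_{eval}$ never shrinks during a replanning cycle, and each non-breaking pass of the loop either already satisfies the exit condition or has \FuncSty{EvaluateEdge} move at least one new edge of $\pi$ into $E_{eval}$, which can happen only finitely often since $E$ is finite. I expect the main obstacle to be step two --- arguing that \FuncSty{PropagateCostToLeave} together with the reinsertions into $\mathcal{Q}$ re-establish exactly the precondition Theorem~\ref{theorem_shortest} needs (neither too little, leaving a stale inconsistency, nor leaning on something that theorem does not provide); once that invariant is pinned down, the admissibility estimate and the concluding chain of inequalities are routine.
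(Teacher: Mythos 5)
Your proposal is correct and follows essentially the same route as the paper's proof: invoke Theorem~\ref{theorem_shortest}, split on whether the inner loop exits with a fully-evaluated collision-free path or with no path, use $\bar{w}\le w$ to rule out a solution in the fully-evaluated graph in the latter case, and conclude via the chain $\sum_{e\in\pi}w(e)=\sum_{e\in\pi}\bar{w}(e)\le\sum_{e\in\pi^*}\bar{w}(e)\le\sum_{e\in\pi^*}w(e)$ in the former. Your version is in fact slightly more careful than the paper's, which states the bound as a strict inequality $\bar{w}(e)<w(e)$ and leaves implicit both the invariant-maintenance argument for repeated calls to \FuncSty{ComputeShortestPath} and the termination of the inner loop.
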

\begin{proof}
The inner search loop terminates when 
(1) the $\FuncSty{ComputeShortestPath}$ procedure terminates without a solution path that connects $v_{start}$ and $v_{goal}$, and 
(2) the $\FuncSty{ComputeShortestPath}$ procedure terminates with a solution path $\pi$ which is fully evaluated and is found to be collision-free.
For the first case, as proven by Theorem \ref{theorem_shortest}, there is no solution regarding the lazy weight function $\bar{w}$. 
As $\bar{w}(e)<w(e)$ for $\forall e\in E$, the number of collision-free edges the fully-evaluated search graph has must be no more than that of the lazy-evaluated search graph. 
Hence, if the lazy-valuated search graph does not yield a solution, the fully-evaluated one can not yield one either.
For the second case, as proven by Theorem \ref{theorem_shortest}, $\pi$ is the optimal path from $v_{start}$ to $v_{goal}$ regarding the lazy weight function $\bar{w}$. 
If $\pi\in E_{eval}$, we have 
${w}(\pi)=\sum_{e\in\pi}{w}(e)=\sum_{e\in\pi}\bar{w}(e)\leq\sum_{e\in\pi^*}\bar{w}(e)
\leq\sum_{e\in\pi^*}{w}(e)$, where $\pi^*$ is the optimal solution path w.r.t. the current search graph with fully-evaluated edges. 
Therefore, $\pi$ is the optimal path w.r.t. the current search graph.
\end{proof}

\begin{theorem}[Asymptotical Optimality]
\label{AsymptoticalOptimality}
The LLPT$^*$ algorithm is asymptotically optimal as planning time $t\rightarrow\infty$, that is, LLPT$^*$ will return the optimal solution path w.r.t. the known environment and the cost function $w(\cdot)$ given infinite planning time.
\end{theorem}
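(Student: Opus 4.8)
The plan is to obtain Theorem~\ref{AsymptoticalOptimality} by combining the per-iteration optimality of Theorem~\ref{theorem_optimality} with a standard Rapidly-exploring Random Graph (RRG) densification argument, exactly as asymptotic optimality is argued for RRT$^{*}$/RRG$^{*}$. Throughout the limit I would treat $v_{start}$ and $v_{goal}$ as fixed (equivalently, analyze a single query before the robot begins moving, the convention also used for RT-RRT$^{*}$ and RRT$^{X}$) and assume the environment ultimately stabilizes, so that ``the known environment'' is well defined; as usual I would assume $\mu(\mathcal{X}_{free})\in(0,\infty)$ and that the optimal path $\pi^{*}$ is the limit of a sequence of robustly feasible (positive-clearance) paths.

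First I would show that each invocation of the inner search loop (line~\ref{Main:InnerLoopBegin}--\ref{Main:InnerLoopEnd}, Algorithm~\ref{Main}) halts in finite time. At any instant $\mathcal{G}$ has finitely many nodes and edges; inside a replanning cycle $E_{eval}$ only grows, and \FuncSty{EvaluateEdge} appends at least one fresh edge whenever the current candidate path is not yet fully evaluated, so the \textbf{Repeat} loop exits after finitely many rounds; \FuncSty{PropagateCostToLeave} recurses over a finite subtree; and \FuncSty{ComputeShortestPath} terminates by the same finite-node, key-monotonicity reasoning that underlies Theorem~\ref{theorem_shortest}. Hence \FuncSty{ExtendSearchGraph} is reached, and since the outer replanning procedure runs indefinitely it is called infinitely often as $t\to\infty$. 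Each such call draws $v_{rand}$ i.i.d.\ uniformly from $\mathcal{X}$ and keeps it whenever it falls in $\mathcal{X}_{free}$, an event of fixed positive probability, so by the (second) Borel--Cantelli lemma the number of stored samples $N=|V|\to\infty$ almost surely.

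Next I would invoke the RRG asymptotic-optimality results of \cite{RRTstar}: \FuncSty{ExtendSearchGraph} uses precisely the prescribed connection radius $r(N)=\gamma_{s}\,2(1+1/d)^{1/d}(\mu(\mathcal{X}_{free})/\zeta_{d})^{1/d}(\log N / N)^{1/d}$, so for $\gamma_{s}$ above the critical constant the graph $\mathcal{G}_{N}$ after $N$ samples almost surely contains a path $\pi_{N}$ from $v_{start}$ to $v_{goal}$ with $\mathrm{cost}(\pi_{N})\to\mathrm{cost}(\pi^{*})$. Now I would close the loop with Theorem~\ref{theorem_optimality}: whenever the inner loop terminates, LLPT$^{*}$ holds the optimal path of the current graph with fully-evaluated edges, whose true cost therefore lies between $\mathrm{cost}(\pi^{*})$ (because $\mathcal{G}_{N}$ is embedded in $\mathcal{X}_{free}$ and the evaluated edge weights equal the true costs) and $\mathrm{cost}(\pi_{N})$. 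Sending $t\to\infty$, hence $N\to\infty$, squeezes the returned cost to $\mathrm{cost}(\pi^{*})$ almost surely. Laziness causes no trouble here: by Theorem~\ref{theorem_shortest} the lazily optimal path (using $\bar{w}\le w$) is produced before any evaluation, each evaluation only raises or confirms an edge weight, and the inner loop does not exit until the reported path is fully evaluated and collision-free, so at termination the lazy optimum coincides with the true graph optimum handled by Theorem~\ref{theorem_optimality}.

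The hard part will be ruling out that the dynamic bookkeeping between replanning cycles stalls this convergence. At the top of each cycle the weights are refreshed by the lazy rule \eqref{EdgeWeightUpdate}, which resets $E_{eval}$ and can demote previously evaluated edges to their heuristic values; I would need to argue that once the environment has stopped changing, the edges along the current candidate optima are re-evaluated within finitely many cycles, so that the algorithm settles on the true $\mathcal{G}_{N}$-optimum rather than cycling between lazy candidates. A second delicate point is the \emph{promising}-node pruning in \FuncSty{ComputeShortestPath}: restricting expansion to nodes with key below $k(v_{start})$ must not hide the asymptotically optimal route, but this is exactly the guarantee of Theorem~\ref{theorem_shortest}, which I would rely on rather than reprove. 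Finally I would record the standing caveat, inherited from RT-RRT$^{*}$/RRT$^{X}$, that the claim concerns the planner's solution for a fixed query and a fixed known environment; interleaved robot motion only affects feasibility and lies outside the limit considered here.
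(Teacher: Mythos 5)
Your proposal follows essentially the same route as the paper's own proof: infinite densification via \FuncSty{ExtendSearchGraph} yields an RRG subgraph that almost surely contains the optimum by the results of \cite{RRTstar}, and Theorem~\ref{theorem_optimality} guarantees that each terminating inner loop returns the optimum of the current fully-evaluated graph, so the returned cost converges. Your additions (finite termination of the inner loop so that densification actually occurs infinitely often, the Borel--Cantelli step, and the caveats about re-evaluation across replanning cycles and promising-node pruning) are elaborations of points the paper's much terser argument leaves implicit, not a different approach.
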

\begin{proof}
The LLPT$^*$ algorithm will extend the underlying search graph infinitely by the $\FuncSty{ExtendSearchGraph}$ procedure when there is planning time remaining. 
The $\FuncSty{ExtendSearchGraph}$ procedure will create a search graph $\mathcal{G}$ of which the RRG whose edge weight is evaluated by the lazy weight function $\bar{w}$ is a subgraph, which is proven to be surely containing an optimal solution with finite cost for any non-negative cost function as long as there exists one when the number of nodes is approaching $\infty$ \cite{RRTstar}.
As proven by Theorem \ref{theorem_optimality}, the $\FuncSty{ComputeShorestPath}$ procedure will return the optimal solution w.r.t. $\mathcal{G}$ with fully-evaluated edges, or equivalently, the RRG with fully-evaluated edges. 
Therefore, LLPT$^*$ will eventually converge to the optimal solution as long as there exists one.
\end{proof}

\section{Simulation and Experiment Results}
\label{Simulation}

\begin{figure*}[htbp]
\centering
  \subfloat[$\mathbb{SE}(3)$ Cubicles]{\includegraphics[width = 0.26\textwidth, , height=0.215\textwidth]{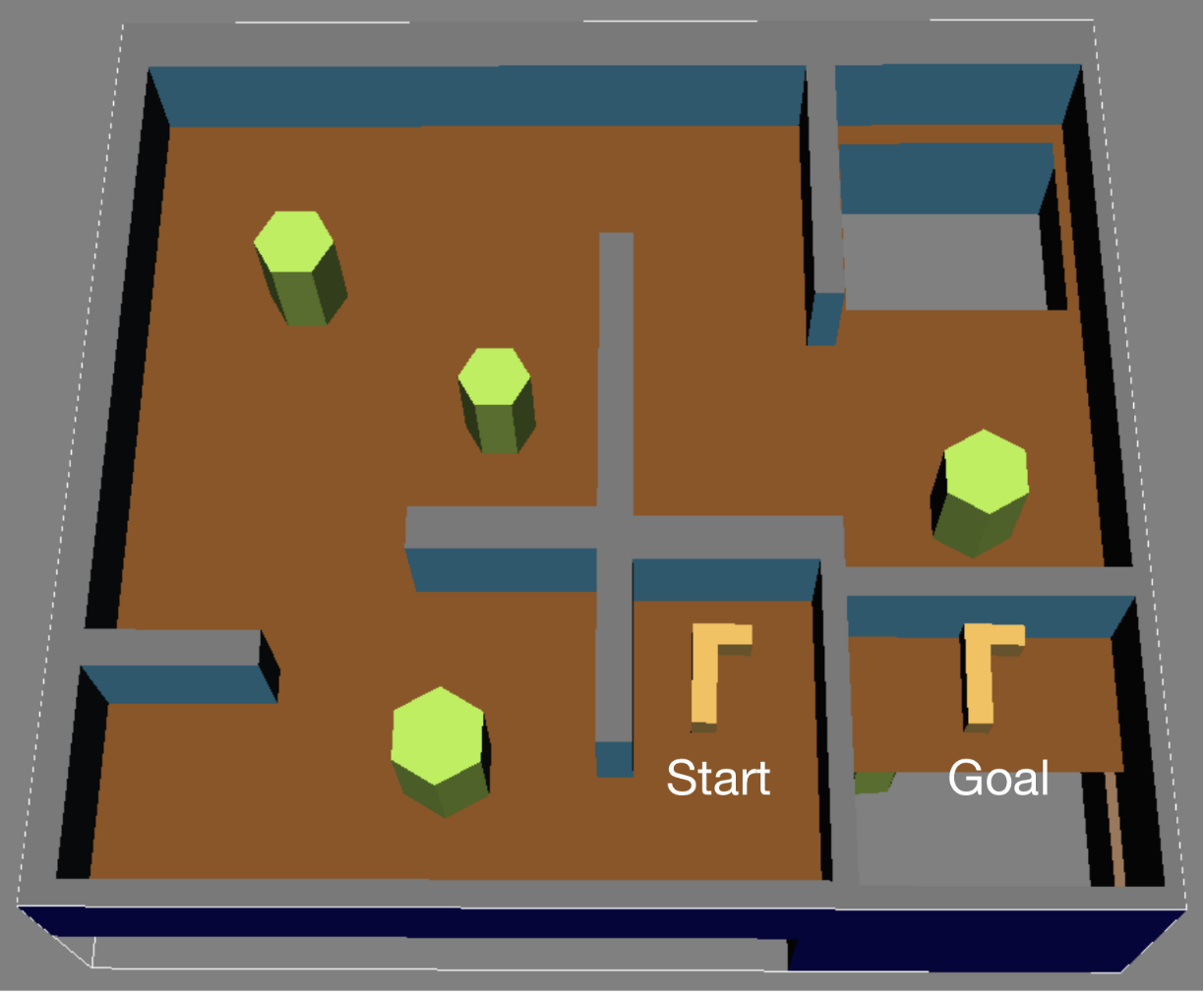}}\hfil
  \subfloat[$\mathbb{SE}(3)$ Piano Mover]{\includegraphics[width = 0.278\textwidth, height=0.215\textwidth]{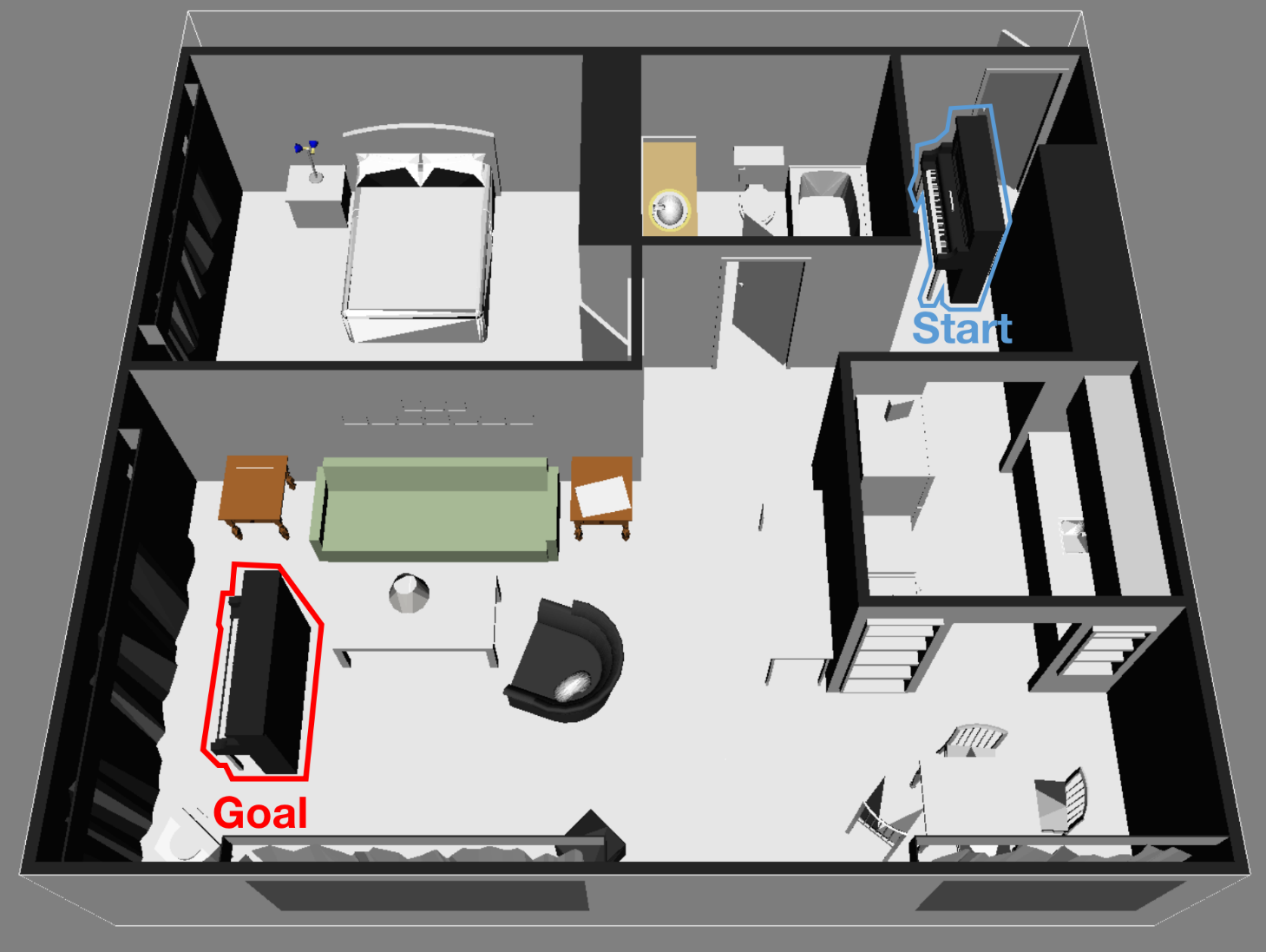}}\hfil
  \subfloat[$\mathbb{R}^7$ Emika Franka Robot Arm]
  {\includegraphics[width = 0.41\textwidth]{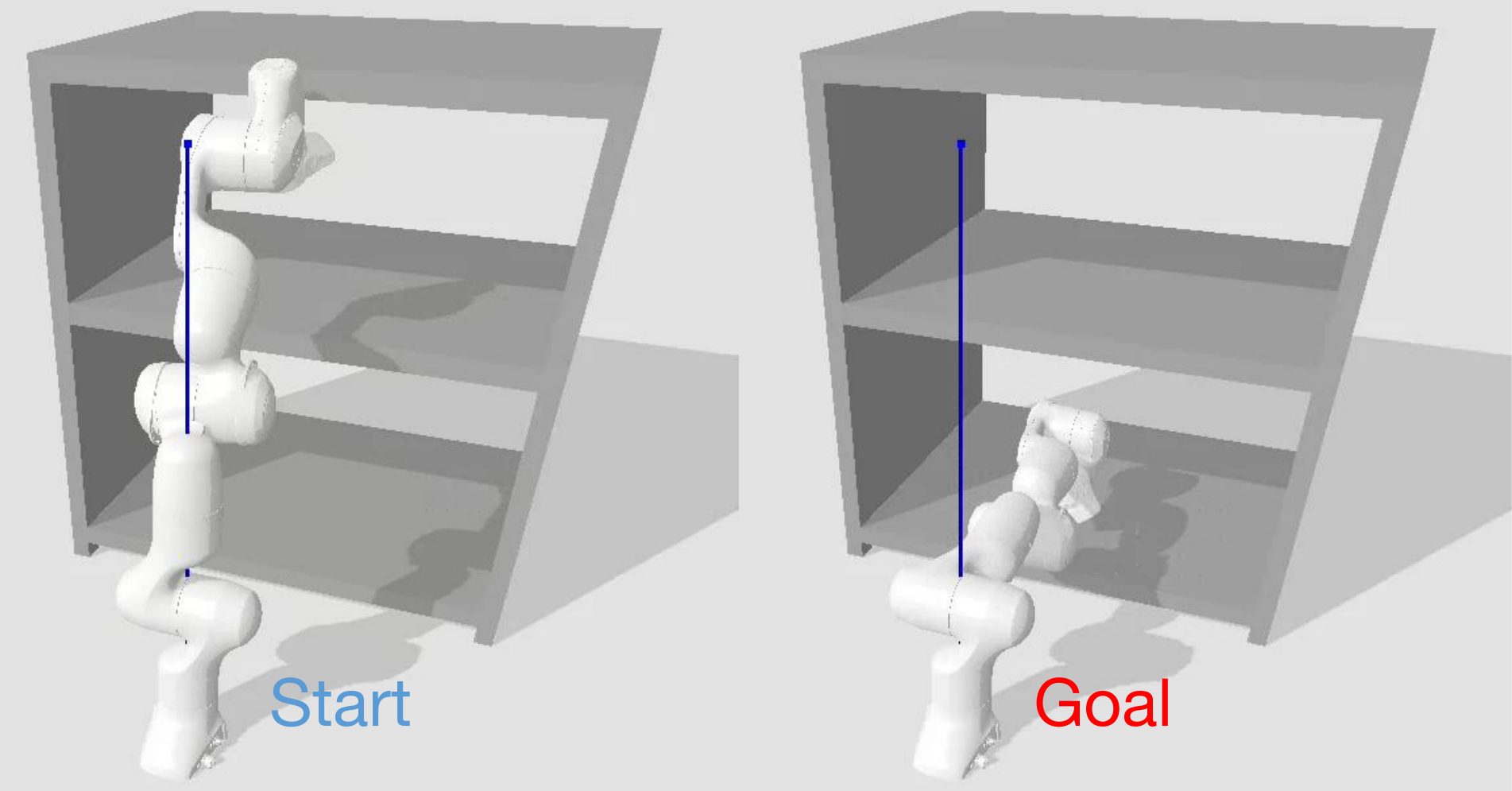}}
\caption{Depictions of rigid-body motion planning problems from the OMPL.app and PyBullet.}
\label{fig:StaticEnv}
\end{figure*}

\begin{figure*}[h]
\centering
  \subfloat[Cubicles, $\mathbb{SE}(3)$]{\includegraphics[width = 0.33\textwidth, height = 0.26\textwidth]{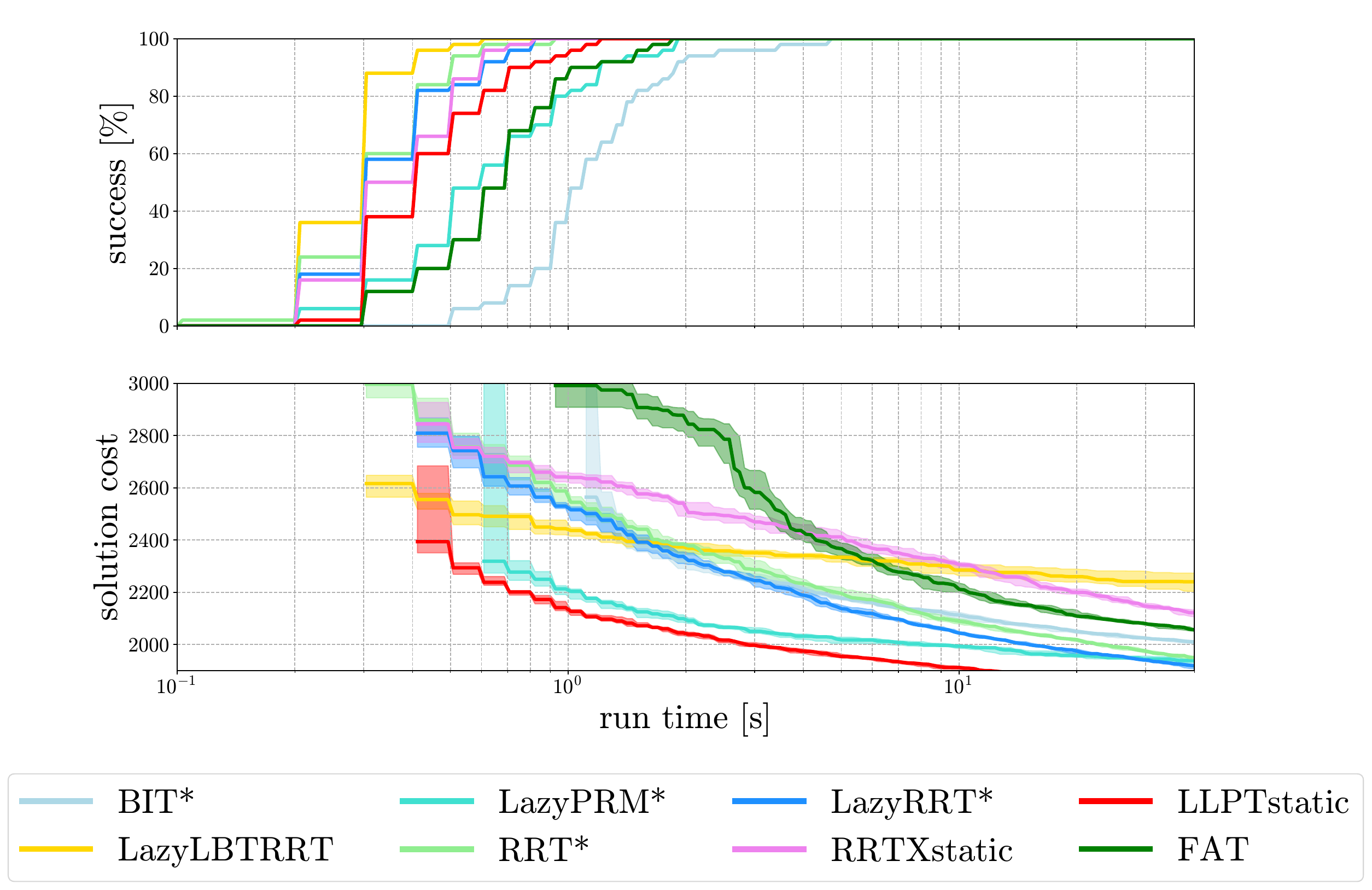}}\hfill
  \subfloat[Piano Mover, $\mathbb{SE}(3)$]{\includegraphics[width = 0.33\textwidth,  height = 0.26\textwidth]{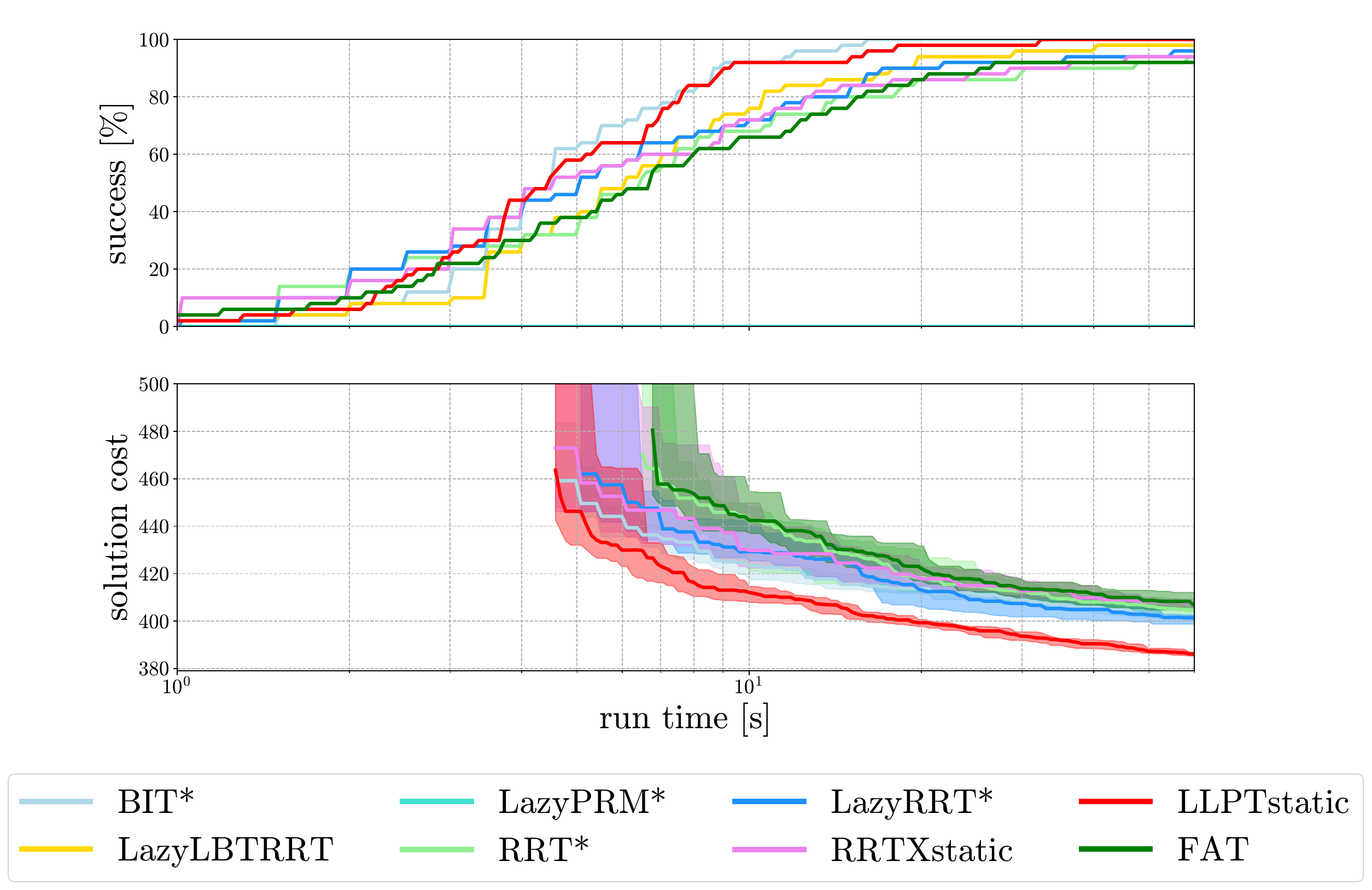}}\hfill
  \subfloat[Manipulation, $\mathbb{R}^{7}$]{\includegraphics[width = 0.33\textwidth,  height = 0.26\textwidth]{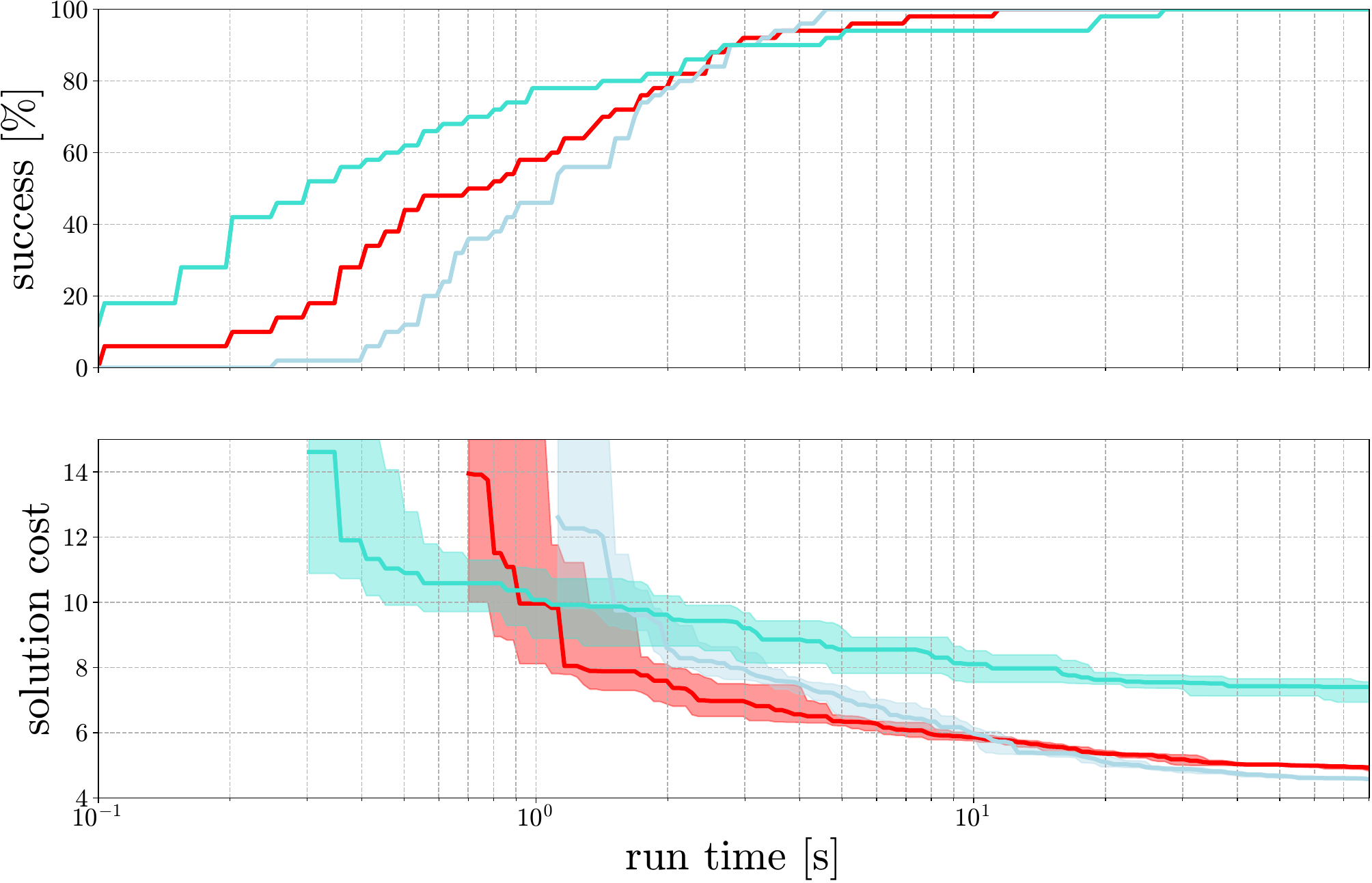}}

  \subfloat{\includegraphics[width = 0.6\textwidth]{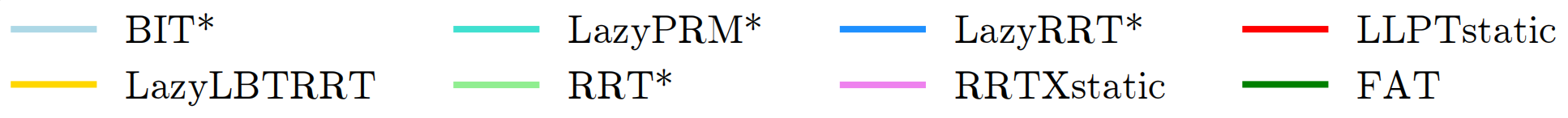}}
\caption{Success rate and medium solution cost versus time for the static path planning problems illustrated in Fig.\ref{fig:StaticEnv}. Each planner was tested with 50 different pseudo-random seeds. 
The median values are plotted with error bars denoting a non-parametric 95$\%$ confidence interval on the median.} 
\label{fig:Simulation_Static}
\end{figure*}

In this section, we present simulation results comparing the performance of LLPT$^*$ vs. other sampling-based planning algorithms with or without lazy search in solving static or dynamic path planning problems \footnote{All simulations were run on Ubuntu 22.04 on an Intel® Core™ Ultra processor (Series 1) CPU machine with 32 GB of RAM.}.
At the end of this section, we show the results of the proposed algorithm in real-world experiments.

\subsection{Baseline Algorithms}
We compare LLPT$^*$ against RRT$^*$\cite{RRTstar}, RRT$^*$ with lazy collision check (denoted as LazyRRT$^*$), LazyPRM$^*$\cite{LazyRRT}, BIT$^*$\cite{BIT}, RRT$^X$\cite{RRTX}, DRRT\cite{DRRT}, LazyLBTRRT\cite{LBTRRT}, and FAT$^*$\cite{FAT}.
In all the plots, We use 'static' as a suffix to distinguish a lifelong planner from its static version \footnote{The static version can be regarded as an instance of the lifelong version with infinite replanning intervals.}. 
To ensure a fair comparison, each planning algorithm was tested using the Open Motion Planning Library (OMPL) v1.6.0\cite{OMPL}.
For the benchmarking algorithms, we used the default OMPL settings.
We ensured that LLPT$^*$ used the same tuning parameters and configurations as the benchmarking algorithms whenever possible.
The Euclidean norm is used as the weight and the heuristic functions for all problems.
The edge resolution is set as 0.01 for collision check.
We use the r-disk RRG$^*$ as the underlying search graph for all RRT$^*$-like planners. 
The evaluation parameter $\alpha$ of LLPT$^*$ is set as 100 for all cases.

\begin{figure*}[t]
\centering
  \subfloat[$\mathbb{SE}(3)$ Cubicles, Scene 1]{\includegraphics[width = 0.26\textwidth, , height=0.215\textwidth]{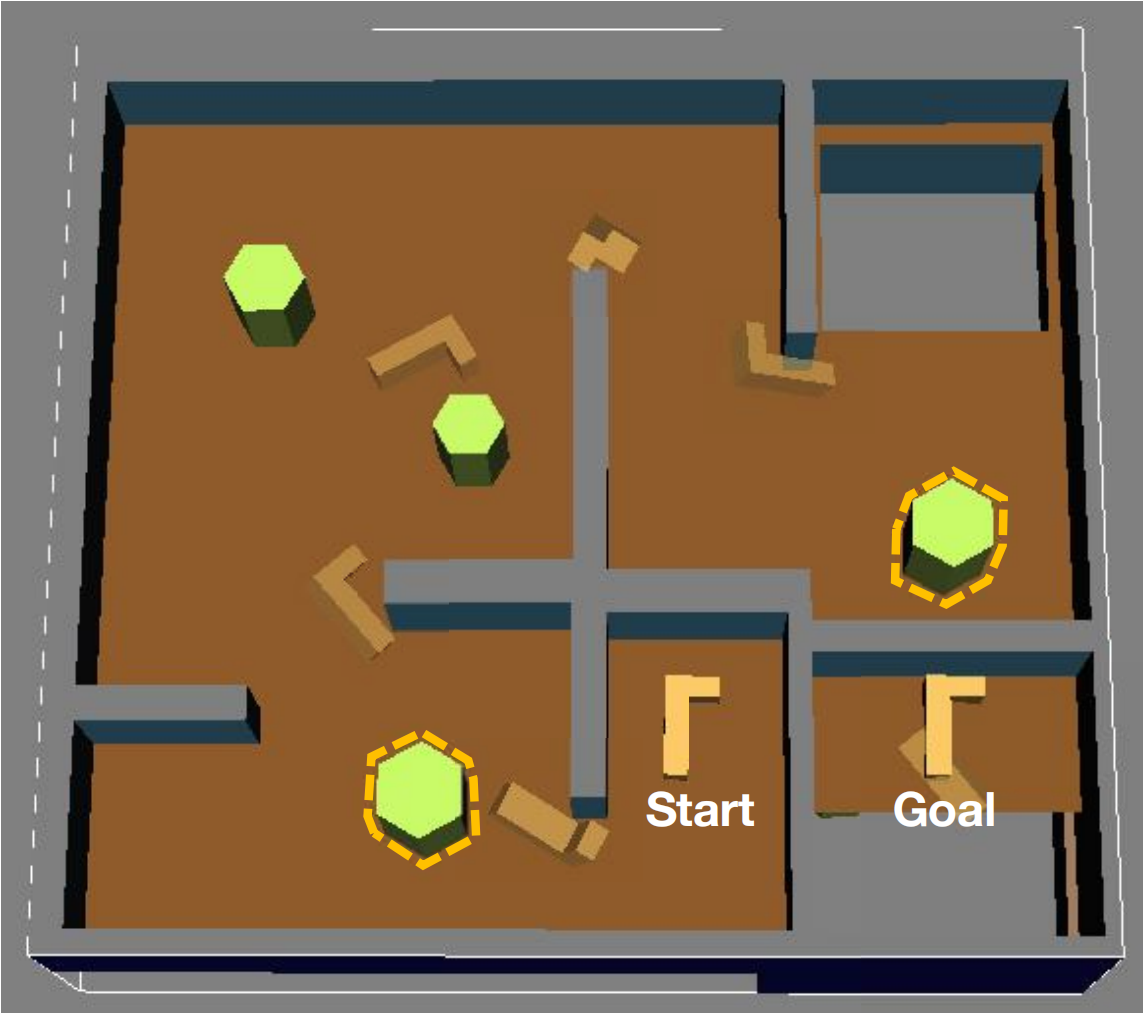}}\hfil
  \subfloat[$\mathbb{SE}(3)$ Cubicles, Scene 2]{\includegraphics[width = 0.26\textwidth, , height=0.215\textwidth]{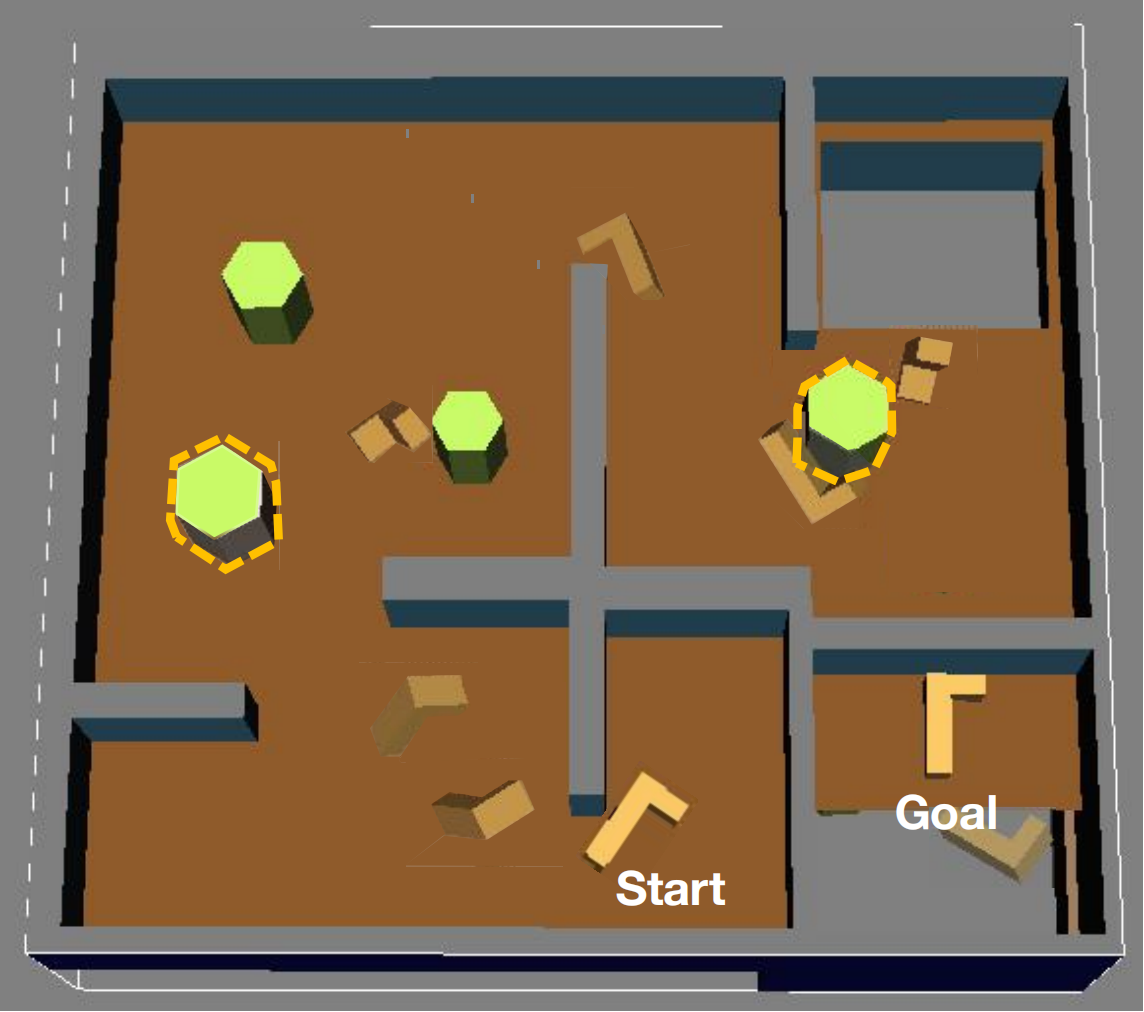}}\hfil
  \subfloat[$\mathbb{SE}(3)$ Cubicles, Scene 3]{\includegraphics[width = 0.26\textwidth, , height=0.215\textwidth]{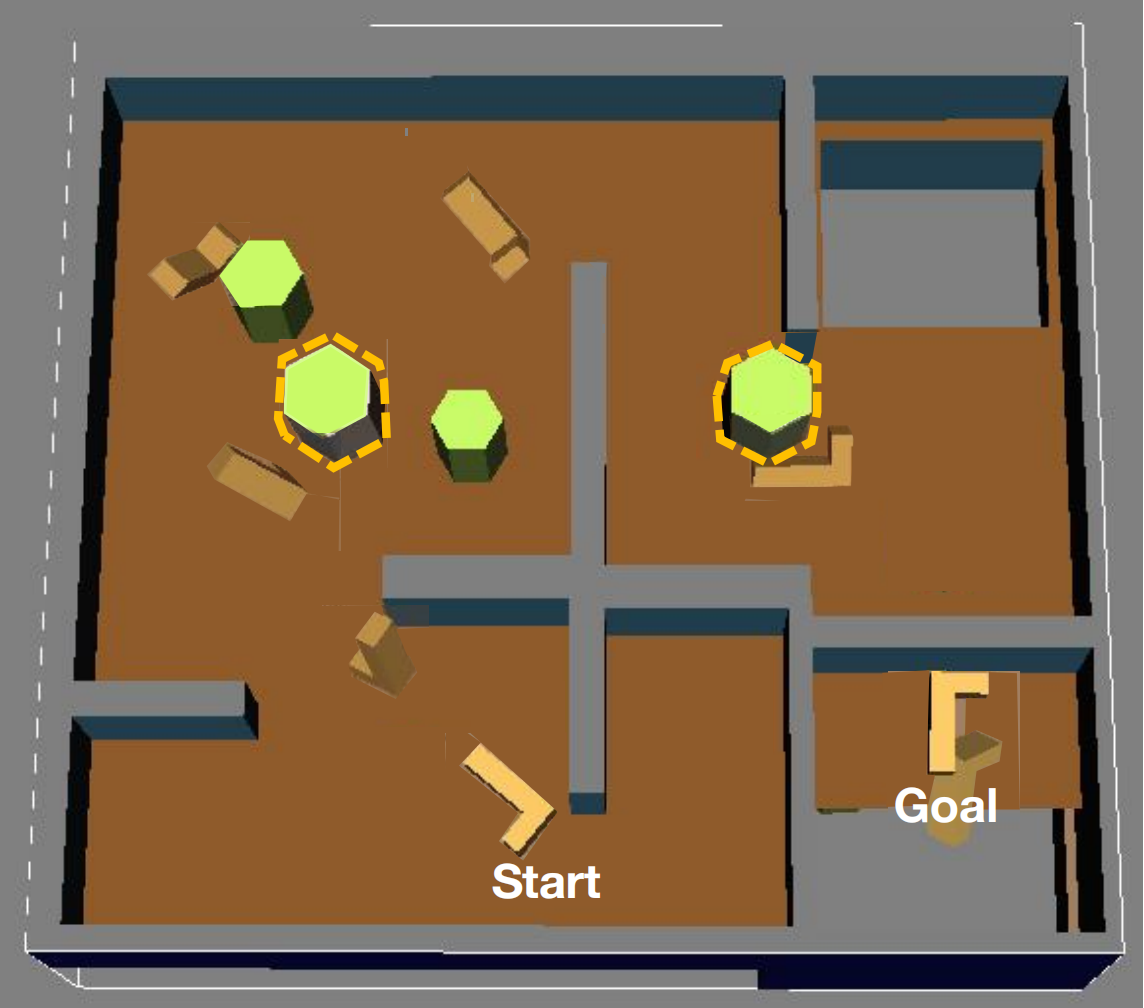}}

  \subfloat[$\mathbb{SE}(3)$ Piano Movers, Scene 1]{\includegraphics[width = 0.26\textwidth, , height=0.215\textwidth]{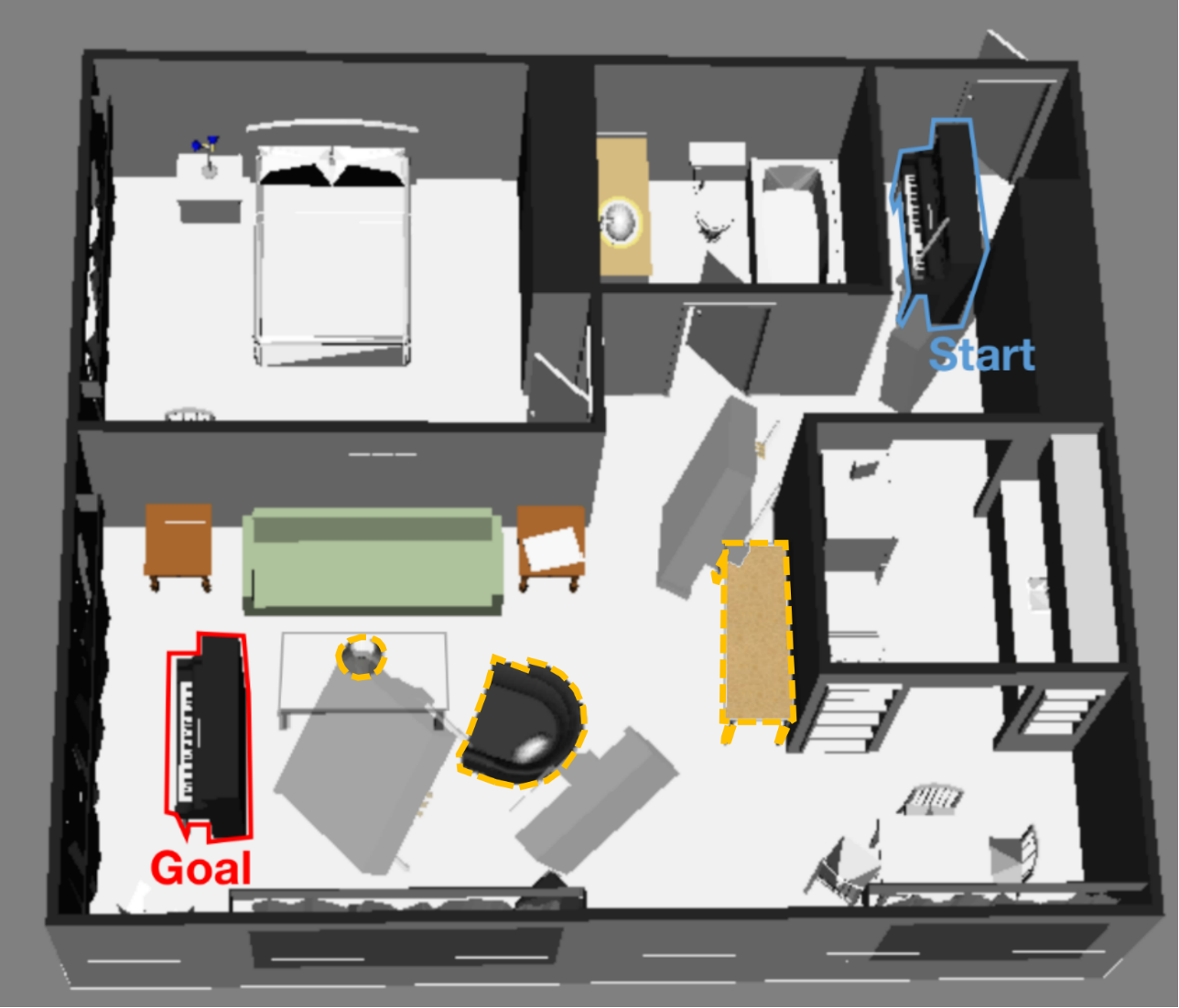}}\hfil
  \subfloat[$\mathbb{SE}(3)$ Piano Movers, Scene 2. 
  The lamp and the sofa have been moved.
  ]{\includegraphics[width = 0.26\textwidth, , height=0.215\textwidth]{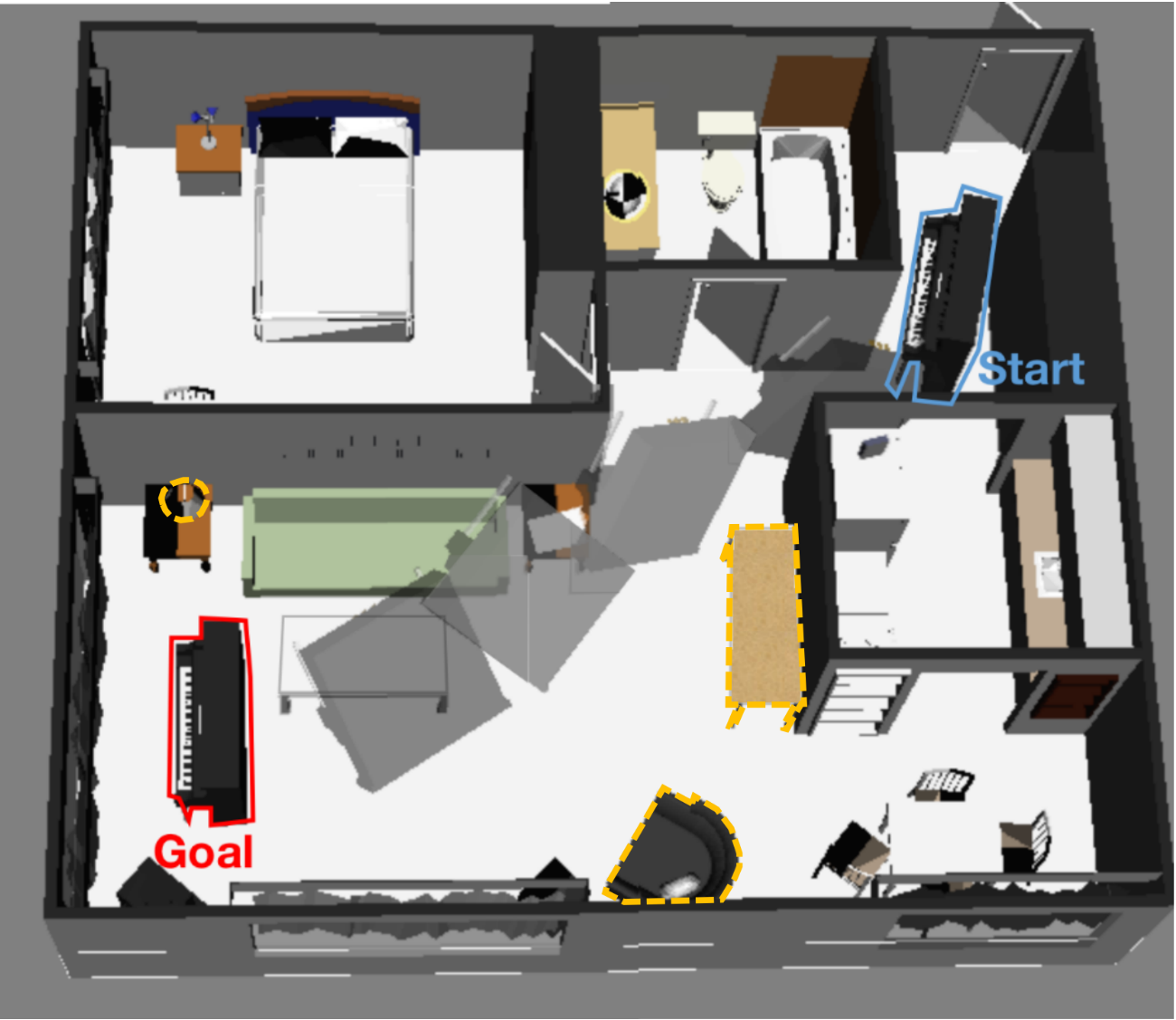}}\hfil
  \subfloat[$\mathbb{SE}(3)$ Piano Movers, Scene 3. 
  The wooden desk has been moved.
  ]{\includegraphics[width = 0.26\textwidth, , height=0.215\textwidth]{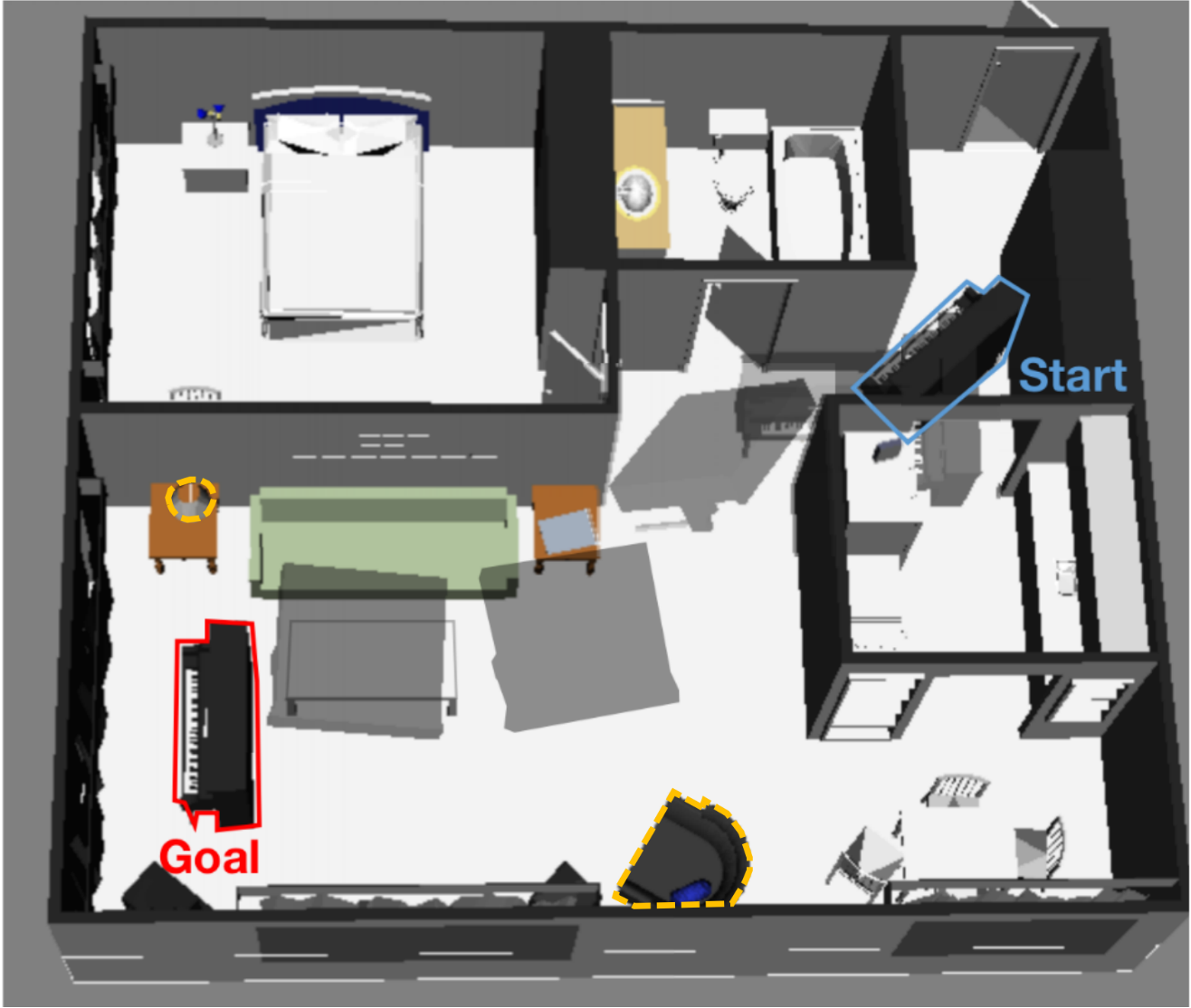}}

  \subfloat[$\mathbb{R}^7$ Manipulation, Scene 1]{\includegraphics[width = 0.26\textwidth, , height=0.25\textwidth]{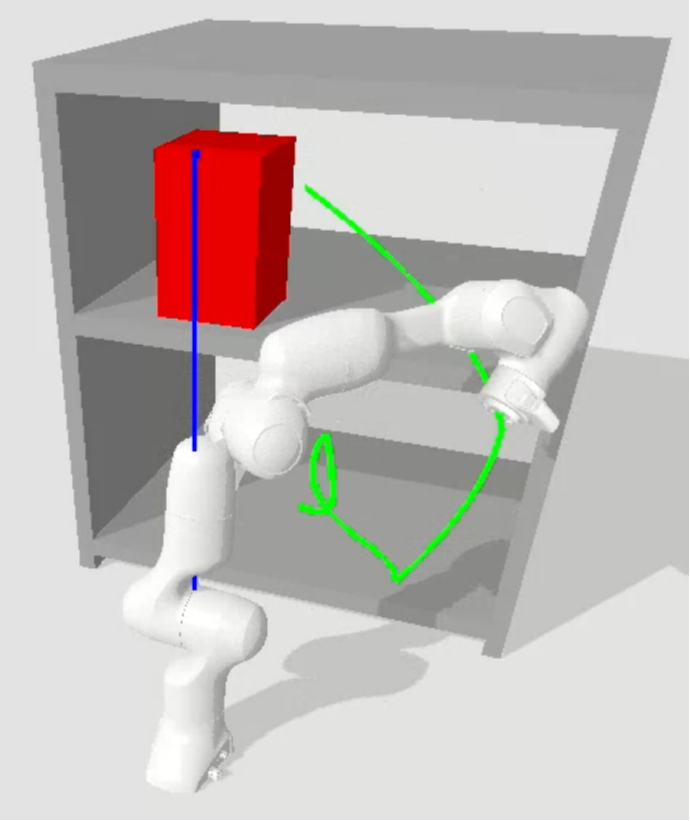}}\hfil
  \subfloat[$\mathbb{R}^7$ Manipulation, Scene 2]{\includegraphics[width = 0.26\textwidth, , height=0.25\textwidth]{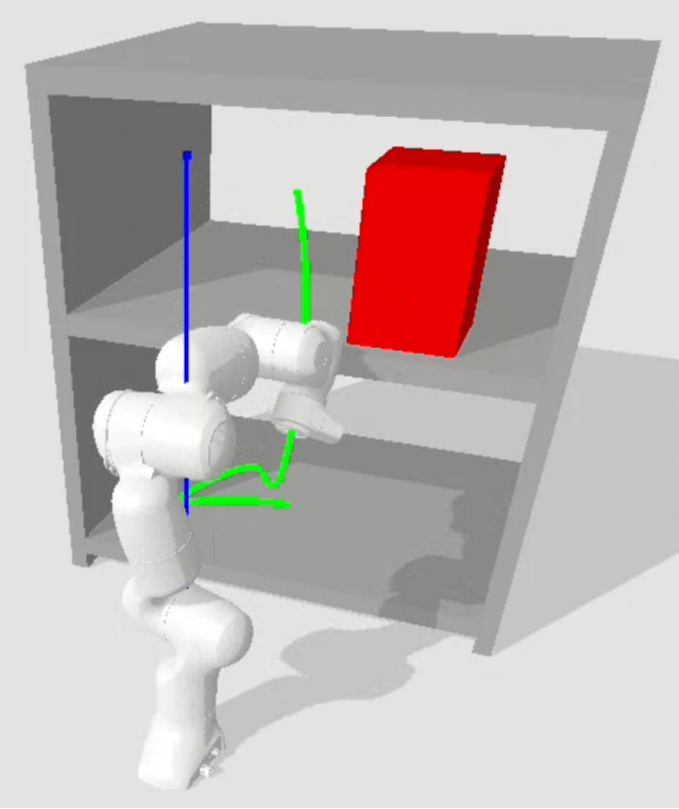}}\hfil
  \subfloat[$\mathbb{R}^7$ Manipulation, Scene 3]{\includegraphics[width = 0.26\textwidth, , height=0.25\textwidth]{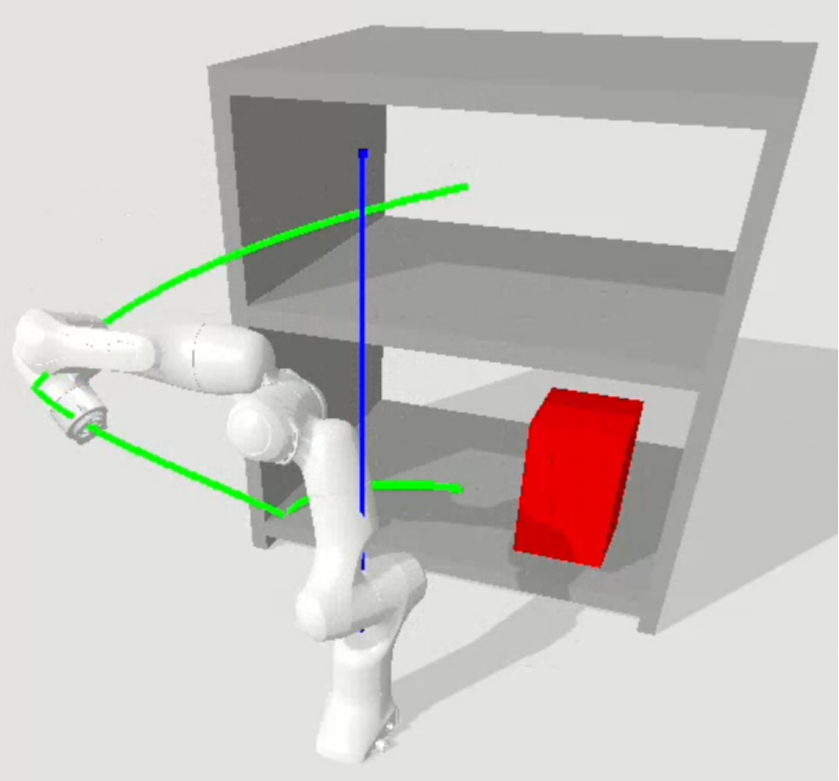}}
\caption{Depictions of rigid-body motion planning problems in dynamic environments. 
The dynamic obstacles are highlighted in orange.
The shortest collision-free paths for different scenes are displayed in the figures.}
\label{fig:DynamicEnv}
\end{figure*}

\subsection{Static Motion Planning Problems}
We considered the following problems from the OMPL's test suite:
\begin{itemize}
    \item the cubicles problem in $\mathbb{SE}(3)$ as shown in Fig.\ref{fig:StaticEnv}(a);
    \item the piano movers' problem in $\mathbb{SE}(3)$ as shown in Fig.\ref{fig:StaticEnv}(c).
\end{itemize}
Besides, we also consider the following problem to investigate the planners' performance in high-dimensional configuration spaces:
\begin{itemize}
    \item the 7-DoF manipulation planning problem on the Franka Emika Panda robot as shown in Fig.\ref{fig:StaticEnv}(d), where the Franka Emika Panda robot simulates picking/placing objects between the layers of a bookshelf. 
\end{itemize}

The simulation results are displayed in Fig.\ref{fig:Simulation_Static}, showcasing success rate and medium solution cost plotted against run time.  
Several motion planners are not shown due to their performance.
At the beginning of the planning, the success rate of LLPT$^*$ is lower than that of some other planners since it uses more computational time to find an initial solution.
However, as planning proceeds, LLPT$^*$ soon emerges as the top performer among all algorithms in all four scenarios, excelling in success rate and convergence rate towards the optimal solution. 
These results demonstrate the efficacy of lazily delaying edge evaluations in solving high-dimensional optimal motion planning problems.

\subsection{Dynamic Motion Planning Problems}
\label{Simulation:DMPP}
The planners were evaluated in three dynamic scenarios where both the positions of obstacles and the robot evolve over time, as illustrated in Fig.\ref{fig:DynamicEnv}. For each motion planner, we conducted three consecutive planning sessions: the first in Scene 1, the second in Scene 2, and the third in Scene 3. Planners without lifelong planning capabilities must replan from scratch each time the environment changes.
Table\ref{Table1} reports each planner’s performance in terms of success rate and average solution cost under different replanning time constraints. All results are averaged over 50 trials with different random seeds. To penalize failure cases, the solution cost for unsuccessful trials is set to a large value significantly higher than that of any successful one.

The results demonstrate that LLPT$^*$ consistently enables effective replanning across all dynamic environments. It achieves substantially better average performance than other planners under the same replanning time limits. In particular, as more replanning time is provided, LLPT$^*$ significantly outperforms others in terms of average solution cost.
Notably, the lifelong planners DRRT and RRT$^X$ perform considerably worse than LLPT$^*$ and even some static planners. DRRT’s subpar performance stems from its lack of asymptotic optimality, while RRT$^X$ suffers from high computational overhead due to its costly edge evaluation process.
As replanning time decreases, most planners—especially non-lifelong ones—experience a dramatic drop in success rate and a spike in solution cost. In contrast, LLPT$^*$ maintains a high success rate, low average solution cost, and low standard deviation, demonstrating strong robustness and efficiency in highly dynamic environments.


\begin{table*}[t]
\centering
\caption{Success rate (\%) and Average Solution Cost of Different Motion Planners under Different Replanning Time for Various Problems Shown in Fig.\ref{fig:DynamicEnv}}
\resizebox{\textwidth}{!}{
\begin{tabular}{c c| ccc | ccc | ccc}
\toprule
& \multirow{2}{*}{\makecell{Motion \\ Planner}} & \multicolumn{3}{c}{Cubicles} & \multicolumn{3}{c}{Piano Mover} & \multicolumn{3}{c}{Manipulation} \\
\cmidrule(lr){3-5} \cmidrule(lr){6-8} \cmidrule(lr){9-11}
& & T=2s & T=5s & T=10s & T=5s & T=10s & T=20s & T=2s & T=5s & T=10s \\
\midrule
\multirow{9}{*}{\rotatebox{90}{\textbf{Success Rate (\%)}}}
& RRT$^*$     & 100 & 100  & 100 & 26 & 54  & 66  &  0  &  0  &  0 \\
& BIT$^*$     & 72  & 100  & 100 & 30 & 90  & 100 & 38  & 72  & \textbf{98}  \\
& LazyPRM$^*$ & 94  & 100  & 100 & 2  & 4   & 6   & 32  & 46  & 70  \\
& LazyRRT$^*$ & 100 & 100  & 100 & 28 & 46  & 72  & 0   &  0  &  0 \\
& LazyLBTRRT  & 100 & 100  & 100 & 34 & 60  & 66  & 0   &  0  &  0 \\
& RRT$^X$     & 0   & 0    & 0   & 0  & 0   &  0  & 0   &  0  &  0 \\
& DRRT        & 88  & 100  & 100 & 20 & 32  & 36  & 66  &  \textbf{90} & 94  \\
& \textbf{LLPT$^*$}    & \textbf{100} & \textbf{100}  & \textbf{100} & \textbf{50} &  \textbf{90} & \textbf{100} & \textbf{76}  &  88 & 94  \\
\midrule
\multirow{9}{*}{\rotatebox{90}{\textbf{Solution Cost }}}
& RRT$^*$     & 1803.5±32.1 & 1738.9±22.2 & 1691.9±20.1 &  388.1±39.4 & 362.1±34.5 & 355.2±35.7 & -  & -  & -  \\
& BIT$^*$     & 1895.0±236.0& 1737.9±22.7 & 1699.4±22.1  & 385.6±43.8  & 340.0±16.3  & 329.1±6.1  & 17.1±8.3  & 11.0±6.3  & 6.9±1.3 \\
& LazyPRM$^*$ & 1766.7±98.5 &1706.1±20.7  & 1686.7±19.62 & 469.7±40.1  &  451.5±50.2  & 433.9±39.0  & 17.7±7.4  & 15.1±7.0  &  10.7±3.8 \\
& LazyRRT$^*$ & 1807.7±23.2 &1739.7±21.0  & 1692.6±21.6  & 387.3±39.9  & 363.1±31.2  &  350.7±33.2 & -  & -  & -  \\
& LazyLBTRRT  & 1797.4±33.0 &1765.6±36.9  & 1742.5±27.7  & 386.0±48.9  & 376.8±32.1  & 360.7±41.4  & -  & -  &  - \\
& RRT$^X$     & - & - & - & -  &  - & -  &  - & -  &  - \\
& DRRT        & 2051.6±193.8 & 2058.8±249.9 & 2024.0±161.2 & 461.2±61.0  & 446.4±67.3   & 442.0±70.4  & 20.5±5.8  & 19.5±6.0   & 19.4±5.4  \\
& \textbf{LLPT$^*$}    & \textbf{1643.7±18.8} & \textbf{1611.4±17.5} & \textbf{1592.5±16.5}  & \textbf{373.3±53.7}  & \textbf{331.7±21.0}  & \textbf{324.0±4.6}  & \textbf{11.3±7.5}  & \textbf{8.3±4.9}  & \textbf{6.7±2.0}  \\
\bottomrule
\end{tabular}
}
\label{Table1}
\end{table*}

\subsection{Real-world Experiments}
We carried out real-world experiments in a dynamic environment with moving pedestrians.
We integrated LLPT$^*$ with the Nav2 framework \cite{nav2} and deployed the navigation stack on board the TurtleBot 4 platform (see Fig.\ref{fig:tb4}).
While moving from the start to the goal position, the robot perceives the environment with the lidar sensor, updates a 2-D grid map of the environment, and updates the search tree using LLPT$^*$ to avoid static obstacles and pedestrians.
Then, a new trajectory is generated and passed to the controller for execution.
The whole navigation stack is executed in a real-time manner until the robot reaches the goal position. 
The experiment demonstrates the algorithm’s ability to handle static and dynamic elements in real-time,
ensuring efficient and robust navigation.
The video is available at \url{https://youtu.be/49H_wD0pA5U}.

\begin{figure}[t]
    \centering
    \includegraphics[width=0.3\textwidth]{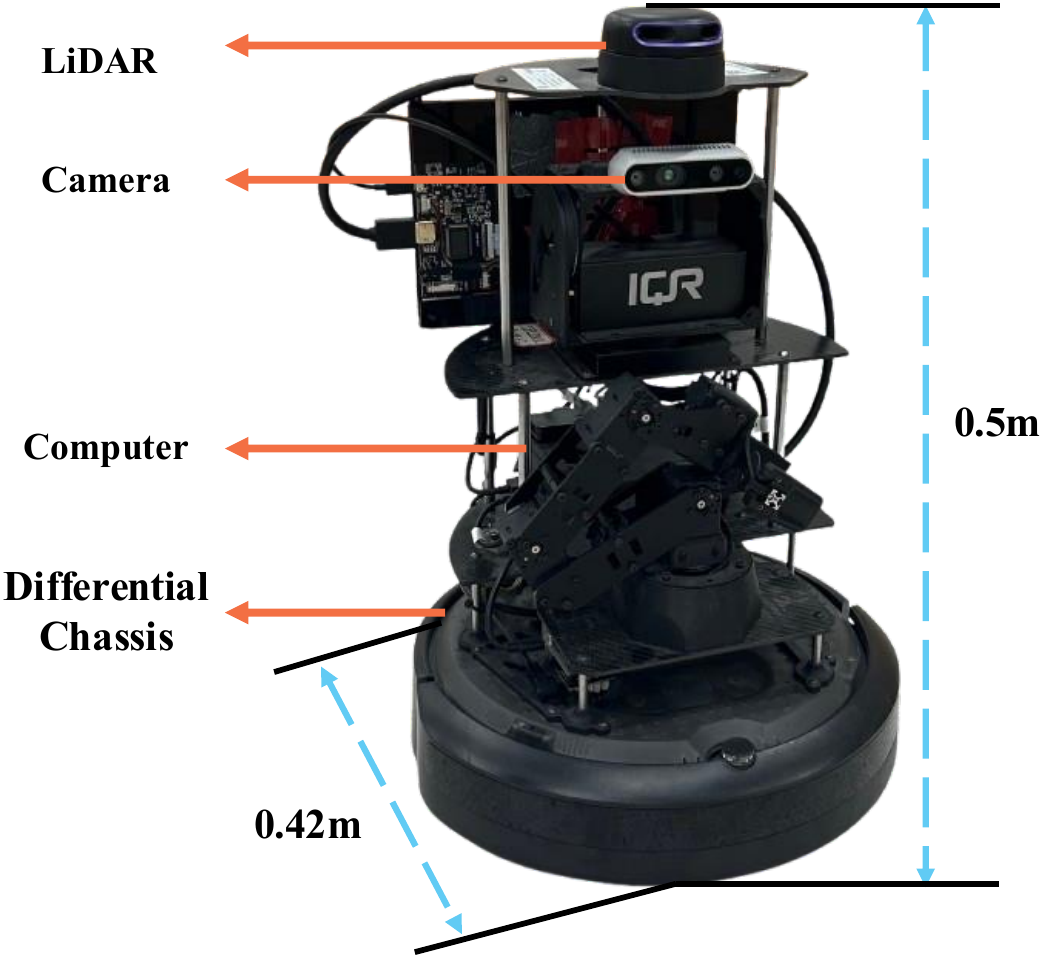}
    \caption{The TurtleBot 4 platform for real-world experiments. The robot is equipped with on an onboard computer (NUC11 with i5-1135G7 CPU).} 
    \label{fig:tb4}
\end{figure}

We present some snapshots from experiments in Fig.\ref{fig:tb4_exp_snap}.
We note that LLPT$^*$ only carries collision checking around moving pedestrians who block the optimal solution, and only a small number of rewirings are performed for each replan despite the drastic changes of the occupancy map.
Besides, LLPT$^*$ keeps densifying the search tree during the entire navigation procedure, and therefore is able to generate a trajectory with a cost close to the optimal one.

\begin{figure*}
    \centering
    \includegraphics[width=\linewidth]{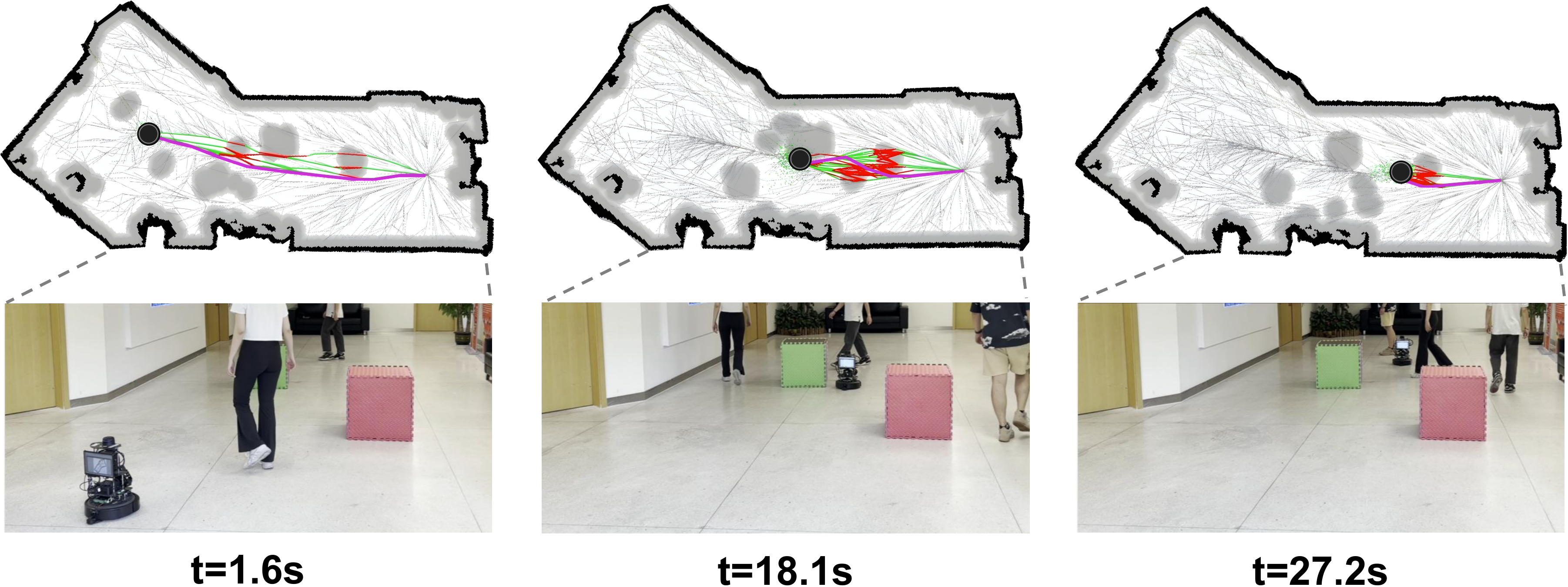}
    \caption{ Snapshots captured at 3 distinct time points from the real-world experimental process using LLPT$^*$. 
    On the top figures, the search tree of LLPT$^*$ is plotted with colorful lines representing edges with different statuses.
    The green lines are the evaluated and valid edges.
    The red lines are the evaluated and invalid edges.
    The purple lines are the solution edges.
    }
    \label{fig:tb4_exp_snap}
\end{figure*}

\section{Discussions}
\subsection{The impact of the evaluation parameter $\alpha$}
We explore the impact of the evaluation parameter $\alpha$ on LLPT$^*$ performance. 
We compared the LLPT$^*$ algorithm using various values of $\alpha$, which represents the number of edges evaluated per iteration, in solving dynamic motion planning problems described in Section \ref{Simulation:DMPP} to assess its impact on performance. 
The maximum number of nodes of the search graph was fixed for different $\alpha$ to guarantee solution consistency. 
We summarize the average numbers of total edge evaluations and vertex expansions and the average planning time in \ref{fig:discussion_alpha}. 
The number of vertex expansions refers to the number of vertices that have been updated in the \FuncSty{ComputeShortestPath} procedure.

We observed that larger values of $\alpha$ result in more edge evaluations but fewer vertex expansions. 
This is because the larger the number of edges the planner evaluates at the same time, the more knowledge it gains about the state space and, therefore, the less rewiring of the search tree.
For the cubicle problem, the replanning time remains relatively stable as $\alpha$ varies.
In the Piano Mover scenario, planning time initially decreases at $\alpha=5$ due to reduced vertex expansions, but as $\alpha$ increases further, edge evaluations begin to dominate the computation.
For the manipulation problem, edge evaluation is more expensive than vertex expansion.
Thus, the replanning time increases as $\alpha$ increases.

The simulation results suggest that the trade-off between edge evaluations and vertex expansions is crucial for determining the convergence rate to the optimal solution. 
For scenarios where edge evaluation is relatively more expensive compared with vertex expansions, such as robot arm manipulation, or motion planning for systems under differential constraints, a small value of $\alpha$, such as 1, is preferred. 
In the contrast case where the cost for edge evaluations is computationally insensitive, a larger value of $\alpha$, such as $\infty$, is a better choice since it evaluates all edges on the solution and, therefore, will minimize the number of vertex expansions and facilitate the overall replanning progress.

\begin{figure}[htbp]
\vspace{-0.1cm} 
\centering
  \subfloat[Cubicles]{\includegraphics[width = 0.16\textwidth,height=0.18\textwidth]
  {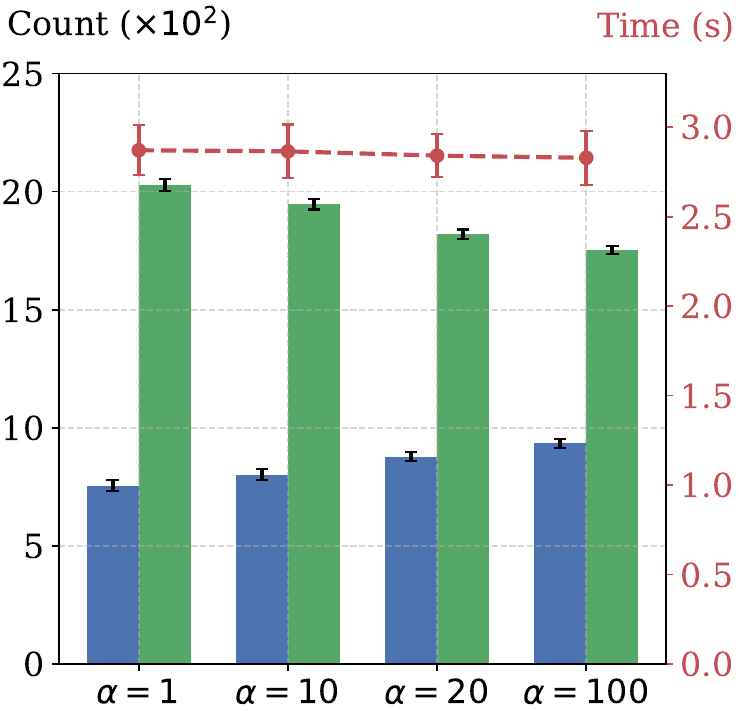}}\hfill
  \subfloat[Piano Movers]{\includegraphics[width = 0.16\textwidth,height=0.18\textwidth]{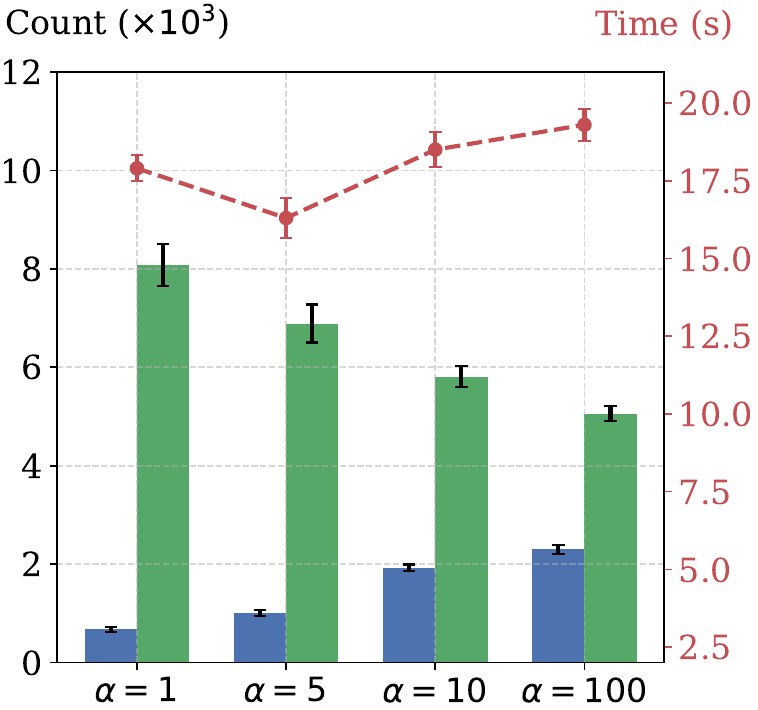}}\hfill
  \subfloat[Manipulation]{\includegraphics[width = 0.16\textwidth,height=0.18\textwidth]{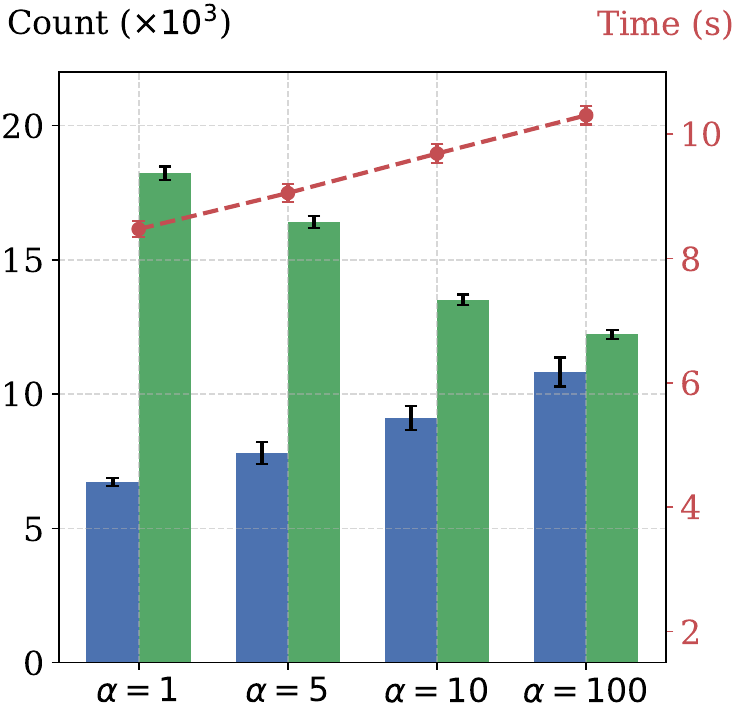}}

  \subfloat{\includegraphics[width=0.4\textwidth]{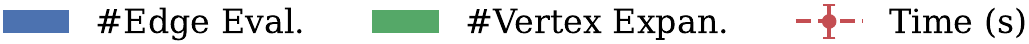}}
\caption{Simulation results of LLPT$^*$ with varying evaluation parameter.}
\label{fig:discussion_alpha}
\end{figure}


\subsection{Performance in Highly-dynamic Scenarios}
We compare the performance of LLPT$^*$ with baseline algorithms across various highly dynamic environments populated by numerous continuously moving obstacles (see Fig.\ref{fig:highly_dynamic_simulation_env}). 
Planning results are presented at \url{https://youtu.be/49H_wD0pA5U}. The results demonstrate that LLPT$^*$ can replan and compute optimal solutions at high frequency in response to new obstacle configurations and updated robot positions. 
This allows LLPT$^*$ to avoid dynamic obstacles in real time, even without access to their future trajectories.
During each replanning step, LLPT$^*$ evaluates and rewires only a small portion of the search tree near the current solution path. 
This localized update strategy enables significantly higher robot travel speeds without collisions, outperforming baseline methods.

\begin{figure}[htbp]
\vspace{-0.1cm} 
\centering
  \subfloat[]{\includegraphics[width = 0.145\textwidth, height=0.145\textwidth]{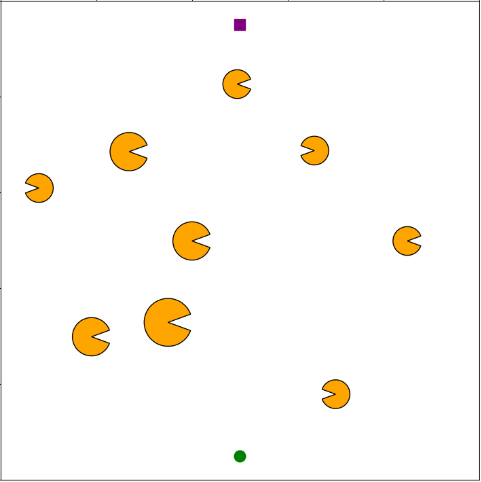}}\hfill
  \subfloat[]{\includegraphics[width = 0.145\textwidth, height=0.145\textwidth]{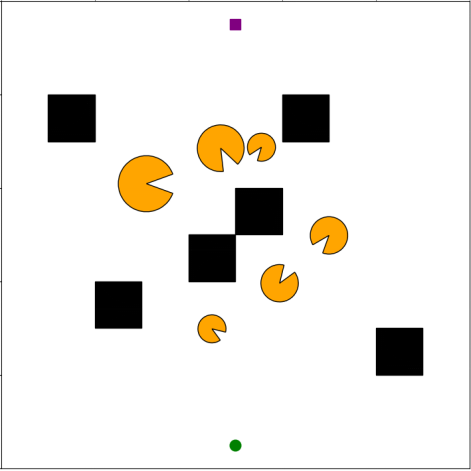}}\hfill
  \subfloat[]{\includegraphics[width = 0.175\textwidth, height=0.145\textwidth]{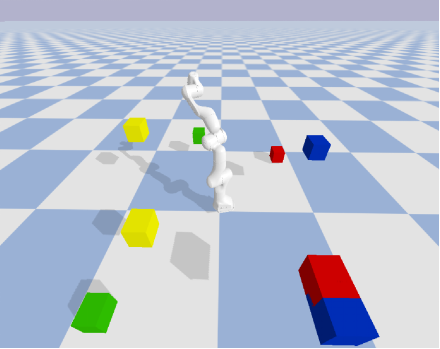}}
\caption{Simulation environments with continuously-moving obstacles.}
\label{fig:highly_dynamic_simulation_env}
\end{figure}

\subsection{Future Extensions: Improve Convergence Rate by Adaptive Risk-free Region Sampling}
Many sampling-based planners exploit the structure of the environment or past planning experience and place samples in promising areas to reduce the time and computational cost of unnecessary
exploration.
Some of them analyze the geometric structure of the state space and generate random samples from the key area, which lies between the goal and the start region \cite{InformedRRTstar} or narrow passages which are difficult to cover by uniform sampling \cite{MediaAxis, WorkspaceImportance}.
Recently, learning-based biased sampling is widely explored, which generates the optimal sampling-distribution by learning the probabilistic distribution of free state from past planning experience \cite{Learning, CVAE, LEGO, LocalPremitive, SPARKandFLAME, NeuralRRT, NAMR_RRT}.

Like these works, the convergence rate of LLPT$^*$
will be further improved if combined with the biased sampling strategy.
Nonetheless, most existing works target creating biased samples in static environments.
In dynamic environments, the distribution of dynamic obstacles is evolving, and therefore, the optimal sampling distribution should take the future trajectory of dynamic obstacles into consideration to reduce the risk of collision.
In this way, an interesting research direction 
is combining LLPT$^*$ with adaptive heuristic region sampling that reduces both the collision risk and the overall planning time.
Furthermore, as LLPT$^*$ reuses the search tree throughout the entire navigation process, some parts of the tree should be pruned since they no longer lie in the promising area as the environment evolves. 
Heuristic-based approaches can be developed to identify and prune the useless tree branches to reduce storage and computation costs.


\section{Conclusions}
\label{Conclusion}
We introduce LLPT$^*$, an asymptotically optimal lazy lifelong sampling-based path planning algorithm that combines the replanning efficiency of lifelong planning algorithms, the search efficiency of heuristic-guided informed algorithms, and the evaluation efficiency of lazy search algorithms. 
The simulation and the experiment results demonstrate that LLPT$^*$ surpasses several sampling-based planning algorithms to solve static and dynamic motion planning problems in varying dimensions, highlighting its potential applications in high-DoF robotics systems operating in highly dynamic environments.

\appendix
\subsection{Proof of Theorem \ref{theorem_shortest}}
\label{proof_of_theorem_optimality}
Let $v$ be the node selected for expansion, or equivalently, the node in $\mathcal{Q}$ with the highest priority 
at the $i^{th}$ iteration of the $\FuncSty{ComputeShortestPath}$ procedure. 
As the priority of the queue is equivalent to the priority ordered by the key $k(v)=[k_1(v), k_2(v)]$, where $k_1(v)=\min(lmc(v),g(v))+h(v)$ and $k_2(v)=\min(lmc(v),g(v))$, we use this equation to represent the key of nodes in this proof for clarity.
Throughout the section, $k^{i-1}(v)=[min(lmc^{i-1}(v),g^{i-1}(v))+h(v,v_{start}), min(lmc^{i-1}(v),g^{i-1}(v))]$ denotes the key of $v$ before expansion, that is, right before the line \ref{ReduceInconssistency:UnderconsistentStart} of Algorithm \ref{ComputeShortestPath} is executed, and $k^i(v)=[min(lmc^{i}(v),g^{i}(v))+h(v,v_{start}), min(lmc^{i}(v),g^{i}(v))]$ denotes the key after expansion, that is, right after the line \ref{ReduceInconsistency:Consistent} of Algorithm \ref{ComputeShortestPath} is executed. 
We have the following lemmas.
\begin{lemma}
\label{lemma_min_key}
The key of $v$ before expansion is no larger than that of other inconsistent nodes.
\end{lemma}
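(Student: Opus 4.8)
\textbf{Proof proposal for Lemma \ref{lemma_min_key}.}

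The plan is to argue that the priority queue $\mathcal{Q}$, as maintained by the $\FuncSty{ComputeShortestPath}$ procedure, actually contains \emph{all} currently inconsistent nodes, so that popping the top element yields the globally minimum-key inconsistent node. First I would establish the membership invariant: at the start of every iteration of the while-loop in Algorithm \ref{ComputeShortestPath}, every inconsistent node (i.e.\ every $v$ with $g(v)\neq lmc(v)$) is in $\mathcal{Q}$. This is initialized correctly, since before the first call the only nodes with nonzero lmc/g gaps are inserted via $\FuncSty{UpdateNode}$ (line \ref{Main:RefreshUpdateBegin}--\ref{Main:RefreshUpdateEnd} of Algorithm \ref{Main}) or via $\FuncSty{PropagateCostToLeave}$, both of which call $\mathcal{Q}.\FuncSty{update}$. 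For the inductive step I would walk through the body of the loop: whenever $lmc(v)$ is modified for some node $v$ (either in the underconsistent branch, line \ref{ReduceInconssistency:UnderconsistentStart}--\ref{ReduceInconssistency:UnderconsistentEnd}, or in the rewiring of a neighbor $u$, line \ref{ReduceInconssistency:RewireNeighborBegin}--\ref{ReduceInconssistency:RewireNeighborEnd}), the code immediately calls $\mathcal{Q}.\FuncSty{update}$ on that node; and the only place a node is made consistent is line \ref{ReduceInconsistency:Consistent}, which happens exactly when $v$ is popped and its processing is complete. Hence no inconsistent node can ever be "lost" from the queue.

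Given the membership invariant, the lemma follows almost immediately: since $v$ is the node returned by $\mathcal{Q}.\FuncSty{pop}()$, it has the smallest key among all elements of $\mathcal{Q}$ (by definition of a min-priority queue ordered by $\dot{<}$), and since every inconsistent node lies in $\mathcal{Q}$, the key $k^{i-1}(v)$ is no larger than the key of any other inconsistent node. I would also need to address the subtlety introduced by the moving robot and the $k_m$ offset: a node may carry a \emph{stale} key in $\mathcal{Q}$ that is smaller than its true current key. But this is handled by the recomputation check at line \ref{ReduceInconsistency:UpdatePriorityBegin}--\ref{ReduceInconsistency:UpdatePriorityEnd}: if the popped node's stored key is below its freshly computed $\FuncSty{CalculateKey}$ value, it is reinserted with the corrected key rather than expanded, so the node actually expanded always has stored key equal to its true key, and that true key is a lower bound on the stored keys (hence also on the true keys) of everything remaining in $\mathcal{Q}$.

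The main obstacle I anticipate is making the stale-key argument airtight, i.e.\ showing that $\FuncSty{CalculateKey}(v)$ never \emph{decreases} relative to a previously stored key for the same node between re-insertions — this is the standard D$^*$ Lite monotonicity property, and it relies on the fact that $k_m$ is nondecreasing (it only ever has $w(v_{last},v_{start})\geq 0$ added, line \ref{Main:Transient} of Algorithm \ref{Main}) together with the fact that $h(v_{start},\cdot)$ changes by at most the robot's travel cost each iteration by the triangle-inequality-style admissibility of $h$. Once that monotonicity is in hand, the lazy reordering trick is known to be correct, and the lemma is a direct consequence of the membership invariant plus the min-queue property. I would cite the analogous argument in \cite{DLite} for the $k_m$ bookkeeping and keep the rest self-contained.
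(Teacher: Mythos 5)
The paper actually states Lemma~\ref{lemma_min_key} without any proof at all, treating it as an immediate consequence of $v$ being the top element of the min-priority queue $\mathcal{Q}$; your proposal is a correct and considerably more careful elaboration of exactly that implicit argument (membership invariant plus min-queue property plus the D$^*$~Lite stale-key check), so in substance you are on the same route the authors intended. One refinement is needed, though: the membership invariant as you state it --- ``every inconsistent node is in $\mathcal{Q}$'' --- is false under the paper's own definition of inconsistency, which classifies a node with $lmc(v)=g(v)=\infty$ as (under)consistent\emph{-violating}, i.e.\ inconsistent. Such nodes arise routinely and are \emph{not} all in $\mathcal{Q}$: a freshly sampled vertex in \FuncSty{ExtendSearchGraph} is initialized with $lmc=g=\infty$ and is only pushed into $\mathcal{Q}$ if \FuncSty{UpdateNode} finds it a finite-cost parent, and an underconsistent node that fails to find a parent during expansion is popped, left with $lmc=g=\infty$, and never reinserted. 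The lemma survives because every such node has key $[\infty,\infty]$, which is maximal under $\dot{<}$, so the conclusion holds vacuously for them; but your invariant should either be restricted to inconsistent nodes with $\min(lmc(v),g(v))<\infty$ or be accompanied by this observation. With that fix, and granting the standard D$^*$~Lite monotonicity of \FuncSty{CalculateKey} relative to stored keys (which you correctly identify as the remaining technical debt and which is legitimately citable from \cite{DLite}), the argument is sound.
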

\begin{lemma}
\label{lemma_overconsistent}
If $v$ is overconsistent before expansion, i.e., lmc$^{i-1}$(v)$<$g$^{i-1}$(v), then its key after expansion, $k^i(v)$, will be equal to $k^{i-1}(v)$.
\end{lemma}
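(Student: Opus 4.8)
\textbf{Proof proposal for Lemma~\ref{lemma_overconsistent}.}
The plan is to unwind the control flow of Algorithm~\ref{ComputeShortestPath} for a single expansion of an overconsistent node $v$ and observe that none of the branches that execute can alter $\min(lmc(v),g(v))$. First I would note that, since $v$ is popped from $\mathcal{Q}$, the key-recheck at lines~\ref{ReduceInconsistency:UpdatePriorityBegin}--\ref{ReduceInconsistency:UpdatePriorityEnd} either re-inserts $v$ with a larger key (in which case no expansion happens and the lemma is vacuous for this iteration) or passes, so we may assume we enter the \textbf{else}-branch and run the body lines~\ref{ReduceInconssistency:UnderconsistentStart}--\ref{ReduceInconsistency:Consistent}. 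The hypothesis $lmc^{i-1}(v) < g^{i-1}(v)$ means the underconsistent test at line~\ref{ReduceInconssistency:UnderconsistentStart} is false, so the block lines~\ref{ReduceInconssistency:UnderconsistentStart}--\ref{ReduceInconssistency:UnderconsistentEnd} is skipped entirely; in particular $lmc(v)$ is \emph{not} touched there.

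Next I would examine the overconsistent block, lines~\ref{ReduceInconssistency:RewireCondition}--\ref{ReduceInconssistency:OverconsistentEnd}. Here $lmc^{i-1}(v)<g^{i-1}(v)\le\infty$ forces $lmc^{i-1}(v)\neq\infty$, so the guard at line~\ref{ReduceInconssistency:RewireCondition} holds and we enter the loop over $u\in N(v)\setminus\FuncSty{Parent}(v)$. The crucial point is that every statement inside this loop modifies only neighbors $u$ of $v$ (their parent pointer, their $lmc$-value, their queue position) and never $v$ itself; indeed $v$ is explicitly excluded as it is its own... rather, $u\ne v$ throughout. Hence $lmc(v)$ is unchanged across the whole loop. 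Finally, line~\ref{ReduceInconsistency:Consistent} sets $g(v)\leftarrow lmc(v)$; combined with the fact that $lmc(v)$ never changed, we get $lmc^{i}(v)=lmc^{i-1}(v)$ and $g^{i}(v)=lmc^{i-1}(v)\le g^{i-1}(v)$, so $\min(lmc^{i}(v),g^{i}(v))=lmc^{i-1}(v)=\min(lmc^{i-1}(v),g^{i-1}(v))$. Since $h(v,v_{start})$ and $k_m$ do not depend on the iteration (the robot is not moving during $\FuncSty{ComputeShortestPath}$), both components of the key are preserved, i.e. $k^{i}(v)=k^{i-1}(v)$.

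I do not expect a genuine obstacle here — the lemma is essentially a bookkeeping claim about which lines write to $lmc(v)$ and $g(v)$. The only subtlety worth spelling out carefully is the tie-breaking/early-exit at lines~\ref{ReduceInconsistency:UpdatePriorityBegin}--\ref{ReduceInconsistency:UpdatePriorityEnd}: one must either fold the ``re-insertion'' case into the statement (the lemma is about the iteration in which $v$ is \emph{actually} expanded, i.e. reaches line~\ref{ReduceInconsistency:Consistent}) or argue that after a possible re-insertion the eventual expansion still starts from the same $lmc,g$ values. I would handle this by explicitly restricting attention to the iteration at which $v$ passes the $k_{old}=\CalculateKey(v)$ check and runs the consistency update, which is the natural reading of ``$k^{i-1}(v)$ the key before expansion'' and ``$k^{i}(v)$ the key after expansion.''
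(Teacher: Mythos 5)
Your proposal is correct and follows essentially the same argument as the paper's proof: the underconsistent branch is skipped because $lmc^{i-1}(v)<g^{i-1}(v)$, so $lmc(v)$ is never modified, and line~\ref{ReduceInconsistency:Consistent} sets $g^i(v)=lmc^{i-1}(v)$, preserving both key components. Your additional bookkeeping (the re-insertion early-exit and the observation that the rewire loop only writes to neighbors $u\neq v$) is just a more explicit version of what the paper leaves implicit.
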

\begin{proof}
For an overconsistent node, $lmc^{i-1}(v) < g^{i-1}(v)$ and $k^{i-1}(v) = [lmc^{i-1}(v)+h(v,v_{start}), lmc^{i-1}(v)]$.
The $\FuncSty{ComputeShortestPath}$ procedure will not change its lmc-value as the conditions of line \ref{ReduceInconssistency:UnderconsistentStart} is not satisfied. 
Since $g^i(v) = lmc^{i-1}(v)<\infty$ after line \ref{ReduceInconsistency:Consistent} is executed, $k^{i}(v)=k^{i-1}(v)$ must be true. 
\end{proof}

\begin{lemma}
\label{lemma_underconsistent}
If $v$ is underconsistent before expansion, i.e., lmc$^{i-1}$(v)$>$g$^{i-1}$(v) or lmc$^{i-1}$(v)$=$g$^{i-1}$(v)$=\infty$, lmc$^{i}$(v)$\geq$g$^{i-1}$(v) must be true.
\end{lemma}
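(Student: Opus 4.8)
\textbf{Proof proposal for Lemma~\ref{lemma_underconsistent}.}
The plan is to analyze exactly what the $\FuncSty{ComputeShortestPath}$ procedure does to an underconsistent node $v$ during its expansion, and track how $lmc(v)$ can change. First I would note that an underconsistent node satisfies $lmc^{i-1}(v) > g^{i-1}(v)$ or $lmc^{i-1}(v) = g^{i-1}(v) = \infty$, so in particular $lmc^{i-1}(v) \geq g^{i-1}(v)$ trivially holds before expansion. The content of the lemma is that this inequality survives the update of the $lmc$-value performed in lines~\ref{ReduceInconssistency:UnderconsistentStart}--\ref{ReduceInconssistency:UnderconsistentEnd} of Algorithm~\ref{ComputeShortestPath}. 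So the task reduces to showing that the search among tree-neighbors $u \in V_{\mathcal{T}} \cap N(v)$, which may reassign the parent of $v$ and set $lmc(v) \leftarrow \bar{w}(v,u) + lmc(u)$, cannot push $lmc^{i}(v)$ strictly below $g^{i-1}(v)$.

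The key step is to argue that for every tree-neighbor $u$ considered in that loop, $\bar{w}(v,u) + lmc(u) \geq g^{i-1}(v)$. I would establish this by invoking the ordering guaranteed by Lemma~\ref{lemma_min_key}: since $v$ is the node in $\mathcal{Q}$ with the smallest key, every other inconsistent node $w$ has $k^{i-1}(w) \dot{\geq} k^{i-1}(v)$, and in particular the $k_2$-component satisfies $\min(lmc^{i-1}(w), g^{i-1}(w)) \geq \min(lmc^{i-1}(v), g^{i-1}(v)) = g^{i-1}(v)$ (the last equality using underconsistency of $v$). For a tree-neighbor $u$ that is \emph{consistent}, we have $lmc(u) = g(u)$ and $u$ is not in $\mathcal{Q}$, and one argues via the structure of the tree and the fact that consistent nodes already satisfy the Bellman-type relation that $\bar{w}(v,u) + lmc(u)$ cannot be a path cost shorter than $g^{i-1}(v)$ — this is precisely where the invariant maintained by the procedure (that $g$-values of consistent nodes equal true shortest-tree costs, and keys order the expansion soundly, analogous to the correctness of D$^*$ Lite) must be used. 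For a tree-neighbor $u$ that is \emph{inconsistent}, it sits in $\mathcal{Q}$ with key no smaller than that of $v$, so $\min(lmc(u), g(u)) \geq g^{i-1}(v)$, and since $lmc(u) \geq \min(lmc(u), g(u))$ we get $\bar{w}(v,u) + lmc(u) \geq \bar{w}(v,u) + g^{i-1}(v) > g^{i-1}(v)$ using $\bar{w}(v,u) > 0$. Combining the two cases, the minimum taken in line~\ref{ReduceInconssistency:UnderconsistentEnd} over all tree-neighbors is at least $g^{i-1}(v)$, and since the loop only decreases $lmc(v)$ when a strictly better candidate is found, $lmc^{i}(v) \geq g^{i-1}(v)$. (If $lmc^{i-1}(v) = g^{i-1}(v) = \infty$, the loop may still bring $lmc^i(v)$ down from $\infty$, but by the same bound it lands at $\geq g^{i-1}(v) = \infty$ only if no finite candidate exists, otherwise at some finite value which still must be $\geq g^{i-1}(v)$; here one must be careful that $g^{i-1}(v) = \infty$ forces the finite case to be vacuous, so $lmc^i(v) = \infty = g^{i-1}(v)$.)

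The main obstacle I anticipate is the consistent-neighbor case: unlike the inconsistent case, which follows directly from the priority-queue ordering and positivity of edge weights, the consistent case requires a genuine invariant about what $g$-values of already-processed nodes mean and why no consistent tree-neighbor can offer $v$ a path cheaper than its current $g^{i-1}(v)$. This is exactly the kind of loop-invariant that underpins the soundness of incremental search (LPA$^*$/D$^*$~Lite), and the clean way to handle it is to have already proven — or to simultaneously prove, as part of the induction establishing Theorem~\ref{theorem_shortest} — that whenever a node is popped and made consistent, its $g$-value equals the true optimal $\bar{w}$-cost-to-goal over the then-current tree, and that this value is non-increasing only through legitimate relaxations. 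With that invariant in hand, the bound $\bar{w}(v,u) + g(u) \geq g^{i-1}(v)$ for consistent tree-neighbors $u$ follows because otherwise $v$ would have been relaxed through $u$ at an earlier iteration, contradicting that $lmc^{i-1}(v) > g^{i-1}(v)$ persisted up to iteration $i$. I would therefore structure the full argument so that Lemmas~\ref{lemma_min_key}--\ref{lemma_underconsistent} are proved together with Theorem~\ref{theorem_shortest} by a single induction on the iteration count $i$.
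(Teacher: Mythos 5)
Your reduction of the lemma to the claim that every tree-neighbour $u$ examined in the underconsistent branch satisfies $\bar{w}(v,u)+lmc(u)\geq g^{i-1}(v)$ is the right target, but both branches of your case analysis have gaps. In the inconsistent-neighbour case you extract $\min(lmc^{i-1}(u),g^{i-1}(u))\geq \min(lmc^{i-1}(v),g^{i-1}(v))$ ``in particular'' from $k^{i-1}(u)\dot{\geq}k^{i-1}(v)$. That inference is invalid for a lexicographic order: when $k_1(u)>k_1(v)$ strictly, nothing is implied about the $k_2$-components, and since $k_1$ mixes the cost-to-goal with the heuristic $h(\cdot,v_{start})$, a neighbour with a larger $h$-value can have a strictly smaller $k_2$ while still having a larger key. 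The correct route goes through the $k_1$-components together with the triangle inequality of the heuristic, $h(u,v_{start})\leq h(v,v_{start})+\bar{w}(v,u)$, which gives $\min(lmc(u),g(u))\geq g^{i-1}(v)-\bar{w}(v,u)$ and hence the desired bound; this is exactly the inequality chain the paper uses. In the consistent-neighbour case you do not actually give an argument: you defer to a global invariant about the meaning of $g$-values of processed nodes and propose proving all the lemmas jointly with Theorem~\ref{theorem_shortest} by one induction. That restructuring could be made to work, but it is not what the paper does and it is not needed.

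The paper's proof avoids your case split entirely with a single contradiction argument: assume $lmc^{i}(v)<g^{i-1}(v)$, let $u$ be the parent realising $lmc^{i}(v)=lmc^{i-1}(u)+\bar{w}(v,u)$, and use the triangle inequality to show $k^{i-1}(u)\dot{<}k^{i-1}(v)$. By Lemma~\ref{lemma_min_key}, $u$ cannot still be sitting in $\mathcal{Q}$ with a smaller key, so it must have been processed before iteration $i$; but when $u$ was processed, the rewire-neighbour step (lines \ref{ReduceInconssistency:RewireNeighborBegin}--\ref{ReduceInconssistency:RewireNeighborEnd} of Algorithm~\ref{ComputeShortestPath}) or $\FuncSty{UpdateNode}$ would already have made $v$ a child of $u$ with $lmc(v)\leq lmc(u)+\bar{w}(v,u)<g^{i-1}(v)$ and a correspondingly smaller key, contradicting $v$'s state when it is popped at iteration $i$. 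This is local to the processing history of $u$ and $v$ and requires no shortest-path invariant. To salvage your version you would need to (i) repair the inconsistent case with the triangle-inequality argument above, and (ii) replace the appeal to a global invariant in the consistent case by the same ``$u$ would already have rewired $v$'' contradiction.
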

\begin{proof}
Assuming that $lmc^{i}(v)<g^{i-1}(v)$, $v$ must find a parent node $u$ such that $lmc^{i}(v)=lmc^{i-1}(u)+\bar{w}(v,u)$.
If such $u$ exists, its key before the i$^{th}$ expansion satisfies 
\begin{equation*}
    \begin{split}
        k^{i-1}(u)\dot{\leq}&[lmc^{i-1}(u)+h(u,v_{start}),lmc^{i-1}(u)]\\
        =&[lmc^{i}(v)-\bar{w}(v, u)+h(u,v_{start}),lmc^{i-1}(u)]\\
        \dot{<}&[lmc^{i}(v)+h(v,v_{start}),lmc^{i-1}(v)]\\
        \dot{<}&[g^{i-1}(v)+h(v,v_{start}),g^{i-1}(v)]=k^{i-1}(v),
    \end{split}
\end{equation*}
where the second inequality holds because of the triangular property of the heuristic function.
It indicates that $u$ was expanded before the i$^{th}$ iteration, and $v$ must have been rewired as $u$'s child node and inserted into $\mathcal{Q}$ before the i$^{th}$ expansion by the rewiring neighbor procedure (line \ref{ReduceInconssistency:RewireNeighborBegin}-\ref{ReduceInconssistency:RewireNeighborEnd}, Algorithm \ref{ComputeShortestPath}) or the $\FuncSty{UpdateNode}$ procedure invoked at line \ref{Main:RefreshUpdateEnd} of Algorithm \ref{Main} or line \ref{AddNewSample:UpdateNode} of Algorithm \ref{ExtendSearchGraph}, and has a smaller key than its current one, which is contradictory. 
Hence the assumption is impossible.    
\end{proof}

The following theorems can be derived from the above lemmas:
\begin{theorem}
\label{theorem_inconsistent}
If a node $u$ is inconsistent before the i$^{th}$ expansion, its key after the i$^{th}$ expansion $k^{i}(u)$ must be not smaller than $k^{i-1}(v)$.
\end{theorem}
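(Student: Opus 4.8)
\textbf{Proof proposal for Theorem~\ref{theorem_inconsistent}.}
The plan is to argue by contradiction and exploit the queue-ordering invariant (Lemma~\ref{lemma_min_key}) together with the case analysis already set up in Lemmas~\ref{lemma_overconsistent} and~\ref{lemma_underconsistent}. Suppose, for contradiction, that $u$ is inconsistent before the $i^{th}$ expansion but $k^i(u)\,\dot{<}\,k^{i-1}(v)$. Since $u$ is inconsistent at that point, it is in $\mathcal{Q}$, so by Lemma~\ref{lemma_min_key} we have $k^{i-1}(v)\,\dot{\leq}\,k^{i-1}(u)$. The crux is therefore to show that expanding $v$ cannot decrease the key of \emph{another} node $u$ below $k^{i-1}(v)$; i.e., that the key of $u$ can only drop to something still at least $k^{i-1}(v)$.

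The key steps, in order, would be: (1) Observe that the only way $k^i(u)$ can differ from $k^{i-1}(u)$ during the $i^{th}$ expansion is if $u$ is rewired as a child of $v$ in the overconsistent branch (lines~\ref{ReduceInconssistency:RewireNeighborBegin}--\ref{ReduceInconssistency:RewireNeighborEnd} of Algorithm~\ref{ComputeShortestPath}), which sets $lmc(u)\leftarrow \bar{w}(u,v)+lmc^{i}(v)$; any other node's key is untouched by this expansion. (2) In that case, using $k_1$, bound $k^i_1(u)\,\geq\,\min(lmc^i(u),g^i(u)) + h(u,v_{start})$ and note $\min(lmc^i(u),g^i(u))\le lmc^i(u)=\bar{w}(u,v)+lmc^i(v)$; then apply the triangle inequality $h(u,v_{start})\ge h(v,v_{start})-\bar{w}(u,v)$, wait—rather, one wants a lower bound on $k^i(u)$, so use $h(v,v_{start})\le \bar{w}(v,u)+h(u,v_{start})$, giving $\bar{w}(u,v)+lmc^i(v)+h(u,v_{start})\ge lmc^i(v)+h(v,v_{start})$. (3) Combine with the fact that $v$ was processed as an overconsistent node, so by Lemma~\ref{lemma_overconsistent} $lmc^i(v)=lmc^{i-1}(v)$ and $k^i(v)=k^{i-1}(v)$, and $k^{i-1}(v)=[lmc^{i-1}(v)+h(v,v_{start}),\,lmc^{i-1}(v)]$; this yields $k^i_1(u)\ge k^{i-1}_1(v)$, and a parallel argument on the $k_2$ component (noting $\min(lmc^i(u),g^i(u))\le lmc^i(v)$ is the wrong direction, so instead compare $k_1$ first and fall back to $k_2$ only on ties) closes the lexicographic comparison. (4) Also dispatch the case where $v$ is underconsistent before expansion: then the rewiring-neighbor block is only entered if $lmc^i(v)\neq\infty$, and Lemma~\ref{lemma_underconsistent} gives $lmc^i(v)\ge g^{i-1}(v)$, so any $u$ rewired to $v$ gets $lmc^i(u)\ge g^{i-1}(v)+\bar{w}(u,v)$, and the same triangle-inequality manipulation gives $k^i(u)\,\dot{\geq}\,[g^{i-1}(v)+h(v,v_{start}),g^{i-1}(v)]=k^{i-1}(v)$.

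The main obstacle I anticipate is handling the lexicographic comparator $\dot{<}$ cleanly: the bounds on the first coordinate $k_1$ come out with a weak inequality, so when $k^i_1(u)=k^{i-1}_1(v)$ one must separately verify the second coordinate, and there the natural bound on $\min(lmc^i(u),g^i(u))$ goes through $lmc^i(v)$ plus a strictly positive edge weight $\bar{w}(u,v)>0$, which actually gives a \emph{strict} inequality in $k_1$ unless $h$ is tight—so the tie case may in fact be vacuous, but arguing that rigorously requires care with the definition of $\dot{<}$ as stated (which itself looks slightly garbled in the excerpt, mixing $k_1$ and $k_2$). A secondary subtlety is ensuring the claim also covers nodes $u$ whose key is \emph{unchanged} by the expansion: for those $k^i(u)=k^{i-1}(u)\,\dot{\geq}\,k^{i-1}(v)$ is immediate from Lemma~\ref{lemma_min_key}, so that branch is trivial once step~(1) isolates it.
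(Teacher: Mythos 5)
Your proposal is correct and follows essentially the same route as the paper's proof: split on whether $u$'s key is unchanged (then Lemma~\ref{lemma_min_key} applies directly) or $u$ is rewired to $v$ with $lmc^i(u)=lmc^i(v)+\bar{w}(u,v)$, then case on $v$ being overconsistent (Lemma~\ref{lemma_overconsistent}) or underconsistent (Lemma~\ref{lemma_underconsistent}) and close with the triangle inequality on $h$. Your side remarks about handling the $\min(lmc,g)$ term via Lemma~\ref{lemma_min_key} and about the tie case of the lexicographic comparator are the same care the paper takes, so there is no substantive difference.
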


\begin{proof}
If $u$'s key doesn’t change after the i$^{th}$ expansion, we have $k^{i}(u)=k^{i-1}(u)\dot{\geq} k^{i-1}(v)$ according to Lemma \ref{lemma_min_key}. 
In the opposite case where $u$'s key changes, $u$ must be rewired to $v$ and $lmc^i(u)\leftarrow lmc^i(v)+\bar{w}(u, v)$ (line \ref{ReduceInconssistency:RewireNeighborBegin}-\ref{ReduceInconssistency:RewireNeighborEnd} of Algorithm \ref{ComputeShortestPath} are executed).  
We consider two cases:

(1) If $v$ is overconsistent, $k^{i-1}(v)=k^{i}(v)=[lmc^i(v)+h(v,v_{start}),lmc^i(v)]$ according to Lemma \ref{lemma_overconsistent}.
According to the triangular property of the heuristic function, $lmc^i(v)+h(v, v_{start}) = lmc^i(u) - \bar{w}(u,v) + h(v, v_{start}) \leq lmc^i(u)+h(u, v_{start})$.
Based on Lemma \ref{lemma_min_key}, $lmc^{i-1
}(v)+h(v, v_{start}) < g^{i-1}(u)+h(u,v_{start})$ or $lmc^{i-1}(v) < g^{i-1}(u)$ if $lmc^{i-1
}(v)+h(v, v_{start}) = g^{i-1}(u)+h(u,v_{start})$.
Therefore, $k^i(u)=[min(lmc^i(u), g^{i-1}(u))+h(u,v_{start}), min(lmc^i(u), g^{i-1}(u))]\dot{\geq} k^{i-1}(v)$ is derived.

(2) If $v$ is underconsistent, $k^{i-1}(v)=[g^{i-1}(v)+h(v,v_{start}),g^{i-1}(v)]$ according to the definition of key in Section \ref{Methodology}. 
Since the lmc-value of $v$ after expansion $lmc^i(v)$ must be not smaller than $g^{i-1}(v)$ based on Lemma \ref{lemma_underconsistent}, we have $lmc^i(u)= lmc^i(v) + \bar{w}(u,v) \geq g^{i-1}(v) + \bar{w}(u,v)$. 
According to the triangular property,
$lmc^i(u) + h(u, v_{start}) \geq lmc^i(v)+h(v,v_{start})\geq g^{i-1}(v)+h(v,v_{start})$. 
Therefore, $k^i(u)\dot{\geq} k^{i-1}(v)$ is derived. 
\end{proof}

\begin{theorem}
\label{theorem_nondecreasing}
The key of the vertices expanded by the $\FuncSty{ComputeShortestPath}$ procedure is non-decreasing.
\end{theorem}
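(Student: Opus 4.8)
The plan is to prove Theorem~\ref{theorem_nondecreasing} by induction on the expansion index, using Theorem~\ref{theorem_inconsistent} as the main workhorse. Let $v$ be the node expanded at iteration $i$ and $v'$ the node expanded at iteration $i+1$; I want to show $k^{i}(v') \,\dot{\geq}\, k^{i-1}(v)$, i.e.\ that the key with which the $(i+1)$-th node is popped is at least the key with which the $i$-th node was popped. The key observation is that $v'$ is, at the end of iteration $i$, the node of minimum key in $\mathcal{Q}$; so I need a lower bound of $k^{i-1}(v)$ on the key of \emph{every} node remaining in $\mathcal{Q}$ after iteration $i$.

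First I would partition the nodes that can be in $\mathcal{Q}$ at the end of iteration $i$ into three groups and bound each. (a) Nodes whose key was not touched during iteration $i$ and that were already in $\mathcal{Q}$ before it: these were inconsistent before iteration $i$, so Lemma~\ref{lemma_min_key} gives them key $\,\dot{\geq}\, k^{i-1}(v)$, and since their key is unchanged this persists. (b) Nodes whose key \emph{was} modified during iteration $i$ — these are exactly the neighbors $u$ of $v$ that got rewired through $v$ in the block on lines~\ref{ReduceInconssistency:RewireNeighborBegin}--\ref{ReduceInconssistency:RewireNeighborEnd}, plus possibly $v$ itself re-inserted on line~\ref{ReduceInconsistency:UpdatePriorityEnd} after a stale-key pop. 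For the rewired neighbors, Theorem~\ref{theorem_inconsistent} already gives $k^{i}(u) \,\dot{\geq}\, k^{i-1}(v)$ when $u$ was inconsistent before iteration $i$; I would need a short argument that a node which was \emph{consistent} before iteration $i$ and becomes a rewired child of $v$ also satisfies this — but that case is subsumed, because after rewiring $\mathrm{lmc}^i(u) < g^{i-1}(u) = \mathrm{lmc}^{i-1}(u)$ makes it overconsistent and the same triangular-inequality computation as in the proof of Theorem~\ref{theorem_inconsistent} applies verbatim (it never used inconsistency of $u$ before iteration $i$, only the relation $\mathrm{lmc}^i(u) = \mathrm{lmc}^i(v) + \bar{w}(u,v)$ and Lemma~\ref{lemma_min_key}). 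For $v$ re-inserted with a corrected key after a stale pop, line~\ref{ReduceInconsistency:UpdatePriorityEnd} only fires when the recomputed key is larger, so its new key is $\,\dot{\geq}\, k^{i-1}(v)$ trivially. (c) The node $v$ itself if it remained inconsistent after iteration $i$ (e.g.\ underconsistent and not made consistent): by Lemma~\ref{lemma_underconsistent}, $\mathrm{lmc}^i(v) \geq g^{i-1}(v)$, so $k^i(v) \,\dot{\geq}\, k^{i-1}(v)$.

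Having shown every node in $\mathcal{Q}$ after iteration $i$ has key $\,\dot{\geq}\, k^{i-1}(v)$, the node $v'$ popped at iteration $i+1$ is popped with its \emph{current} key (after the stale-key check on lines~\ref{ReduceInconsistency:UpdatePriorityBegin}--\ref{ReduceInconsistency:UpdatePriorityEnd}), which equals $k^{i}(v')$ if it was up to date, and is only re-inserted with a still-larger key otherwise; in all cases $k^{i}(v') \,\dot{\geq}\, k^{i-1}(v)$. Chaining this inequality over successive iterations yields the claim that the sequence of keys of expanded vertices is non-decreasing. The base case is vacuous (there is no expansion before the first).

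I expect the main obstacle to be the bookkeeping around the $k_m$ priority offset and the stale-key mechanism: because $h(\cdot, v_{start})$ shifts when the robot moves between replanning cycles, keys stored in $\mathcal{Q}$ may be out of date, and the proof must be careful that "the key with which a node is expanded" means the recomputed key at pop time, not the stored priority, and that Lemma~\ref{lemma_min_key} is invoked with the correctly-aligned keys. Within a single call to $\FuncSty{ComputeShortestPath}$, $k_m$ and $v_{start}$ are constant, so this is a non-issue internally; the subtlety is purely that the lemmas must be read as statements about recomputed keys, and I would state that convention explicitly at the start rather than re-derive it. The rest is the routine triangular-inequality chasing already exhibited in the proof of Theorem~\ref{theorem_inconsistent}.
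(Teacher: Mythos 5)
Your proof follows essentially the same route as the paper's: Lemma~\ref{lemma_min_key} lower-bounds the nodes already in $\mathcal{Q}$, Theorem~\ref{theorem_inconsistent} handles previously-inconsistent nodes whose keys change, and a triangle-inequality computation handles consistent nodes that become inconsistent by being rewired through $v$. Your extra bookkeeping for the stale-key re-insertion, for $v$ itself remaining inconsistent, and for the $k_m$ offset is sound and somewhat more careful than the paper's terse treatment, but it does not alter the structure of the argument.
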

\begin{proof}
According to Lemma \ref{lemma_min_key}, its key is the smallest one among the other nodes in $\mathcal{Q}$. 
If a consistent vertex $u$ becomes inconsistent and is inserted into $\mathcal{Q}$ due to the expansion of $v$, it must have a larger key than $v$ as  
\begin{equation*}
\begin{split}
    k^{i}(u)=&[lmc^{i}(u)+h(u,v_{start}),lmc^{i}(u)]\\
    =&[lmc^{i}(v)+\bar{w}(v,u)+h(u,v_{start}),lmc^{i}(v)+\bar{w}(v,u)] \\
    \dot{>}&[lmc^{i}(v)+h(u,v_{start}),lmc^{i}(v)]\dot{\geq} k^{i}(v),
\end{split}
\end{equation*}
where the first inequality holds because of the triangular property of the heuristic function.
For other inconsistent nodes, their new key must be no smaller than the key of $v$ after the i$^{th}$ iteration according to Theorem \ref{theorem_inconsistent}. 
The theorem is proved. 
\end{proof}

\begin{theorem}
\label{theorem_overconsist}
If $v$ is overconsistent, it will be and remain to be consistent until $\FuncSty{ComputeShortestPath}$ terminates.   
\end{theorem}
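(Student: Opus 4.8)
The plan is to show two things: (i) once a node $v$ is popped from $\mathcal{Q}$ while overconsistent, the expansion makes it consistent (its $g$-value is set equal to its finite $lmc$-value at line \ref{ReduceInconsistency:Consistent}); and (ii) it can never be made inconsistent again before $\FuncSty{ComputeShortestPath}$ terminates. Part (i) is essentially immediate from Lemma \ref{lemma_overconsistent} and the structure of Algorithm \ref{ComputeShortestPath}: when an overconsistent node is expanded, the underconsistent branch (lines \ref{ReduceInconssistency:UnderconsistentStart}--\ref{ReduceInconssistency:UnderconsistentEnd}) is skipped, $lmc^i(v)=lmc^{i-1}(v)<\infty$ is unchanged, and line \ref{ReduceInconsistency:Consistent} sets $g^i(v)=lmc^i(v)$, so $v$ leaves the expansion consistent with a finite key $k^i(v)=k^{i-1}(v)$.

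For part (ii), I would argue by contradiction. Suppose $v$ becomes inconsistent again at some later iteration $j>i$; take the first such $j$. A consistent node can only be rendered inconsistent by having its $lmc$-value changed, which happens either in the rewiring-neighbor block (lines \ref{ReduceInconssistency:RewireNeighborBegin}--\ref{ReduceInconssistency:RewireNeighborEnd}) when some expanded node becomes a better parent, or in $\FuncSty{PropagateCostToLeave}$ / $\FuncSty{UpdateNode}$ (but those are not invoked inside $\FuncSty{ComputeShortestPath}$, so within the procedure only the rewiring block is relevant). If $v$'s $lmc$-value decreases at iteration $j$, then $v$ becomes overconsistent and is reinserted into $\mathcal{Q}$; but by Theorem \ref{theorem_nondecreasing} the key of the node $w$ expanded at iteration $j$ satisfies $k^{j-1}(w)\dot{\geq}k^{i-1}(v)=k^i(v)$, and by the triangle-inequality argument (as in Theorem \ref{theorem_nondecreasing}) the new key of $v$ would be strictly larger than $k^j(w)$, hence strictly larger than $k^i(v)$ — yet a decrease in $lmc(v)$ can only decrease $k_1,k_2$ of $v$ relative to its consistent value $k^i(v)$, a contradiction. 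The $lmc$-value of $v$ cannot increase inside $\FuncSty{ComputeShortestPath}$ either, since the rewiring block only ever assigns a strictly smaller $lmc$-value to a neighbor.

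Therefore no mechanism inside $\FuncSty{ComputeShortestPath}$ can disturb $v$ after iteration $i$, so $v$ stays consistent until termination.

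The main obstacle I anticipate is making the contradiction in part (ii) fully rigorous with respect to the $k_m$-shifted keys: because $h(\cdot,v_{start})$ and the queue offset $k_m$ are fixed throughout a single invocation of $\FuncSty{ComputeShortestPath}$, the monotonicity of expanded keys (Theorem \ref{theorem_nondecreasing}) applies cleanly, but one must be careful that "$v$ becomes overconsistent again" necessarily means $lmc(v)$ strictly drops below its current consistent value, and hence that $v$'s reinserted key is $\dot{<}$ its value $k^i(v)$ at expansion time — which together with non-decreasing expansion keys gives the contradiction. Handling the lexicographic ties in $\dot{<}$ (the $k_1$-equal, $k_2$-compare case) is the one place where the bookkeeping needs care; everything else follows directly from the already-established lemmas.
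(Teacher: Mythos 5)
Your proposal is correct and follows essentially the same route as the paper: part (i) is the paper's appeal to Lemma~\ref{lemma_overconsistent} and line~\ref{ReduceInconsistency:Consistent}, and part (ii) is the paper's contradiction argument built on the non-decreasing expansion keys of Theorem~\ref{theorem_nondecreasing} together with the triangle property of the heuristic. The only cosmetic difference is that the paper splits on whether the later-expanded node is over- or under-consistent (using Lemma~\ref{lemma_underconsistent} in the second case), whereas you derive a single unified contradiction by showing the would-be reinserted key of $v$ must be simultaneously $\dot{\geq}$ and $\dot{<}$ its consistent key $k^i(v)$; both are valid.
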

\begin{proof}
$\FuncSty{ComputeShortestPath}$ makes $v$ consistent and pop it out of $\mathcal{Q}$ after the i$^{th}$ expansion.
Assuming that $v$ is rewired at the (i+j)$^{th}$ (j$>$0) iteration where a node $u$ is expanded, and is inconsistent again, we have $lmc^{i+j}(u)<lmc^{i}(v)$. 

If $u$ is overconsistent, $lmc^{i+j-1}(u)<lmc^{i}(v)+\bar{w}(v,u)$ must hold based on Theorem \ref{theorem_nondecreasing}, and therefore $lmc^{i+j-1}(u)+h(u, v_{start})<lmc^{i}(v)+h(u, v_{start})+\bar{w}(v,u)\leq lmc^{i}(v)+h(v, v_{start})$. Hence, $k^{i+j-1}(u)\dot{<}k^i(v)=k^{i-1}(v)$ must be true, which is contradict to Theorem \ref{theorem_nondecreasing}.

If $u$ is underconsistent, $lmc^{i+j-1}(u)>g^{i+j-1}(u)\geq lmc^{i+j-1}(v)$ must hold. 
Based on Lemma \ref{lemma_underconsistent}, $lmc^{i+j}(u)>g^{i+j-1}(u)$ and thus $lmc^{i+j}(u)>lmc^{i+j-1}(v)$. In this case, $v$ will not be rewired as the conditions of line \ref{ReduceInconssistency:RewireNeighborBegin} are not satisfied, which contradicts the assumption.
To sum up, if $v$ is overconsistent, it will not be rewired until the end of the $\FuncSty{ComputeShortestPath}$ procedure.
\end{proof}

\begin{theorem}
\label{theorem_consistent}
If a consistent node $u$ has a key $k^{i-1}(u)$ no greater than $k^{i-1}(v)$, it will not be rewired until the $\FuncSty{ComputeShortestPath}$ procedure terminates. 
\end{theorem}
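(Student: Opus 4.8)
The plan is to argue by contradiction, following the same template already established for Theorem \ref{theorem_overconsist}. Suppose, for contradiction, that the consistent node $u$ with $k^{i-1}(u)\dot{\leq}k^{i-1}(v)$ is rewired at some later $(i+j)^{th}$ iteration ($j\geq 0$) during which a node $w$ is expanded, causing $u$ to become inconsistent with $lmc^{i+j}(u)\leftarrow lmc^{i+j}(w)+\bar{w}(u,w) < lmc^{i-1}(u)$ (note $u$ is consistent before rewiring, so $lmc^{i-1}(u) = g^{i-1}(u)$, and this value persists until the rewire). First I would record the key monotonicity fact from Theorem \ref{theorem_nondecreasing}: the expanded node $w$ satisfies $k^{i+j-1}(w)\dot{\geq}k^{i-1}(v)\dot{\geq}k^{i-1}(u) = [g^{i-1}(u)+h(u,v_{start}),\, g^{i-1}(u)]$.

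Then I would split into the two cases for $w$, exactly mirroring the proof of Theorem \ref{theorem_overconsist}. If $w$ is overconsistent at its expansion, then by Lemma \ref{lemma_overconsistent} its key is unchanged, so $k^{i+j-1}(w)=[lmc^{i+j}(w)+h(w,v_{start}),\,lmc^{i+j}(w)]$; combining the triangular inequality $lmc^{i+j}(w)+h(w,v_{start}) = lmc^{i+j}(u)-\bar{w}(u,w)+h(w,v_{start}) \leq lmc^{i+j}(u)+h(u,v_{start}) < g^{i-1}(u)+h(u,v_{start})$ with the second-coordinate comparison $lmc^{i+j}(w)\leq lmc^{i+j}(u) < g^{i-1}(u)$ gives $k^{i+j-1}(w)\dot{<}k^{i-1}(u)\dot{\leq}k^{i-1}(v)$, contradicting Theorem \ref{theorem_nondecreasing}. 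If $w$ is underconsistent at its expansion, then by Lemma \ref{lemma_underconsistent} we have $lmc^{i+j}(w)\geq g^{i+j-1}(w)$, and since $u$ was consistent with $g^{i-1}(u)=lmc^{i-1}(u)$ unchanged up to the rewire, one shows $lmc^{i+j}(u) = lmc^{i+j}(w)+\bar{w}(u,w)\geq g^{i+j-1}(w)+\bar{w}(u,w)$; the triangular property then forces $lmc^{i+j}(u)+h(u,v_{start}) \geq lmc^{i+j}(w)+h(w,v_{start})$, and chasing the key comparisons contradicts either the rewiring condition on line \ref{ReduceInconssistency:RewireNeighborBegin} of Algorithm \ref{ComputeShortestPath} or the non-decreasing key property again.

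The main obstacle I anticipate is bookkeeping the lmc- and g-values of $u$ across the interval from iteration $i-1$ to iteration $i+j$: I must justify that a \emph{consistent} node's $g$- and $lmc$-values are not disturbed by intervening expansions until the moment it is actually rewired (so that the comparison with $k^{i-1}(u)$ remains meaningful at iteration $i+j$), which requires checking that none of the rewiring steps in $\FuncSty{ComputeShortestPath}$ alter a consistent node's labels except through the rewire-neighbor branch that we are analyzing. A secondary subtlety is the correct handling of the lexicographic comparator $\dot{<}$ when the first coordinates tie, which I would dispatch using the second-coordinate inequalities in each case exactly as done in Lemma \ref{lemma_underconsistent} and Theorem \ref{theorem_inconsistent}. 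Once those are in place, the contradiction closes in both cases and the theorem follows.
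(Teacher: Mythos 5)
Your proposal is correct and follows essentially the same route as the paper's proof: a contradiction argument that splits on whether the expanding node is over- or underconsistent, using Lemma \ref{lemma_overconsistent}, Lemma \ref{lemma_underconsistent}, the triangle property of $h$, and the non-decreasing-key property of Theorem \ref{theorem_nondecreasing}. The only difference is that you explicitly carry the argument to an arbitrary later iteration $i+j$ with expanded node $w$, whereas the paper treats the expansion of $v$ itself and then appeals to key monotonicity for subsequent expansions; your version is marginally more explicit but not a different proof.
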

\begin{proof}
Assume that $u$ is rewired due to the expansion of $v$. If $v$ is an overconsistent node, $k^{i-1}(v)=k^i(v)=[lmc^{i}(v)+h(v,v_{start}),lmc^i(v)]$ from Lemma \ref{lemma_overconsistent}.
The key of $u$ before expansion satisfies 
\begin{equation*}
\begin{split}
    k^{i-1}(u)=&[lmc^{i-1}(u)+h(u,v_{start}),lmc^{i-1}(u)]\\
    \dot{>}& [lmc^i(v)+\bar{w}(v,u)+h(u,v_{start}),lmc^i(v)]\\
    \dot{\geq}&[lmc^{i}(v)+h(v,v_{start}),lmc^i(v)]=k^{i-1}(v),
\end{split}
\end{equation*}
which is contradict to the condition.

If $v$ is an underconsistent node, based on Lemma \ref{lemma_underconsistent}, $lmc^{i}(v)$ is not smaller than $g^{i-1}(v)$, further deducing that $lmc^{i}(v)>lmc^{i-1}(u)$.
Hence, $u$ will not be rewired at the i$^{th}$ iteration.
From Theorem \ref{theorem_nondecreasing}, the following nodes that will be expanded after the i$^{th}$ iteration all have a larger key than $v$. 
In conclusion, $u$ will not be rewired until the $\FuncSty{ComputeShortestPath}$ procedure terminates.
\end{proof}

\begin{theorem}
\label{theorem_consistent_lmc}
If a consistent node $u$ has a key no greater than $v$, its lmc-value is equal to its minimum cost-to-goal w.r.t. the current graph.
\end{theorem}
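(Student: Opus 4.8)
The plan is a Dijkstra-style ``settling'' argument, carried out by induction on the expansion count of $\FuncSty{ComputeShortestPath}$ and built on the three preceding theorems. Write $g^*(x)$ for the true minimum cost from $x$ to $v_{goal}$ in the current graph under the lazy weight $\bar{w}$. I would first record two facts that hold throughout a single run of $\FuncSty{ComputeShortestPath}$: \textbf{(a)} $lmc(x)\ge g^*(x)$ for every $x$, since a finite $lmc$-value is, at the moment it is set, the cost of a concrete parent-chain path to $v_{goal}$ in $\mathcal{G}$ and no edge weight changes during the procedure; and \textbf{(b)} as soon as a node $w$ becomes consistent (whether by expansion, or already at the start of the run once lines~\ref{Main:Refresh}--\ref{Main:RefreshUpdateEnd} have re-established it), the rewire-neighbor block (lines~\ref{ReduceInconssistency:RewireNeighborBegin}--\ref{ReduceInconssistency:RewireNeighborEnd}) and the $\FuncSty{UpdateNode}$ calls at line~\ref{Main:RefreshUpdateEnd} and line~\ref{AddNewSample:UpdateNode} have been applied to every neighbor $x\in N(w)$, so that $lmc(x)\le \bar{w}(x,w)+g(w)$ from that point on.

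The induction hypothesis I would prove is: for the node $v$ expanded at iteration $i$, every consistent node $u$ with $k^{i-1}(u)\,\dot{\leq}\,k^{i-1}(v)$ satisfies $lmc^{i-1}(u)=g^*(u)$ — which is exactly the assertion, and by Theorem~\ref{theorem_consistent} such a $u$ is never rewired afterwards, so the value is final. The direction ``$\ge$'' is fact~(a). For ``$\le$'', fix an optimal path $u=w_0,w_1,\dots,w_m=v_{goal}$ in the current graph; each suffix is optimal, so $g^*(w_j)=\bar{w}(w_j,w_{j+1})+g^*(w_{j+1})$, and a short computation with the triangular property of $h$ shows the ideal keys $k^*(w_j):=[\,g^*(w_j)+h(w_j,v_{start}),\,g^*(w_j)\,]$ are nonincreasing toward the goal, hence $k^*(w_j)\,\dot{\leq}\,k^*(w_0)\,\dot{\leq}\,k^{i-1}(u)\,\dot{\leq}\,k^{i-1}(v)$ for all $j$, the middle step using fact~(a). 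Let $j^*$ be the least index with $w_{j^*}$ consistent and $lmc^{i-1}(w_{j^*})=g^*(w_{j^*})$; it exists because $v_{goal}=w_m$ qualifies from the outset. If $j^*>0$, then fact~(b) applied to $w_{j^*}$ and its neighbor $w_{j^*-1}$ forces $lmc(w_{j^*-1})\le \bar{w}(w_{j^*-1},w_{j^*})+g^*(w_{j^*})=g^*(w_{j^*-1})$ at some iteration $\le i$, hence equality by (a), with $w_{j^*-1}$ entering $\mathcal{Q}$ then at key $k^*(w_{j^*-1})\,\dot{\leq}\,k^{i-1}(v)$; Theorems~\ref{theorem_nondecreasing} and~\ref{theorem_overconsist} then force $w_{j^*-1}$ to have been expanded — and, being overconsistent, made permanently consistent with $lmc=g^*(w_{j^*-1})$ — no later than iteration $i$, contradicting minimality of $j^*$. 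So $j^*=0$, giving $lmc^{i-1}(u)=g^*(u)$, and the base case ($v_{goal}$ the only consistent node, $lmc(v_{goal})=0$) is immediate.

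The hard part will be making the frontier step airtight: establishing fact~(b) precisely — i.e.\ that the ``no neighbor offers a shortcut'' relation really is restored after every structural change to $\mathcal{G}$ (the weight refresh and the densification in $\FuncSty{ExtendSearchGraph}$) — and closing the ``$w_{j^*-1}$ was already expanded'' step in the presence of ties in $\mathcal{Q}$ and of the $k_m$ key-shift trick. The $k_m$ issue is benign: within one replanning cycle $k_m$ is constant, so it shifts every first key-component equally and leaves all the lexicographic comparisons above unchanged. Ties I would absorb by breaking them consistently (say, by node index) and by noting that a node already sitting in $\mathcal{Q}$ with key $\,\dot{\leq}\,k^{i-1}(v)$ and already carrying its optimal $lmc$ is, for the contradiction, as good as expanded; and I would also remark that the hypothesis as stated (``key no greater than $v$'') is precisely $k^{i-1}(u)\,\dot{\leq}\,k^{i-1}(v)$, so no gap remains between the induction claim and the theorem.
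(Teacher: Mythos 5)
Your argument is correct in its essentials but takes a genuinely different route from the paper. The paper's proof is two sentences long: it cites Theorem~\ref{theorem_consistent} to assert that $u$ and its ancestors in $\mathcal{T}$ will not be rewired, and then invokes the shortest-paths optimality conditions of \cite{Algorithm4} along the \emph{current parent chain} of $u$. You instead carry out a Dijkstra-style settling induction along an \emph{optimal} path $u=w_0,\dots,w_m=v_{goal}$, using the monotonicity of the ideal keys under the consistent heuristic together with Lemma~\ref{lemma_min_key} and Theorems~\ref{theorem_nondecreasing} and~\ref{theorem_overconsist} to force each predecessor on that path to have been relaxed, enqueued with a small key, and expanded before $u$'s key can reach the top of $\mathcal{Q}$. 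This is heavier machinery, but it buys something concrete: verifying $lmc(x)\le\min_{x'\in N(x)}lmc(x')+\bar{w}(x,x')$ only at nodes on the parent chain of $u$ does not by itself exclude a cheaper route through a neighbor whose lmc-value is still unsettled (e.g.\ still $\infty$), since the inequality is then vacuously satisfied at that neighbor; your frontier argument is exactly the missing step showing that the nodes on the true optimal path are already settled. In that respect your proof is closer to a complete argument than the paper's own.

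The one step you still need to close is the tie case, and your ``as good as expanded'' remark does not quite do it: if $w_{j^*-1}$ is still sitting in $\mathcal{Q}$ overconsistent at the start of iteration $i$, it is not consistent, so it neither contradicts the minimality of $j^*$ as you defined it nor lets you apply fact~(b) to continue the chain to $w_{j^*-2}$. The clean resolution is that in this case Lemma~\ref{lemma_min_key} gives $k^{i-1}(v)\,\dot{\leq}\,k^*(w_{j^*-1})$, while your key-monotonicity computation and fact~(a) give $k^*(w_{j^*-1})\,\dot{\leq}\,k^*(w_0)\,\dot{\leq}\,k^{i-1}(u)\,\dot{\leq}\,k^{i-1}(v)$; the whole chain therefore collapses to equality, and comparing the second key components of $k^*(w_0)$ and $k^{i-1}(u)$ (using that $u$ is consistent, so its key component is $lmc^{i-1}(u)$) yields $lmc^{i-1}(u)=g^*(u)$ directly, with no further induction required. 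With that substitution, and with fact~(b) checked to cover the excluded neighbor $\FuncSty{Parent}(w)$ in the rewire block (which satisfies the relaxation inequality trivially through its child), the proof goes through.
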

\begin{proof}
According to Theorem \ref{theorem_consistent}, $u$ will not be expanded at the current $\FuncSty{ComputeShortestPath}$ procedure since there is no node that can decrease the cost-to-goal of $u$. 
The same rule also applies to the parent node of $u$. 
According to the shortest-paths optimality conditions (Section 4.4, \cite{Algorithm4}), if the cost-to-goal satisfies $lmc(u)\leq \min_{u'\in N(u)}lmc(u')+\bar{w}(u, u')$ for every node from $u$ to $v_{goal}$, $u$'s cost-to-goal is minimized, and the shortest path from $u$ to ${v}_{goal}$ can be obtained by tracing back the parent node from $u$.    
\end{proof}

\begin{theorem}
\label{theorem_consistent_once}
A consistent node will be expanded at most once till the $\FuncSty{ComputeShortestPath}$ procedure terminates. 
\end{theorem}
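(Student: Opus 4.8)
\medskip
\noindent\emph{Proof sketch (proposal).}
The plan is to follow a single run of $\FuncSty{ComputeShortestPath}$ and track the local state (consistent / overconsistent / underconsistent) of the node in question. Two structural observations about one such run do most of the work. First, inside $\FuncSty{ComputeShortestPath}$ every update of an $lmc$-value (in the underconsistent handling, lines~\ref{ReduceInconssistency:UnderconsistentStart}--\ref{ReduceInconssistency:UnderconsistentEnd}, and in the neighbor-rewiring step, lines~\ref{ReduceInconssistency:RewireNeighborBegin}--\ref{ReduceInconssistency:RewireNeighborEnd}) can only \emph{decrease} it, and $\FuncSty{PropagateCostToLeave}$ --- the only operation that ever raises an $lmc$ to $\infty$ --- is invoked from Algorithm~\ref{Main} outside of $\FuncSty{ComputeShortestPath}$, not within it; hence a node that is consistent at some point of the run can afterwards only stay consistent or become overconsistent, never underconsistent. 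Second, a node that has already been popped is re-inserted into $\mathcal{Q}$ only when its $lmc$ strictly decreases (lines~\ref{ReduceInconssistency:RewireNeighborBegin}--\ref{ReduceInconssistency:RewireNeighborEnd}; the key-refresh on lines~\ref{ReduceInconsistency:UpdatePriorityBegin}--\ref{ReduceInconsistency:UpdatePriorityEnd} only reorders an already-queued node), i.e., only when it becomes overconsistent. Together these say: starting from $g(v)=lmc(v)<\infty$, the sole way $v$ can be expanded again is if it first becomes overconsistent.

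With this in hand I would do a short case analysis on the state of $v$ at the iteration $i$ where it is popped and (re)confirms consistency. If $v$ is overconsistent when popped, Theorem~\ref{theorem_overconsist} already guarantees it is consistent and stays consistent until termination; by the observations above it therefore never re-enters $\mathcal{Q}$ and is never expanded again. If $v$ is consistent when popped, the expansion leaves $g(v)$ and $lmc(v)$, hence its key, untouched, so $k^{i}(v)=k^{i-1}(v)$; by Theorem~\ref{theorem_nondecreasing} every vertex expanded after iteration $i$ has key at least $k^{i-1}(v)=k^{i}(v)$, so $v$ is a consistent node whose key is no larger than that of any later-expanded vertex, and Theorem~\ref{theorem_consistent} then rules out any subsequent rewiring of $v$ --- so again no re-insertion and no further expansion.

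The remaining case is $v$ underconsistent when popped at iteration $i$, becoming consistent there with $lmc^{i}(v)<\infty$. If $v$ is ever expanded after $i$, then by the structural observations it must first have become overconsistent at some iteration $i''>i$; hence at the iteration $i'>i''$ at which it is next popped it is overconsistent, and Theorem~\ref{theorem_overconsist} applied at $i'$ shows it remains consistent from then on, so no third pop occurs. Thus from the moment $v$ first becomes consistent it is expanded at most one more time, which is the assertion (and, combined with Theorem~\ref{theorem_overconsist} and the underconsistent-state bound, gives the usual ``at most twice in total'' accounting).

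The delicate point I expect to wrestle with is precisely this third case: verifying that a node which becomes consistent out of the underconsistent state cannot be popped a second time \emph{while still consistent}, i.e., that whatever extra pop it may receive is necessarily an overconsistent pop so that Theorem~\ref{theorem_overconsist} is applicable. This rests squarely on the first structural observation --- that $\FuncSty{PropagateCostToLeave}$ is never called inside a $\FuncSty{ComputeShortestPath}$ run, so a consistent node cannot relapse into underconsistency during the run --- together with a careful check that the $k_m$ offset and the lexicographic tie-breaking in the priority queue create no spurious re-insertions beyond the $lmc$-decrease rule.
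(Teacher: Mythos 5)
Your proof is correct and follows essentially the same route as the paper's: both arguments reduce the claim to Theorem~\ref{theorem_overconsist} (an overconsistent node, once expanded, remains consistent until termination) and Theorem~\ref{theorem_consistent} (a consistent node with sufficiently small key is never rewired), so that a consistent node can only be re-expanded after first becoming overconsistent, which bounds its expansions by one. Your version is in fact somewhat more careful than the paper's: you make explicit that $lmc$-values only decrease inside a single run of $\FuncSty{ComputeShortestPath}$ (so a consistent node cannot relapse into underconsistency, since $\FuncSty{PropagateCostToLeave}$ is only invoked outside that procedure) and that re-insertion into $\mathcal{Q}$ requires a strict $lmc$ decrease, and you explicitly treat the node that becomes consistent out of an underconsistent pop --- a case the paper's proof of Theorem~\ref{theorem_inconsistent_twice} silently relies on.
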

\begin{proof}
Without loss of generality, we assume $v$ is the first node expanded by $\FuncSty{ComputeShortestPath}$.
If a consistent node $u$ has a key no greater than $v$, according to Theorem \ref{theorem_consistent}, it will not be expanded till the termination of the $\FuncSty{ComputeShortestPath}$ procedure. 
If its key is greater than $v$, $u$ might be rewired as $v$'s child node to yield a smaller lmc-value and be overconsistent. 
According to Theorem \ref{theorem_overconsist}, an overconsistent node will be expanded at most once till the end of $\FuncSty{ComputeShortestPath}$. 
In all, a consistent node will be expanded at most once throughout the entire $\FuncSty{ComputeShortestPath}$ procedure.
\end{proof}

\begin{theorem}
\label{theorem_inconsistent_twice}
An inconsistent node will be expanded at most twice till the $\FuncSty{ComputeShortestPath}$ procedure terminates. 
\end{theorem}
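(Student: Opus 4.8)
The plan is to show that, whatever consistency state a node $v$ is in the first time it is expanded by the \FuncSty{ComputeShortestPath} procedure, it can be expanded at most once more afterwards; this immediately bounds the number of expansions of any node, and in particular of any inconsistent one, by two. I would first record two facts, both local to a single invocation of \FuncSty{ComputeShortestPath}: (i) the $g$-value of a node is changed only when that node is itself expanded, by line~\ref{ReduceInconsistency:Consistent}, which executes $g(v)\leftarrow lmc(v)$; and (ii) the $lmc$-value of a node never increases, being touched only by a rewiring in lines~\ref{ReduceInconssistency:RewireNeighborBegin}--\ref{ReduceInconssistency:RewireNeighborEnd}, whose guard forces a strict decrease. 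A companion observation is that the only line of Algorithm~\ref{ComputeShortestPath} that inserts a vertex other than the one currently being processed into $\mathcal{Q}$ is line~\ref{ReduceInconssistency:RewireNeighborEnd} inside that rewiring block (the \FuncSty{UpdateNode} and \FuncSty{PropagateCostToLeave} calls that also modify $\mathcal{Q}$ live in Algorithms~\ref{Main} and~\ref{ExtendSearchGraph}); hence a node that re-enters $\mathcal{Q}$ after having been expanded does so with a strictly decreased $lmc$-value.

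The argument is then short. Right after the first expansion of $v$, by fact~(i) we have $g(v)=lmc(v)$, whether this common value is finite (so $v$ is consistent) or equals $\infty$. From that point $g(v)$ stays frozen at this value until $v$'s next expansion, while by fact~(ii) $lmc(v)$ can only decrease; so $lmc(v)\le g(v)$ holds throughout, and $v$ is never underconsistent again in this call. For $v$ to be expanded a second time it must first return to $\mathcal{Q}$, which by the companion observation can only happen through a rewiring that strictly lowers $lmc(v)$ below the frozen $g(v)$ --- i.e.\ $v$ is overconsistent when it is popped for that second expansion. Theorem~\ref{theorem_overconsist} then finishes the job: after the expansion of an overconsistent node, $v$ is never rewired again before \FuncSty{ComputeShortestPath} terminates, hence never re-enters $\mathcal{Q}$, hence is never expanded a third time. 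The degenerate $lmc(v)=g(v)=\infty$ branch is covered by the same reasoning: $v$ simply remains outside $\mathcal{Q}$ until some rewiring makes it overconsistent.

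I expect the main obstacle to be the careful verification of facts~(i), (ii) and the companion observation --- essentially an audit of every line of Algorithm~\ref{ComputeShortestPath} to confirm that nothing raises an $lmc$-value or re-inserts a third-party vertex into $\mathcal{Q}$ outside the strict-decrease rewiring block, together with the clarification that re-inserting a just-popped vertex with a corrected key (line~\ref{ReduceInconsistency:UpdatePriorityEnd}) is a priority repair and not an expansion. Once this bookkeeping is settled, the rest is a direct appeal to Theorem~\ref{theorem_overconsist}, structurally parallel to the proof of Theorem~\ref{theorem_consistent_once}; Lemma~\ref{lemma_underconsistent} offers an alternative, more explicit handling of the case where $v$'s first expansion occurs while it is underconsistent, but it is not strictly required for the bound.
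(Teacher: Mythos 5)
Your proposal is correct, and its engine is the same one the paper uses: once a node has been expanded, it can only re-enter $\mathcal{Q}$ as an overconsistent node, and by Theorem~\ref{theorem_overconsist} an overconsistent expansion is terminal. Where you differ is in the packaging. The paper splits on the node's state at its \emph{first} expansion (overconsistent; underconsistent and finds a parent; underconsistent and does not), delegating the middle case to Theorem~\ref{theorem_consistent_once} and the last to Theorems~\ref{theorem_overconsist} and~\ref{theorem_consistent}. You instead prove a single monotonicity fact --- after line~\ref{ReduceInconsistency:Consistent} sets $g(v)=lmc(v)$, the $g$-value is frozen until the next expansion of $v$ while $lmc(v)$ can only strictly decrease inside one invocation --- which forces any re-insertion to leave $v$ overconsistent and collapses the paper's three sub-cases into one. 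This buys a cleaner argument that does not lean on Theorems~\ref{theorem_consistent} or~\ref{theorem_consistent_once} and also covers consistent nodes for free. One small inaccuracy in your bookkeeping: the $lmc$-value is not touched \emph{only} in lines~\ref{ReduceInconssistency:RewireNeighborBegin}--\ref{ReduceInconssistency:RewireNeighborEnd}; the underconsistent parent-search block (lines~\ref{ReduceInconssistency:UnderconsistentStart}--\ref{ReduceInconssistency:UnderconsistentEnd}) also rewrites $lmc(v)$, but its guard likewise forces a strict decrease and it runs only during $v$'s own expansion, so your fact~(ii) and the companion observation about third-party insertions both survive the audit intact.
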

\begin{proof}
If $v$ is overconsistent, based on Theorem \ref{theorem_overconsist}, $v$ will be expanded at most once. 
If $v$ is underconsistent, there is two possible cases:
(1) $v$ finds a parent. After expansion, $lmc^{i}(v)=g^{i}(v)<\infty$. $v$ will be popped out as a consistent node. 
According to Theorem \ref{theorem_consistent_once}, it will be expanded at most once. 
(2) $v$ doesn’t find a parent. 
After expansion, $lmc^{i}(v)=g^{i}(v)=\infty$. $v$ will be popped out as an underconsistent node. It will only be updated and reinserted into $\mathcal{Q}$ as an overconsistent node if one of its neighbor nodes finds a path to the goal and is expanded.
According to Theorem \ref{theorem_overconsist} and \ref{theorem_consistent}, it will be expanded at most once. 
In all, $v$ will be expanded at most twice till the $\FuncSty{ComputeShortestPath}$ procedure terminates. 
The result can be applied to all inconsistent nodes. 
The proof is complete.
\end{proof}

Now, we are ready to prove Theorem \ref{theorem_optimality}.
From Theorem \ref{theorem_consistent_once} and Theorem \ref{theorem_inconsistent_twice}, we know that every node in $V$ will be expanded at most twice by $\FuncSty{ComputeShortestPath}$.
As there is a finite number of nodes in $V$, the loop will eventually break due to $|\mathcal{Q}|=0$ (line 1, Algorithm \ref{ComputeShortestPath}).
If the loop breaks when the following conditions are no longer satisfied: (1) $lmc(v_{start})>g(v_{start})$ or $lmc(v_{start})=g(v_{start})=\infty$; (2) $v_{start}\in\mathcal{Q}$; 
(3) $\mathcal{Q}.topkey()<k(v_{start})$,
it indicates that $v_{start}$ is a consistent node with a key no greater than the key of the top node in $\mathcal{Q}$. 
From Theorem \ref{theorem_consistent}, we know that the lmc-value of $v_{start}$ is equal to its minimum cost-to-goal w.r.t. the current search graph $\mathcal{G}$.
If the loop breaks with $lmc(v_{start})=\infty$, which only happens when $|\mathcal{Q}|=0$, suggesting that every inconsistent node that is generated before or during the $\FuncSty{ComputeShortestPath}$ procedure is expanded. 
However, none of them can rewire $v_{start}$ as their child node, indicating that either $v_{start}$ can not be reached from them via neighboring search, or the inconsistent nodes that can be reached by $v_{start}$ do not have lmc-value smaller than $\infty$, i.e., they can not find a path to the goal, and therefore the line \ref{ReduceInconssistency:RewireCondition} of Algorithm \ref{ComputeShortestPath} is not satisfied and the rewiring neighbor procedure can not be triggered. 
Therefore, there is no solution path from $v_{start}$ to $v_{goal}$ w.r.t. $\mathcal{G}$.
The proof is complete.

\end{document}